\documentclass[10pt,nonatbib, twocolumn]{interact}

\usepackage{epstopdf}
\usepackage[caption=false]{subfig}
\usepackage[numbers,sort&compress]{natbib}
\usepackage{hyperref}
\usepackage{style}
\usepackage{rotating}
\bibpunct[, ]{[}{]}{,}{n}{,}{,}


\theoremstyle{plain}
\theoremstyle{definition}
\theoremstyle{remark}

\usepackage[ruled]{algorithm}
\usepackage{algpseudocode}
\usepackage{enumitem}
\usepackage{fancyhdr}
\usepackage{url}
\usepackage{tabularx}
\usepackage{amssymb,amsfonts,amsmath}
\usepackage{pifont}
\usepackage{setspace}
\usepackage{multirow}
\usepackage{graphicx}
\usepackage{float}
\usepackage{soul}
\usepackage{epsfig}
\usepackage{textcomp}
\usepackage{longtable}
\usepackage{aligned-overset}
\usepackage{booktabs, makecell}
\usepackage{footnote}
\usepackage{amsmath}
\usepackage{subfig}

\newcommand{\cu}[1]{{\color{black}#1}}

\begin{document}

\title{\mtext{PersA-FL}: Personalized Asynchronous Federated Learning}

\author{
\name{Mohammad Taha Toghani\textsuperscript{a}\thanks{This work was partly done while MTT interning at Yahoo! Research. Part of this material is based upon work supported by the National Science Foundation under Grants \#2211815 and \#2213568. Corresponding Author's Email: mttoghani@rice.edu}, Soomin Lee\textsuperscript{b}, C{\'e}sar A. Uribe\textsuperscript{a}}
\affil{\textsuperscript{a}Department of Electrical and Computer Engineering, Rice University,
Houston, TX, USA \textsuperscript{b}Yahoo! Research, Sunnyvale, California, USA}
}

\maketitle

\begin{abstract}
We study the personalized federated learning problem under asynchronous updates. In this problem, each client seeks to obtain a personalized model that simultaneously outperforms local and global models. We consider two optimization-based frameworks for personalization: (i) Model-Agnostic Meta-Learning (\mtext{MAML}) and (ii) Moreau Envelope (\mtext{ME}). \mtext{MAML} involves learning a joint model adapted for each client through fine-tuning, whereas \mtext{ME} requires a bi-level optimization problem with implicit gradients to enforce personalization via regularized losses. We focus on improving the scalability of personalized federated learning by removing the synchronous communication assumption. Moreover, we extend the studied function class by removing boundedness assumptions on the gradient norm. Our main technical contribution is a unified proof for asynchronous federated learning with bounded staleness that we apply to \mtext{MAML} and \mtext{ME} personalization frameworks. For the smooth and non-convex functions class, we show the convergence of our method to a first-order stationary point. We illustrate the performance of our method and its tolerance to staleness through experiments for classification tasks over heterogeneous datasets.
\end{abstract}

\begin{keywords}
Federated Learning; Personalization; Asynchronous Communication; Heterogeneous Data; Distributed Optimization; Staleness.
\end{keywords}

\section{Introduction}\label{sec:introduction}

Federated Learning (\mtext{FL}) is designed to facilitate distributed training of machine learning models across devices by exploiting the data and computation power available to them \cite{konevcny2016federated}. A major benefit of \mtext{FL} is its ability to allow training models on data distributed across multiple devices without centralization. This is particularly beneficial in situations with limited sensitive data \cite{kairouz2019advances,kairouz2021distributed} where clients are reluctant to share their private data. At the same time, it is known that training over a larger set of data points improves the quality of the obtained model \cite{wang2021field}. In such scenarios, \mtext{FL} enjoys the power of collaborative learning without relocating the data from its original source \cite{kairouz2019advances}. Nevertheless, \mtext{FL} poses challenges such as data heterogeneity (statistical
diversity among clients) \cite{karimireddy2020scaffold,fallah2020personalized,dennis2020heterogeneity,li2020federated}, fairness \cite{divi2021new,li2021ditto}, privacy \cite{girgis2021shuffled,wei2020federated,huang2015differentially,prasad2022reconciling}, unreliable communication \cite{spiridonoff2020robust,toghani2022parspush}, and staleness \cite{xie2019asynchronous,assran2020advances,li2019asynchronous,niwa2021asynchronous}.

The common underlying assumption that determines the superiority of \mtext{FL} to individual local training is that the data points of all clients are coming from the same distribution, i.e., homogeneous data across clients. Consequently, \mtext{FL} can improve the quality of empirical loss minimization when data available on each device is limited; otherwise, each client may obtain a proper model without collaboration or communication with others. Therefore, \mtext{FL}\footnote{We refer to Federated Learning with no personalization as \mtext{FL}.} results in a common global model with better generalization across clients \cite{mansour2020three} compared to individual training. In heterogeneous data setups where clients hold samples from non-identical data distributions, a common (global) model may perform poorly on the local data points of each client. For instance, consider the next word prediction task on a smart keyboard \cite{hard2018federated}, where each client has a unique writing style or emphasis on the vocabulary domain. In this example, the corresponding mobile application is supposed to suggest a set of words that will likely be selected as the next word in the sentence. This scenario clearly states a case with a heterogeneous data setup with a limited sample on each client's device. Thus, if each client trains a model independently, without collaboration with the other clients, the model will likely perform poorly on the new data due to sample limitation. Hence, the question arises about what will occur if the clients hold data samples from similar (but not identical) distributions. 

In \mtext{FL} with heterogeneous data, an ideal scenario is to learn a globally common model easily adaptable to local data on each client, i.e., model fusion. This approach is known as \emph{Personalized Federated Learning} (\mtext{PFL}), which strives to exploit both the shared and unshared information from the data of all clients. A solution to the model fusion in \mtext{PFL} is to apply transfer learning \cite{zhuang2020comprehensive,dimitriadis2020federated} (e.g., fine-tuning) on a jointly trained model under \mtext{FL}. 
Interestingly, the centralized version of this problem has been extensively studied in Meta-Learning \cite{vanschoren2018meta} and Multi-Task Learning \cite{mortaheb2022fedgradnorm}, where the goal is to obtain a meta (global) model that with (potentially) minimal adaptation performs well on multiple tasks. Particularly, Model-Agnostic Meta-Learning (\mtext{MAML}) \cite{finn2017model,rajeswaran2019meta} proposes an optimization-based formulation that aims to find an initial meta-model with proper performance after applying one or a few steps of (stochastic) gradient descent. The key property of \mtext{MAML} is its ability to gauge fine-tuning during the learning process. Multiple studies have been conducted on the convergence and generalization of \mtext{MAML} \cite{fallah2020convergence,ji2020multi,finn2019online,fallah2021convergence,fallah2021generalization,charles2021convergence} for various problems and setups. \citet{fallah2020personalized} suggest the \mtext{MAML} formulation as a potential solution for \mtext{PFL}, and propose \mtext{Per-FedAvg} algorithm for collaborative learning with \mtext{MAML} personalized cost function. \citet{dinh2020personalized} present \mtext{pFedMe} algorithm for \mtext{PFL} via adopting a different formulation for personalization, namely Moreau Envelopes (\mtext{ME}). The proposed algorithm is a joint bi-level optimization problem with personalized parameters which are regularized to be close to the global model. We will elaborate on these two formulations (\mtext{MAML} \& \mtext{ME}) in Section \ref{sec:setup}. Additionally, several recent works have approached \mtext{PFL} mainly through optimization-based \cite{hanzely2020lower,lyu2022personalized,toghani2022parspush,hanzely2021personalized,huang2021personalized,mansour2020three,zhang2020personalized,farnia2022optimal,deng2020adaptive,bergou2022personalized,gasanov2021flix}, or structure-based \cite{collins2021exploiting,tziotis2022straggler,shamsian2021personalized} techniques.

Scalability to large-scale setups with potentially many clients is another major challenge for \mtext{FL}. The proposed algorithms in this scheme, mostly require synchronous communications between the server and clients \cite{mcmahan2017communication,konevcny2016federated,gasanov2021flix,deng2020adaptive,li2021ditto,fallah2020personalized,dinh2020personalized}. Such constraints impose considerable delays on the learning progress, since increasing the concurrency in synchronous updates decreases the training speed and quality. For example, limited communication bandwidth, computation power, and communication failures incur large delays in the training process. In cross-device \mtext{FL}, devices are naturally prone to update and communicate models under less restrictive rules, whereas clients may apply updates in an asynchronous fashion, i.e., staleness. Hogwild! \cite{niu2011hogwild} is one of the first efforts to model asynchrony in distributed setup with delayed updates. Multiple works have studied asynchronous training under different setups and assumptions \cite{agarwal2012distributed,mitliagkas2016asynchrony,bertsekas2021distributed,niwa2021asynchronous,li2019asynchronous,aviv2021learning,eichner2019semi}. Specifically, some recent seminal works have studied the convergence of asynchronous \mtext{SGD}-based methods, and show their convergence under certain assumptions on maximum or average delay \cite{arjevani2020tight,stich2021critical,koloskova2022sharper,mishchenko2022asynchronous}.\footnote{\citet{mishchenko2022asynchronous} studies the convergence of distributed optimization for homogeneous strongly convex and smooth functions with no assumptions on maximum delay, i.e., unbounded staleness.} In decentralized setups, \citet{hadjicostis2015robust} propose a consensus algorithm called running-sum, which is robust to message losses. Furthermore, \citet{olshevsky2018fully} present a more general framework with robustness to asynchrony, message losses, and delays for both consensus and optimization problems \cite{spiridonoff2020robust,toghani2022parspush}. More closely, \mtext{FL} under stale updates has been thoroughly studied in \cite{xie2019asynchronous,nguyen2022federated,assran2020advances,reisizadeh2022straggler,kulkarni2020survey,tziotis2022straggler}. Particularly, \citet{tziotis2022straggler} studies the existence of stragglers in \mtext{PFL} via shared representations, i.e., system and data heterogeneity in structure-based personalization.

\cu{The main contribution of paper \cite{nguyen2022federated} is on the server algorithm, where this paper proposes a more secure and robust algorithm by aggregating a buffer of asynchronous updates within a secure channel prior to sending them to the server. Whereas, our work focuses on scalability and personalization via asynchronous communication and learning personalized models.}

In this work, we study the \mtext{PFL} problem under asynchronous communications to improve training concurrency, performance, and efficiency. We propose the \mtext{PersA-FL} algorithm, a novel personalized \& asynchronous method that jointly addresses the heterogeneity and staleness in \mtext{FL}. We develop a technique based on asynchronous updates to resolve the communication bottleneck imposed by synchronized learning in \mtext{PFL}, where we improve the training scalability and performance. To the best of our knowledge, this is the first study on the intersection of staleness and personalization through the lens of optimization-based techniques. We summarize our contributions as follows:
\begin{itemize}
    \item Through the integration of two personalization formulations, \mtext{MAML} \& \mtext{ME}, we propose \mtext{PersA-FL}, an algorithm that allows personalized federated learning under asynchronous communications between the server and clients. Our proposed method consists of two algorithms from the perspectives of the server and clients. We present the client algorithm under three different options for the local updates, each addressing a separate formulation, \textcolor{royalblue}{(A) \mtext{FedAsync}}, \textcolor{brickred}{(B) \mtext{PersA-FL-MAML}}, and \textcolor{seagreen}{(C) \mtext{PersA-FL-ME}}.
    \item \cu{We present a new convergence analysis for Asynchronous Federated Learning (\mtext{FedAsync}) under smooth non-convex cost functions by removing the boundedness assumption from the gradient norm.} Our analysis assumes bounded variance of stochasticity and heterogeneity, and bounded maximum delay. Hence, we improve the existing theory by extending the result to a broader function class, i.e., unbounded gradient norm.
    \item We show the convergence rate of \textcolor{brickred}{\mtext{PersA-FL-MAML}} based on the maximum delay and personalization budget under the same assumptions as \citet{fallah2020personalized}.\footnote{Besides the assumptions for \mtext{FedAsync}, seminal works \cite{fallah2020personalized,finn2019online,rajeswaran2019meta} assume second-order Lipschitzness, bounded variance, and bounded gradient in the analysis of \mtext{MAML} cost functions.} We highlight the impact of batch size in the biased stochastic estimation of the full gradients for the \mtext{MAML} cost. \cu{We present the communication and sample complexity to find an $\varepsilon$ first-order stationary point for the proposed algorithm.}
    \item We prove the convergence of \textcolor{seagreen}{\mtext{PersA-FL-ME}} with no boundedness assumption on the gradient norm. We discuss the connection of convergence rate to the gradient estimation error and level of personalization. Compared to \cite{dinh2020personalized}, we show an explicit dependence of convergence rate to the estimation error. Moreover, we relax the heterogeneity assumption in \cite{dinh2020personalized} allowing bounded population diversity instead of uniformly bounded heterogeneity. \cu{We determine the communication and local inexact solver complexity to find an $\varepsilon$ first-order stationary point for this method.}
    \item We present numerical experiments evaluating our proposed algorithm on heterogeneous MNIST and CIFAR10 with unbalanced distributions across the clients. We illustrate the advantages of our method in terms of performance and scalability to varying delays in setups with heterogeneity.
\end{itemize}

Table \ref{tab:comparison} illustrates the properties of our proposed method and provides a comparison between our algorithm and underlying analysis with related seminal works. \cu{As shown in this table, building upon the results in \cite{fallah2020personalized,dinh2020personalized}, we extend the capability of \mtext{FL} to staleness.} Table \ref{tab:comparison} also contains the convergence results for our proposed algorithm, which we will discuss in more details in Section \ref{sec:convergence}. 

\cu{The main difference between our method and the works in \cite{xie2019asynchronous,nguyen2022federated} mainly lies in the client algorithm, where we consider three options (\textcolor{royalblue}{A}, \textcolor{brickred}{B}, and \textcolor{seagreen}{C}) for updating the parameters locally. \textcolor{royalblue}{Option A} is similar to the client algorithm in \cite{xie2019asynchronous,nguyen2022federated}, but we improve the theoretical convergence results by removing the assumption on bounded gradients for this setup. \textcolor{brickred}{Option B} and \textcolor{seagreen}{Option C}, along with the server algorithm are novel methods for personalized asynchronous federated learning.  \citet{nguyen2022federated} characterize the server algorithm with a secure and robust update aggregation and \cite{toghani2022unbounded} enhances its theoretical properties. Study of secure aggregation on the server side remains as a future direction for this work.}

\begin{table}[t]
\caption{A comparison of related federated learning methods with convergence guarantees for smooth non-convex functions. Parameters $\tau$, $\alpha$, $\nu$, and $b$ respectively denote the maximum delay, \mtext{MAML} personalization stepsize, \mtext{ME} inexact gradient estimation error, batch size.}\label{tab:comparison}
\resizebox{\textwidth}{!}{
\centering
\begin{tabular}{llcccl}
\toprule
\bf Algorithm & \bf \&\,\,\, Reference&
\bf \rotatebox[origin=c]{90}{Personalized} \rotatebox[origin=c]{90}{Cost} &
\bf \rotatebox[origin=c]{90}{Asynchronous} \rotatebox[origin=c]{90}{Updates} &
\bf \rotatebox[origin=c]{90}{Unbounded} \rotatebox[origin=c]{90}{Gradient} & 
\bf \hspace{2.3em} Convergence Rate
\\
\midrule
\hspace{-0.7em}
& \citet{mcmahan2017communication} & \xmark & \xmark & - & No Analysis\\
\cmidrule(r){2-6}
\mtext{FedAvg} & \citet{yu2019parallel} & \xmark & \xmark & \xmark & $\mcO\left(\frac{1}{\sqrt{T}}\right)$\\
\cmidrule(r){2-6}
& \citet{wang2020tackling} & \xmark & \xmark & \cmark & $\mcO\left(\frac{1}{\sqrt{T}}\right)$\\
\midrule
\multirow{3}{*}{\colorbox{royalblue!20}{\mtext{\cu{FedAsync}}}}
& \citet{xie2019asynchronous} & \xmark & \cmark & \xmark & $\mcO\left(\frac{1}{\sqrt{T}}\right) + \mcO\left(\frac{\tau^2}{T}\right)$\\
\cmidrule(r){2-6}
\hspace{-1.3em}
& \textcolor{magenta}{This Work} & \xmark & \cmark & \cmark & \cu{$\mcO\left(\frac{1}{\sqrt{T}}\right) + \mcO\left(\frac{\tau^2}{T}\right)$}\\
\midrule
\multirow{3}{*}{\mtext{FedBuff}} & \citet{nguyen2022federated} & \xmark & \cmark & \xmark & $\mcO\left(\frac{1}{\sqrt{T}}\right) + \mcO\left(\frac{\tau^2}{T}\right)$\\
\cmidrule(r){2-6}
\hspace{-1.3em}
& \cu{\citet{toghani2022unbounded}} & \xmark & \cmark & \cmark & \cu{$\mcO\left(\frac{1}{\sqrt{T}}\right) + \mcO\left(\frac{\tau^2}{T}\right)$}\\
\midrule
\mtext{Per-FedAvg} & \citet{fallah2020personalized} &
\begin{tabular}{c}
\cmark
\end{tabular}
& \xmark & \xmark & $\mcO\left(\frac{1}{\sqrt{T}}\right) + \mcO\left(\frac{\alpha^2}{b}\right)$\\
\midrule
\mtext{pFedMe} & \citet{dinh2020personalized} &
\begin{tabular}{c}
\cmark
\end{tabular}& \xmark & \cmark & $\mcO\left(\frac{1}{\sqrt{T}}\right)
+\mcO\left(\frac{\lambda^2\left(\frac{1}{b}+\nu^2\right)}{(\lambda{-}L)^2}\right)$\\
\midrule
\colorbox{brickred!20}{\mtext{PersA-FL-MAML}} & \textcolor{magenta}{This Work} &
\begin{tabular}{c}
\cmark
\end{tabular}
& \cmark & \xmark & $\mcO\left(\frac{1}{\sqrt{T}}\right) + \mcO\left(\frac{\tau^2}{T}\right) + \mcO\left(\frac{\alpha^2}{b}\right)$\\
\midrule
\colorbox{seagreen!20}{\mtext{PersA-FL-ME}}& \textcolor{magenta}{This Work} &
\begin{tabular}{c}
\cmark
\end{tabular}
& \cmark & \cmark & $\mcO\left(\frac{1}{\sqrt{T}}\right) + \mcO\left(\frac{\tau^2}{T}\right) + \mcO\left(\frac{\lambda^2}{(\lambda{-}L)^2}\nu^2\right)$\\
	\bottomrule      
	\end{tabular}     
	}
\end{table}

The remainder of this paper is organized as follows. In Section \ref{sec:setup}, we introduce the \mtext{PFL} setup and discuss the asynchronous communication framework between the server and clients. In Section \ref{sec:algorithm}, we describe our algorithm, PersA-FL, for \mtext{PFL} under staleness. In Section \ref{sec:convergence}, we state the convergence result for our proposed algorithm along with the underlying assumptions and technical lemmas. We present the numerical experiments in Section \ref{sec:experiments}. We finally end by concluding remarks in Section \ref{sec:conclusion}.

\section{Problem Setup \& Background}\label{sec:setup}

In this section, we first present the formal problem setup for \mtext{FL} \cite{mcmahan2017communication}, as well as the personalization formulations in \mtext{MAML} \cite{fallah2020personalized} and \mtext{ME} \cite{dinh2020personalized}. Then, we discuss the underlying communication setting under asynchronous updates.

\subsection{Federated Learning Problem Setup}\label{subsec:fl}

We consider a set of $n$ clients and one server, where each client $i\in[n]$ holds a private function $f_i:\bbR^d \to \bbR$, and the goal is to collaboratively obtain a model $w\in\bbR^d$ that minimizes the local cost functions on average, as follows:
\begin{align}\label{eq:fl}
\begin{split}
    \min_{w\in\bbR^d} f(w)&\coloneqq\frac{1}{n}\sum\limits_{i=1}^{n}f_i(w),\\
    \text{with}\quad f_i(w) &\coloneqq \bbE_{\Xi_i\sim p_i} [\ell_i(w,\Xi_i)],
\end{split}
\end{align}
where $\ell_i:\bbR^d\times \mcS_i \to \bbR$ is a cost function that determines the prediction error of some model $w\in\bbR^d$ over a single data point $\xi_i\in\mcS_i$ on client $i$, where $\xi_i$ is a realization of $\Xi_i \sim p_i$, i.e., $p_i$ is the client $i$'s data distribution over $\mcS_i$, for $i\in[n]$. In the above definition, $f_i(\cdot)$ is the local cost function of client $i$, and $f(\cdot)$ denotes the global cost function, i.e., average loss. For instance, in a supervised learning setup with $\mcZ_i\coloneqq \mcX_i\times\mcY_i$, we have $\ell_i(w,\xi_i)$ as the prediction cost of some learning model parameterized by $w$ for sample $\xi_i = (x,y)$, where $x\in\mcX_i$ and $y\in\mcY_i$. Let $\mcD_i$ be a data batch with samples independently drawn from the distribution $p_i$. Then, the unbiased stochastic cost associated with data batch $\mcD_i$ can be denoted as follows:
\begin{align}\label{eq:stoch-loss}
\tilde{f}_i(w,\mcD_i) &\coloneqq \frac{1}{|\mcD_i|} \sum\limits_{\xi_i\in\mcD_i}\ell_i(w,\xi_i),
\end{align}
where for simplicity, we assume that the size of all batches is larger than $b$. Then, according to the above definition, we can immediately infer that
\begin{align}\label{eq:unbiased-stoch}
\begin{split}
\bbE_{p_i}\left[\tilde{f}_i(w,\mcD_i)\right] &= f_i(w),\\ \bbE_{p_i}\left[\nabla\tilde{f}_i(w,\mcD_i)\right] &= \nabla f_i(w),\\ \bbE_{p_i}\left[\nabla^2\tilde{f}_i(w,\mcD_i)\right] &= \nabla^2 f_i(w).
\end{split}
\end{align}

Several works have been proposed to solve \eqref{eq:fl} as a union of local and global optimization steps. For instance, \mtext{FedAvg} \cite{mcmahan2017communication} suggests an iterative algorithm wherein at each round $t\geq 0$, (i) server transmits its current parameter $w^{t}$ to a subset of the clients, (ii) each selected client updates the parameter locally, by applying $Q$ sequential rounds of stochastic gradient descent (\mtext{SGD}) with respect to its local cost function, then (iii) the selected clients send back their local parameter to the server, and finally, (iv) the server aggregates the so-called local parameters to obtain a new global parameter $w^{t+1}$. As a result, clients minimize the average loss in \eqref{eq:fl} with less communication cost, i.e., fewer global rounds. Note that the underlying assumption for methods such as \mtext{FedAvg} is the possibility of synchronized communications between the selected clients and the server. The left chart in Figure \ref{fig:schedule} represents the communication and update schedule for \mtext{FedAvg}. The performance of \mtext{FL}-based methods depends on the similarity of distributions $\mcD_i$, thus, cases with heterogeneous datasets slow down the convergence. \citet{karimireddy2020scaffold} and \cite{dennis2020heterogeneity} the effect of heterogeneity in the convergence speed. A solution of \eqref{eq:fl} is a common model for all the clients; hence no adaptation or fusion to each client's data. Next, we elaborate on the personalization concept and discuss two alternative problem formulations for \eqref{eq:fl}.

\subsection{Personalized Federated Learning}

In the previous section, we explained how a solution to \eqref{eq:fl} performs well when the data is homogeneous, and the goal is to obtain a shared model. On the one hand, using a single common model, with no adaptation to each client, does not necessarily lead to a proper performance when dealing with heterogeneous datasets. On the other hand, when the data distributions of different clients share some similarities, e.g., bounded variance in their heterogeneity, and the number of data points on each client is limited, joint training with fusion improves the performance compared to individual locally trained models or \mtext{FL}. Therefore, learning a shared model with little fine-tuning, e.g., a few steps of \mtext{SGD} with respect to the local cost, may result in a proper personalized model.

\citet{fallah2020personalized} proposed \mtext{Per-FedAvg} algorithm, which modifies the training loss function by taking advantage of the fact that fine-tuning will occur after training. The \mtext{MAML} formulation assumes a limited computational budget for personalization (fine-tuning) at each client. It then offers to look for an initial (global) parameter that performs well after it is updated with one or a few steps of \mtext{SGD}. In other words, \cite{fallah2020personalized} define the \mtext{MAML} loss function for \mtext{PFL} as follows:
\begin{align}\label{eq:persafl-maml}
\begin{split}
\min_{w \in \bbR^d} F^{(b)}(w) &\coloneqq \frac{1}{n} \sum\limits_{i=1}^{n} F^{(b)}_i(w),\\
\text{with}\quad F^{(b)}_i(w) &\coloneqq f_i(w - \alpha \nabla f_i(w)),
\end{split}
\end{align}
where $\alpha\geq 0$ is the \mtext{MAML} personalization stepsize. Solving \eqref{eq:persafl-maml} yields a global (meta) model that can be used to create a personalized model by applying one step of gradient descent with respect to individual loss functions. The degree of fine-tuning determines the personalization budget, which often controls the trade-off between having a local (personalized) or generic model, i.e., exploiting the shared and local knowledge simultaneously. In Problem \eqref{eq:persafl-maml}, stepsize $\alpha$ determines the personalization budget, where $\alpha=0$ implies \mtext{FL} in Problem \eqref{eq:fl}. See \cite{ji2020multi,toghani2022parspush,fallah2021convergence} for the study of multi-step \mtext{MAML}. In a nutshell, \mtext{Per-FedAvg} proposes to minimize $F^{(b)}(w)$ via a similar paradigm as \mtext{FedAvg}. Hence, each client $i$ computes the personalized gradient of its \mtext{MAML} cost in \eqref{eq:persafl-maml}, which can be written as follows:
\begin{align}\label{eq:persafl-maml-full-grad}
    \nabla F^{(b)}_i(w) = \left[I {-} \alpha \nabla^2 f_i(w)\right]\nabla f_i\left(w{-}\alpha \nabla f_i(w)\right),
\end{align}
where in \mtext{Per-FedAvg}, the authors propose to compute a biased estimation of \eqref{eq:persafl-maml-full-grad} using stochastic gradients/Hessian. We will elaborate on the stochastic approximation in Section \ref{sec:algorithm}.

On a separate note, one of the major challenges in \mtext{Per-FedAvg} is the computation of second-order information such as Hessian for large-scale models (large $d$). However, as proposed by \cite{fallah2020personalized}, one can skip the Hessian in the gradient formulation (\mtext{FO-MAML}) or approximate it with first-order information (\mtext{HF-MAML}) \cite{fallah2020convergence}.

As an alternative option to \mtext{MAML} formulation in \eqref{eq:persafl-maml}, \citet{dinh2020personalized} suggest solving the following optimization problem:
\begin{align}\label{eq:persafl-me}
\begin{split}
\min_{w \in \bbR^d} F^{(c)}(w) &\coloneqq \frac{1}{n} \sum\limits_{i=1}^{n} F^{(c)}_i(w),\\
\text{with}\quad F^{(c)}_i(w) &\coloneqq \min_{\theta_i \in \bbR^d} \left[f_i(\theta_i) + \frac{\lambda}{2} \norm{\theta_i - w}^2 \right],
\end{split}
\end{align}
where each function $F^{(c)}_i(w)$ is a local cost of personalized parameter $\theta_i\in\bbR^d$ by using the Moreau Envelope as a regularized loss function, and parameter $\lambda\geq 0$ determines the degree of personalization. In this setup, $\lambda=0$ is equivalent to local training with no collaboration and as $\lambda\to\infty$, the formulation in \eqref{eq:persafl-me} converges to \mtext{FL} in \eqref{eq:fl} with no personalization which is similar to the case in \eqref{eq:persafl-maml} with $\alpha=0$. For non-extreme values of $\lambda$, the clients jointly learn a global model $w$ and personalized parameters $\theta_i$, which are regularized to remain close to $w$. Note that the gradient of $F^{(c)}_i(w)$ can be written as follows (please check out Appendix \ref{app:persafl-me} to see the proof):
\begin{align}
    \nabla F^{(c)}_i(w) &= \lambda\left(w-\hat{\theta}_i(w)\right),\label{eq:persafl-me-full-grad}\\
    \text{with}\quad \hat{\theta}_i(w)\coloneqq &\argmin_{\theta_i \in \bbR^d} \left[f_i(\theta_i) + \frac{\lambda}{2} \norm{\theta_i - w}^2 \right],\label{eq:persafl-me-argmin}
\end{align}
where for large $\lambda$, $\hat\theta_i(w)$ is the exact solution to an optimization problem. Therefore, solving \eqref{eq:persafl-me} through a similar approach to \mtext{FedAvg} or \mtext{Per-FedAvg}, itself requires minimizing Problem \eqref{eq:persafl-me-argmin} which is potentially intractable. \citet{dinh2020personalized} propose a bi-level optimization algorithm called \mtext{pFedMe}, to minimize the optimization problem in \eqref{eq:persafl-me} by alternating minimization over $\theta_i$ and $w$. The main idea behind \mtext{pFedMe} is to integrating the computation of an inexact solution to \eqref{eq:persafl-me-argmin} inside an \mtext{FL}-type method. We will explain and use this inexact approximation in the presentation of our method (\textcolor{seagreen}{Option C}) in Section \ref{sec:algorithm}.

\subsection{Asynchronous vs Synchronous Schedule}\label{subsec:asynch}

So far, we have discussed the three different formulations for collaborative learning that we will consider in our method. As we described the \mtext{FedAvg} algorithm in Subsection \ref{subsec:fl}, at each round $t$, the parameter $w^t$, which is the most recent version of the global parameter in the server, will be sent to a subset of the clients. Then, the server halts the training process until all selected clients receive this parameter, perform local updates, and transmit their updates back to the server. This synchronization procedure restricts the algorithm flow to the slowest client at each round. Nevertheless, asynchronous updates and communications can be described in this described framework.

Let us provide a comparison using the example in Figure \ref{fig:schedule} which illustrates the communication and update schedule for synchronous (left) \& asynchronous (right) aggregations for $n=5$ clients in \mtext{FL} with $Q=3$ local updates. As shown in this Figure, for every update at the server-lever under synchronized updates (left figure), the server has to wait for all the selected clients. Nevertheless, these clients build their local updates based on the recent version of the server's parameter. On the contrary, in the asynchronous scenario (right figure), the server updates the global parameter once it receives a new update from some client. The main challenge for the asynchronous setup is the staleness between download and upload time from/to the server. We design \mtext{PersA-FL} based on the second communication scenario.

\begin{figure}[t]
    \centering
    \includegraphics[width=0.49\linewidth]{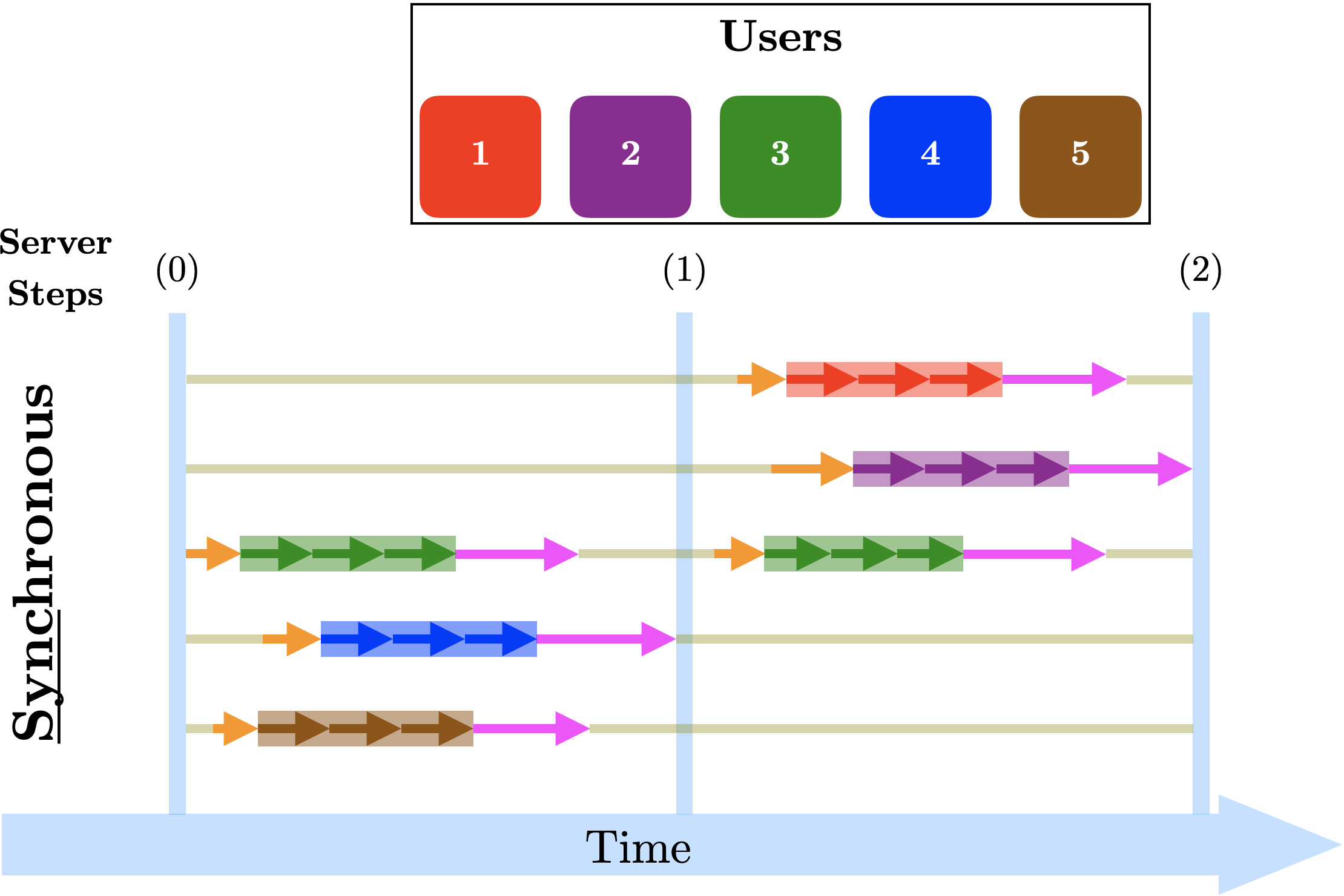}
    \includegraphics[width=0.49\linewidth]{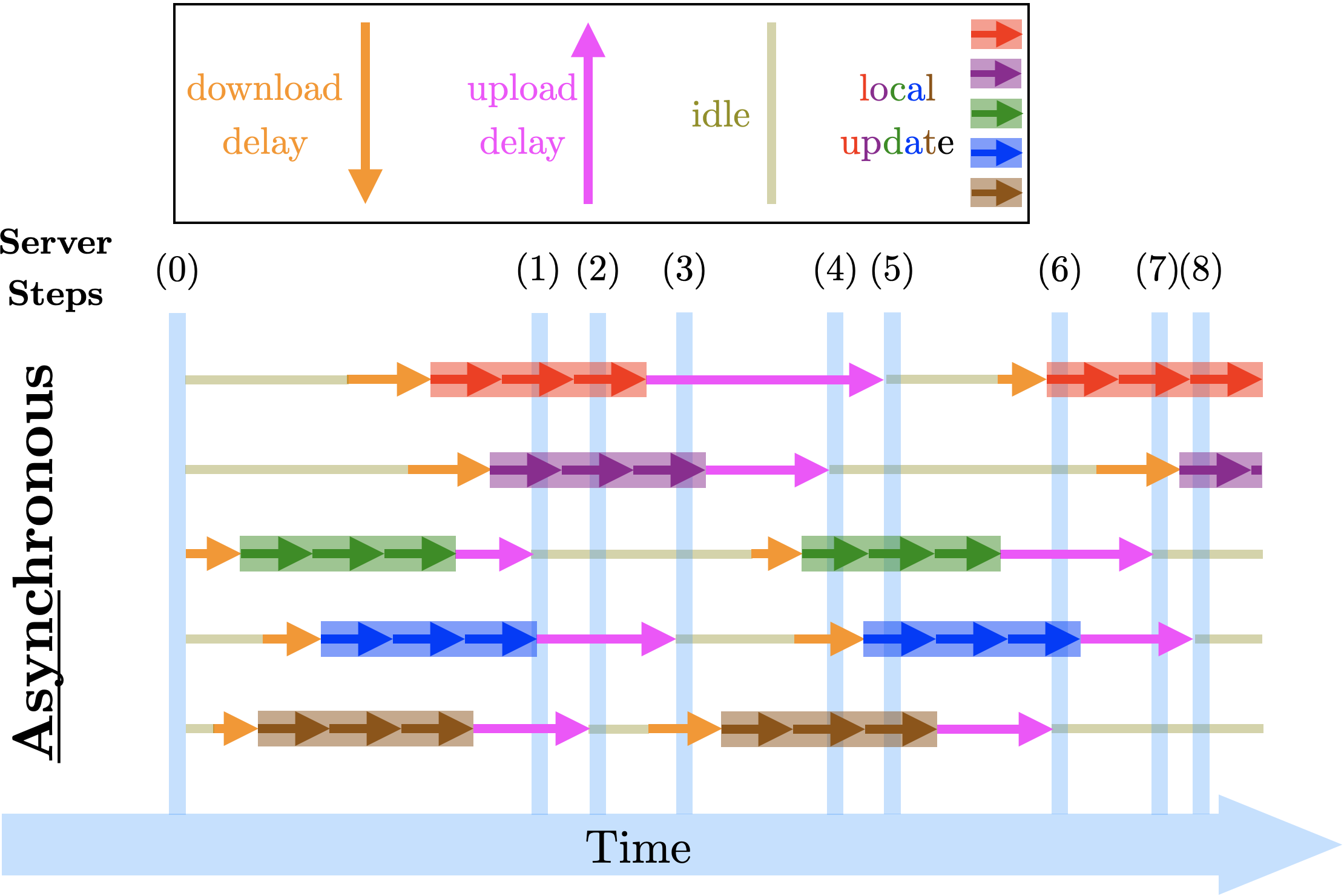}
    \caption{Communication and update schedule for synchronous and asynchronous aggregation: The demonstrated setup in this example contains $n=5$ clients with $Q=3$ local updates.}
    \label{fig:schedule}
\end{figure}

\section{Algorithm: \mtext{FedAsync} \& \mtext{PersA-FL} }\label{sec:algorithm}

In this section, by integrating the problem formulations in \eqref{eq:fl}, \eqref{eq:persafl-maml}, and \eqref{eq:persafl-me} into a united format, we propose Algorithms \ref{alg:server} \& \ref{alg:client} to solve these problems under three different update choices at the client-level. We present our method through two different perspectives, (i) server and (ii) client.

\noindent\textbf{$\diamond$ Server Algorithm:} Let us denote $w^0\in\bbR^d$ as the initial parameter at the server, where the objective is to minimize the cost function in either \eqref{eq:fl}, \eqref{eq:persafl-maml}, or \eqref{eq:persafl-me}. Each client $i\in[n]$ may communicate with the server when the underlying connection is stable. Clients may request to download the server's parameters at any time, and the server will send the most recent model after receiving the request. All underlying delays for the communications between the server and clients are modeled as download and upload delays. We consider variable $t$ as a counter for the updates at the server level. Algorithm \ref{alg:server} represents the server updates in \mtext{PersA-FL}. The server performs an iterative algorithm where at each round $t\geq 0$, remains on hold until receives an update $\Delta_{i_t}\in\bbR^d$ from some client $i_t\in[n]$. After receiving the update from client $i_t$, the server updates its parameter according to Step \ref{ln:server-update} of Algorithm \ref{alg:server}, where $\beta \geq 0$ is the server stepsize.

\begin{algorithm}[!ht]
    \caption{[Personalized] Asynchronous Federated Learning (\textbf{Server})}
    \begin{algorithmic}[1]
    \State{\textbf{input:} model $w^{0}$, $t=0$, server stepsize $\beta$.}
    \Repeat
        \If{the server receives an update $\Delta_{i_t}$ from some client $i_t{\in}[n]$}
            \State{$w^{t+1} \gets w^{t} - \beta \Delta_{i_t}$}\label{ln:server-update}
            \State{$t \gets t+1$}
        \EndIf 
    \Until{not converge}
    \end{algorithmic}
    \label{alg:server}
\end{algorithm}

\begin{algorithm}[!ht]
    \caption{[Personalized] Asynchronous Federated Learning (\textbf{Client} $i$)}
    \begin{algorithmic}[1]
    \State{\textbf{input:} number of local steps $Q$, local stepsize $\eta$, \mtext{MAML} stepsize $\alpha$, \mtext{ME} regularization parameter $\lambda$, minimum batch size $b$, estimation error $\nu$.}
    \Repeat
    \State{read $w$ from the server} \Comment{download phase}\label{ln:client-download}
    \State{$w_{i,0} \gets w$}
    \For{$q=0$ to $Q{-}1$}\label{ln:client-for} \Comment{local updates}
    \State{sample a data batch $\mcD_{i,q}$ from distribution $p_i$ \hfill $\triangledown$ 3 options:}
    
    \vspace{0.05em}

    \hspace{-2.28em}\colorbox{royalblue!20}{\parbox{\linewidth}{
    
    \Statex{\textcolor{royalblue}{\quad\qquad $\triangleright$ \textbf{Option A} (FedAsync)}}\label{opt:A}
    \vspace{0.2em}
    \State{
    $w_{i,q+1} \gets w_{i,q} - \eta \nabla \tilde{f}_i(w_{i,q},\mcD_{i,q})$
    }
    }}
    
    \vspace{0.05em}

    \hspace{-2.28em}\colorbox{brickred!20}{\parbox{\linewidth}{
    
    \Statex{\quad\qquad $\triangleright$ \textcolor{brickred}{\textbf{Option B} (\mtext{PersA-FL-MAML})}}\label{opt:B}
    \vspace{0.2em}
    \State{ sample two data batches $\mcD_{i,q}',\mcD_{i,q}''$ from distribution $p_i$}
    
    \State{ $w_{i,q+1} \gets w_{i,q} - \eta \left[I {-} \alpha \nabla^2 \tilde{f}_i(w_{i,q},\mcD_{i,q}'')\right]\nabla\tilde{f}_i\left(w_{i,q}{-}\alpha \nabla \tilde{f}_i(w_{i,q},\mcD_{i,q}'),\mcD_{i,q}\right)$}
    
    }}

    \vspace{0.05em}
    
    \hspace{-2.28em}\colorbox{seagreen!20}{\parbox{\linewidth}{
    
    \Statex{\quad\qquad $\triangleright$ \textcolor{seagreen}{\textbf{Option C} (\mtext{PersA-FL-ME})}}\label{opt:C}
    \vspace{0.2em}
    \State{ $\tilde{h}_{i}(\theta_i,w_{i,q},\mcD_{i,q})\coloneqq \tilde{f}_i(\theta_i,\mcD_{i,q}) + \frac{\lambda}{2} \left\lVert \theta_i - w_{i,q} \right\rVert^2$}
    \State{ minimize $\tilde{h}_{i}(\theta_i, w_{i,q},\mcD_{i,q})$ w.r.t. $\theta_i$ up to accuracy level $\nu$ to find $\tilde{\theta}_{i}(w_{i,q})$:
    
    \begin{center}
    $\left\lVert\nabla\tilde{h}_i\left(\tilde{\theta}_{i}(w_{i,q}),w_{i,q},\mcD_{i,q}\right)\right\rVert\leq \nu$
    \end{center}
    }\label{ln:client-me-nu}
    \State{ $w_{i,q+1} \gets w_{i,q} - \eta\lambda (w_{i,q} - \tilde{\theta}_{i}(w_{i,q}))$}
    
    }}

    \vspace{0.3em}
    \EndFor\label{ln:client-endfor}
    \State{$\Delta_i \gets w_{i,0} - w_{i,Q}$}
    \State{client $i$ broadcasts $\Delta_{i}$ to the server}\Comment{upload phase}\label{ln:client-upload}
    \Until{not interrupted by the server}
    \end{algorithmic}
    \label{alg:client}
\end{algorithm}

Now, we are ready to present the client algorithm. Before starting, note that we drop the time index from the iterates of the client algorithm for clarity of exposition.

\noindent \textbf{$\diamond$ Client Algorithm}: Let us explain the operations of $i$-th client using the pseudo code in Algorithm \ref{alg:client}. Client $i$ repeats an iterative procedure which is composed of three phases, (i) downloading the most up-to-date model from the server as in Step \ref{ln:client-download}, (ii) performing $Q$ local updates starting from the parameters of the downloaded model with respect to the cost function of the underlying problem, \eqref{eq:fl}, \eqref{eq:persafl-maml}, or \eqref{eq:persafl-me}, as in Steps \ref{ln:client-for}-\ref{ln:client-endfor}, and (iii) uploading the sum of updates on the server as in Step \ref{ln:client-upload}. Note that $\eta \geq 0$ is the local stepsize, a hyperparameter. The main idea for the local updates is to perform $Q$ sequential \mtext{SGD} steps on the local cost. Below, we list our stochastic estimation for the full gradients of each loss function introduced in Section \ref{sec:setup}:
\begin{itemize}[leftmargin=7mm]
\item \textbf{\textcolor{royalblue}{Option A:}} This option intends to minimize \eqref{eq:fl}. Therefore, for each client $i$ at each local round $q$, we sample an independent data batch from $p_i$ and compute an unbiased estimation of the loss as in \eqref{eq:stoch-loss}.
\item \textbf{\textcolor{brickred}{Option B:}} By performing this option, we aim to minimize the \mtext{MAML} cost function in \eqref{eq:persafl-maml}. As we saw in Section \ref{sec:setup}, the full gradient can be computed according to \eqref{eq:persafl-maml-full-grad}. Following \cite{fallah2020personalized}, we sample three data batches to compute a biased estimation of \eqref{eq:persafl-maml-full-grad} as follows:
\begin{align}\label{eq:persafl-maml-stoch-grad}
    \nabla \tilde{F}^{(b)}_i(w,\mcD_i'',\mcD_i',\mcD_i) = \left[I {-} \alpha \nabla^2 \tilde{f}_i(w,\mcD_{i}'')\right]\nabla\tilde{f}_i\left(w{-}\alpha \nabla \tilde{f}_i(w,\mcD_{i}'),\mcD_{i}\right).
\end{align}

We will discuss the variance and bias of this estimator in Subsection \ref{subsec:persafl-maml}
\item \textbf{\textcolor{seagreen}{Option C:}} Finally, we invoke this option to minimize the \mtext{ME} personalized loss in \eqref{eq:persafl-me}. As we mentioned earlier, the full gradient of this cost is \eqref{eq:persafl-me-full-grad}, where for a fixed $w$, we may obtain $\hat{\theta}_i(w)$ by minimizing \eqref{eq:persafl-me-argmin}. Instead, following \cite{dinh2020personalized}, we define the stochastic approximation $\tilde{h}_i(\theta_i,w,\mcD_i)$ as in Step \ref{ln:client-me-nu}, and minimize this function with respect to $\theta_i$ to obtain an approximate solution $\tilde{\theta}_i(w)$ where the gradient's norm is less than some threshold $\nu\geq 0$. Therefore, we approximate \eqref{eq:persafl-me-full-grad} with the following estimator:
\begin{align}\label{eq:persafl-me-stoch-grad}
    \nabla \tilde{F}^{(c)}_i(w, \mcD_i) = \lambda\left(w-\tilde{\theta}_i(w)\right).
\end{align}
Let us denote the expectation of $\tilde{h}_i(.)$ as $h_i(.)$. Then, for $\lambda > L$, the expected function is $(\lambda{+}L)$-smooth and $(\lambda{-}L)$-strongly convex due to the properties of Moreau Envelopes \cite{dinh2020personalized}. Then according to the property of  \cite{bubeck2015convex,dinh2020personalized}, for some $\nu \leq 1$ (e.g., $10^{{-}5}$), we can find $\tilde{\theta}_i(w)$ in $\mcO(\frac{\lambda{+}L}{\lambda{-}L}\log (\frac{1}{\nu}))$ iterations.

We will also discuss the properties of \eqref{eq:persafl-me-stoch-grad} in Subsection \ref{subsec:persafl-me}.
\end{itemize}

Next, we present the convergence result of our method for the three formulations.

\section{Convergence Results}\label{sec:convergence}

In this section, we introduce the technical theorems and lemmas to show the convergence of our method for the three described scenarios. First, we introduce the common assumptions we will use in our analysis for all the three choices of Algorithm \ref{alg:client}. As mentioned earlier, we require some additional assumptions to show the convergence of \mtext{MAML}, which we will introduce in Subsection \ref{subsec:persafl-maml}. After stating the assumptions, we will present the convergence results.

Recall that the server updates its model at round $t$ using the updates sent by client $i_t\in[n]$. We denote $\Omega(t)$ as the timestep of the round at which client $i_t$ has received the server's parameters before applying its $Q$ local updates. In other words, $(\Omega(t), t)$ denote the download and upload rounds for client $i_t$. Now, we introduce the assumption of maximum delay.

\begin{assumption}[Bounded Staleness]\label{assump:staleness}
For all server steps $t\geq 0$, the staleness or effective delay between the model version at the download step $\Omega(t)$ and upload step $t$ is bounded by some constant $\tau$, i.e.,
\begin{align}\label{eq:staleness}
    \sup_{t\geq 0} \left|t-\Omega(t)\right|\leq \tau,
\end{align}
and the server receives updates uniformly, i.e., $i_{t} \sim \mathrm{Uniform}([n])$.
\end{assumption}
The above assumption is standard in the analysis of asynchronous methods, specifically in heterogeneous settings \cite{nguyen2022federated,xie2019asynchronous,assran2020advances,koloskova2022sharper,stich2021critical,arjevani2020tight}. Assumption \ref{assump:staleness} guarantees that all clients remain active over the course of training. However, they have transient delays and perform updates with staleness.

Next, we present our only assumption on the function class, i.e., smooth non-convex.


\begin{assumption}[Smoothness]\label{assump:smoothness}
For all clients $i\in[n]$, function $f_i:\bbR^d\to\bbR$ is bounded below, differentiable, and $L$-smooth, i.e., for all $w,u\in \bbR^d$,
\begin{align}
        \left\lVert\nabla f_i(w)-\nabla f_i(u)\right\rVert\leq L\lVert w-u\rVert\label{eq:smoothness}\\
        f_i^\star\coloneqq\min_{w\in\bbR^d} f_i(w) > -\infty.\label{eq:lower-bound}
\end{align}
\end{assumption}
The smoothness assumption is conventional in the analysis of non-convex functions. We also assume boundedness from below, which is reasonable since the ultimate goal is to minimize the functions.
We also denote $f^\star = \min_{i\in[n]}f_i^\star$, where according to this definition, we can immediately see that $f^\star \leq \min_{w\in\bbR^d}F^{(b)}(w)$ and $f^\star \leq \min_{w\in\bbR^d}F^{(c)}(w)$.

Now, we present our assumptions on bounded stochasticity and heterogeneity. 

\begin{assumption}[Bounded Variance]\label{assump:bounded-variance}
For all clients $i\in[n]$, the variance of a stochastic gradient $\nabla \ell_i(w,\xi_i)$ on a single data point $\xi_i\in\mcS_i$ is bounded, i.e., for all $w\in\bbR^d$
\begin{align}\label{eq:bounded-variance}
        \bbE_{\xi_i\sim p_i}\left\lVert\nabla \ell_i(w,\xi_i) - \nabla f_i(w)\right\rVert^2\leq \sigma_g^2.
\end{align}
\end{assumption}
Assumption \ref{assump:bounded-variance} is standard in the analysis of \mtext{SGD}-based methods and has been used in many relevant works \cite{stich2019local,nguyen2022federated,khaled2020tighter,wang2020tackling,koloskova2019decentralized2,koloskova2022sharper,toghani2022parspush}.
Since we perform updates using data batches, we also need to show the stochastic variance for the sampled batches. Recall that for simplicity; we assumed that all batch sizes are larger than $b\geq 1$, thus, we have:
\begin{align}\label{eq:bounded-variance-batch}
\bbE_{p_i}\left\lVert\nabla\tilde{f}_i(w,\mcD_i)-\nabla f_i(w)\right\rVert^2 \leq \frac{\sigma_g^2}{|\mcD_i|}\leq \sigma_a^2 \coloneqq \frac{\sigma_g^2}{b}
\end{align}

Next, we present the bounded heterogeneity assumption.

\begin{assumption}[Bounded Population Diversity]\label{assump:bounded-heterogeneity} For all $w\in\bbR^d$, the gradients of local functions $f_i(w)$ and the global function $f(w)$ satisfy the following property:
\begin{align}\label{eq:bounded-heterogeneity}
        \frac{1}{n}\sum\limits_{i=1}^{n}\lVert\nabla f_i(w) - \nabla f(w)\rVert^2\leq \gamma_g^2.
\end{align}
\end{assumption}
The above assumption measures the population diversity (heterogeneity) between the gradients. In heterogeneous settings, this bound indicates the similarity between different distributions. \citet{fallah2020personalized} show connections between heterogeneity and the Wasserstein distance between the distributions under certain assumptions.

The above assumptions are sufficient to prove the convergence of our method (Algorithms \ref{alg:server} \& \ref{alg:client}) under \textcolor{royalblue}{Option A} and \textcolor{seagreen}{Option C}. Therefore, we present the convergence analyses starting from our results on \mtext{FedAsync}.

\subsection{Asynchronous Federated Learning (\textcolor{royalblue}{Option A})}\label{subsec:afl}
We now demonstrate the convergence rate of our method for the cost function in \eqref{eq:fl}.

\begin{theorem}[\mtext{FedAsync}]\label{thm:afl}
Let Assumptions \ref{assump:staleness}-\ref{assump:bounded-heterogeneity} hold, $\beta=1$, and $\eta=\frac{1}{Q\sqrt{LT}}$. Then, the following property holds for the joint iterates of Algorithms \ref{alg:server} {\normalfont\&} \ref{alg:client} under \textcolor{royalblue}{Option A} on Problem \eqref{eq:fl}: for any timestep \mbox{$T\geq 160L(Q{+}7)(\tau{+}1)^3$} at the server
\begin{align*}
\frac{1}{T}\sum_{t=0}^{T-1} \,\bbE\left\lVert\nabla f\left(w^t\right)\right\rVert^2 &\leq \frac{4\sqrt{L}\left(f(w^0)-f^\star\right)}{\sqrt{T}} + \frac{8\sqrt{L}\left(\frac{\sigma_g^2}{b} +\gamma_g^2\right)}{\sqrt{T}}\\
&+\frac{80 L(1{+}Q)(\tau^2{+}1)\left(\frac{\sigma_g^2}{b} +\gamma_g^2\right)}{T}.
\end{align*}
\end{theorem}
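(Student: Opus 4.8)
Write $g_t \coloneqq \sum_{q=0}^{Q-1}\nabla\tilde{f}_{i_t}(w_{i_t,q},\mcD_{i_t,q})$ so that, with $\beta=1$, the server recursion of Algorithm~\ref{alg:server} reads $w^{t+1}=w^t-\eta g_t$, and recall that the downloaded iterate satisfies $w_{i_t,0}=w^{\Omega(t)}$. Let $\mathcal{F}_t$ be the $\sigma$-algebra generated by $w^0,\dots,w^t$; since $\Omega(t)\le t$ the iterate $w^{\Omega(t)}$ is $\mathcal{F}_t$-measurable, and by the uniform-sampling part of Assumption~\ref{assump:staleness} together with freshness of the local batches one has $\bbE[\nabla\tilde{f}_{i_t}(w_{i_t,q},\mcD_{i_t,q})\mid \mathcal{F}_t,i_t,\mcD_{i_t,0},\dots,\mcD_{i_t,q-1}]=\nabla f_{i_t}(w_{i_t,q})$ and $\bbE[\nabla f_{i_t}(w^{\Omega(t)})\mid \mathcal{F}_t]=\nabla f(w^{\Omega(t)})$. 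I would start from the descent lemma for the $L$-smooth $f$ (Assumption~\ref{assump:smoothness}),
\begin{align*}
\bbE f(w^{t+1})\;\le\;\bbE f(w^t)-\eta\,\bbE\langle\nabla f(w^t),g_t\rangle+\frac{L\eta^2}{2}\,\bbE\norm{g_t}^2 .
\end{align*}

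For the cross term I would peel the local gradient in stages, writing
\begin{align*}
\nabla f_{i_t}(w_{i_t,q})=\nabla f(w^t)+\big(\nabla f_{i_t}(w^{\Omega(t)})-\nabla f(w^{\Omega(t)})\big)+\big(\nabla f(w^{\Omega(t)})-\nabla f(w^t)\big)+\big(\nabla f_{i_t}(w_{i_t,q})-\nabla f_{i_t}(w^{\Omega(t)})\big);
\end{align*}
after taking expectations the heterogeneity term drops out, the staleness term is bounded via $\norm{\nabla f(w^{\Omega(t)})-\nabla f(w^t)}\le L\norm{w^{\Omega(t)}-w^t}$ and the client-drift term via $\norm{\nabla f_{i_t}(w_{i_t,q})-\nabla f_{i_t}(w^{\Omega(t)})}\le L\norm{w_{i_t,q}-w^{\Omega(t)}}$, and Young's inequality leaves (once the analogous pieces from the second-order term are also absorbed) a net $-\tfrac{\eta Q}{4}\bbE\norm{\nabla f(w^t)}^2$ together with $\mcO(\eta L^2)$ multiples of the two displacement quantities $\bbE\norm{w^{\Omega(t)}-w^t}^2$ and $\sum_q\bbE\norm{w_{i_t,q}-w^{\Omega(t)}}^2$. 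Expanding $\bbE\norm{g_t}^2$ by the same decomposition and invoking Assumption~\ref{assump:bounded-variance} in the batched form~\eqref{eq:bounded-variance-batch} and Assumption~\ref{assump:bounded-heterogeneity} gives $\bbE\norm{g_t}^2 \le c\,Q^2(\sigma_g^2/b+\gamma_g^2)+c\,L^2\sum_q\bbE\norm{w_{i_t,q}-w^{\Omega(t)}}^2+c\,Q^2L^2\bbE\norm{w^{\Omega(t)}-w^t}^2+c\,Q^2\bbE\norm{\nabla f(w^t)}^2$, whose last piece is harmless since $\tfrac{L\eta^2}{2}Q^2=\tfrac{1}{2T}\ll\tfrac{\eta Q}{4}$ for $T$ large.

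Two auxiliary estimates close the argument. First, a standard local-SGD drift bound: from $w_{i_t,q}-w^{\Omega(t)}=-\eta\sum_{r<q}\nabla\tilde{f}_{i_t}(w_{i_t,r},\mcD_{i_t,r})$ and an unrolling/Cauchy--Schwarz argument one obtains $\sum_{q=0}^{Q-1}\bbE\norm{w_{i_t,q}-w^{\Omega(t)}}^2\le c\,\eta^2Q^3\big(\sigma_g^2/b+\gamma_g^2+\bbE\norm{\nabla f(w^{\Omega(t)})}^2\big)$ provided $\eta^2L^2Q^2$ is a small constant. Second, telescoping the server recursion over the stale window and Cauchy--Schwarz give $\bbE\norm{w^{\Omega(t)}-w^t}^2\le\tau\sum_{s=\Omega(t)}^{t-1}\bbE\norm{w^{s+1}-w^s}^2=\tau\eta^2\sum_{s=t-\tau}^{t-1}\bbE\norm{g_s}^2$. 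Substituting both into the per-step bound and setting $V_t\coloneqq\eta^2\bbE\norm{g_t}^2$ produces a self-referential inequality of the form $V_t\le c_1\eta^2Q^2L\big(\tfrac{\sigma_g^2}{b}+\gamma_g^2\big)+c_2\eta^2Q^2L\,\bbE\norm{\nabla f(w^t)}^2+c_3\eta^2L^2Q^2\tau\sum_{s=t-\tau}^{t-1}V_s$; summing over $t=0,\dots,T-1$ the double sum collapses to $\tau\sum_sV_s$, so as soon as $c_3\eta^2L^2Q^2\tau^2\le\tfrac12$ --- which is precisely what the hypothesis $T\ge 160L(Q{+}7)(\tau{+}1)^3$ buys, since it forces $\eta=\tfrac{1}{Q\sqrt{LT}}$ small enough --- one can solve for $\sum_tV_t$, and likewise for $\sum_t\bbE\norm{w^{\Omega(t)}-w^t}^2$ and $\sum_t\sum_q\bbE\norm{w_{i_t,q}-w^{\Omega(t)}}^2$, with absolute constants and in terms of $T$, $\tfrac{\sigma_g^2}{b}+\gamma_g^2$, and $\sum_t\bbE\norm{\nabla f(w^t)}^2$.

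To finish I would sum the per-step descent inequality over $t$, telescope the left side to $\bbE f(w^T)-f(w^0)\ge f^\star-f(w^0)$, move $-\tfrac{\eta Q}{4}\sum_t\bbE\norm{\nabla f(w^t)}^2$ to the other side --- absorbing the $\mcO(\eta^3L^2Q^3)$ and $\mcO(\eta^3L^4Q^2\tau^2)$ multiples of $\bbE\norm{\nabla f(w^t)}^2$ fed back by the drift bounds, again by smallness of $\eta$ --- and divide by $\tfrac{\eta QT}{4}$. Using $\eta=\tfrac{1}{Q\sqrt{LT}}$, so that $\tfrac{1}{\eta QT}=\tfrac{\sqrt{L}}{\sqrt{T}}$, $\eta QL=\tfrac{\sqrt{L}}{\sqrt{T}}$, and $\eta^2LQ^2\tau^2=\tfrac{\tau^2}{T}$, the first two groups of terms give $\tfrac{4\sqrt{L}(f(w^0)-f^\star)}{\sqrt{T}}$ and $\tfrac{8\sqrt{L}(\sigma_g^2/b+\gamma_g^2)}{\sqrt{T}}$, while the staleness displacement (the $\tau^2$) and the $Q$-step local-drift bookkeeping (the $1{+}Q$) collect into the $\tfrac{80L(1+Q)(\tau^2+1)(\sigma_g^2/b+\gamma_g^2)}{T}$ term, the numerical constants $4,8,80$ (and the threshold $160$) falling out of the bookkeeping. \emph{The main obstacle} is disentangling the three-way coupled recursion between the iterate displacement $V_t$, the $Q$-step local drift $\sum_q\norm{w_{i_t,q}-w^{\Omega(t)}}^2$, and the staleness displacement $\norm{w^{\Omega(t)}-w^t}^2$ --- each bounds, and is bounded by, the others --- while arranging that every ``$\eta$ small enough'' requirement is subsumed by the single threshold $T\ge 160L(Q{+}7)(\tau{+}1)^3$; a secondary point is the measurability bookkeeping for the asynchronous model, namely that $w^{\Omega(t)}\in\mathcal{F}_t$ and that the uniform index $i_t$ and the fresh local batches are independent of $\mathcal{F}_t$.
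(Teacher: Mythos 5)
Your proposal follows essentially the same route as the paper's proof: descent lemma for the $L$-smooth average loss, the four-way decomposition of the stochastic local gradient into noise, client drift, heterogeneity, and the true gradient, the local-SGD drift recursion unrolled over $Q$ steps, the staleness displacement telescoped over the window of length $\tau$ and bounded by $\tau\sum_{s=t-\tau}^{t-1}\bbE\lVert w^{s+1}-w^s\rVert^2$, and finally summing over $t$ so the coupled recursion closes once the stepsize condition implied by $T\geq 160L(Q{+}7)(\tau{+}1)^3$ makes the self-referential coefficient at most $\tfrac12$. The only differences are cosmetic routing choices (e.g., you bound $\bbE\lVert w^{s+1}-w^s\rVert^2$ through $\bbE\lVert g_s\rVert^2$ while the paper goes through the $q=Q$ drift bound at round $\Omega(s)$), so the plan is correct and matches the paper's argument.
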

The proof of Theorem \ref{thm:afl} is provided in Appendix \ref{app:afl-proof}. This theorem suggests a convergence rate of \mbox{$\mcO\left(\frac{1}{\sqrt{T}}\right) + \mcO\left(\frac{Q\tau^2}{T}\right)$} for asynchronous federated learning \mtext{FedAsync}. Our analysis removes the unnecessary boundedness assumption on the gradient norm.

\begin{remark}
Selecting $\beta=1$ in Theorem \ref{thm:afl}, results in a sub-optimal first-order stationary rate for smooth non-convex cost functions. However, this is an arbitrary choice for the value of $\beta$ and can be relaxed to any $\beta = \mcO(1)$ similar to \cite{nguyen2022federated}.
\end{remark}

Next, we present the convergence of \mtext{PersA-FL-MAML} along with some technical lemmas borrowed from \cite{fallah2020personalized}.

\subsection{Personalized Asynchronous Federated Learning: Model-Agnostic Meta-Learning Setup (\textcolor{brickred}{Option B})}\label{subsec:persafl-maml}

As we discussed in Section \ref{sec:algorithm}, we require the second-order derivatives of the local functions to compute the gradients of the personalized costs in \eqref{eq:persafl-maml}. Accordingly, we consider similar assumptions for the second-order derivatives as Assumptions \ref{assump:smoothness}-\ref{assump:bounded-heterogeneity}.

\begin{assumption}[Second-Order Properties]\label{assump:second-order}
For all clients $i\in[n]$, the following properties hold for the Hessian of each $f_i:\bbR^d\to\bbR$, the variance of a stochastic Hessian $\nabla^2 \ell_i(w,\xi_i)$ on a single data point $\xi_i\in\mcS_i$, and the global Hessian $\nabla^2 f(w)$: for all $w,u\in\bbR^d$,
\begin{align}
    &\left\lVert\nabla^2 f_i(w)-\nabla^2 f_i(u)\right\rVert\leq \rho\lVert w-u\rVert,\label{eq:hessian-lipschitz}\\
    \bbE_{\xi_i\sim p_i}&\left\lVert\nabla^2 \ell_i(w,\xi_i) - \nabla^2 f_i(w)\right\rVert^2\leq \sigma_h^2,\label{eq:hessian-bounded-variance}\\
    \frac{1}{n}\sum\limits_{i=1}^{n}&\left\lVert\nabla^2 f_i(w) - \nabla^2 f(w)\right\rVert^2\leq \gamma_h^2.\label{eq:hessian-bounded-heterogeneity}
\end{align}
\end{assumption}

Assumption \ref{assump:second-order} is conventional in the analysis of methods with access to second-order information \cite{fallah2020convergence,fallah2020personalized,safaryan2021fednl,toghani2022parspush}. Finally, we adopt another assumption from \cite{finn2019online,fallah2020personalized,fallah2021generalization} on the gradient norm to simplify the analysis for the \mtext{MAML} cost.

\begin{assumption}[Bounded-Gradient]\label{assump:bounded-gradient}
There exists a constant $G$ such that for all clients $i\in[n]$, and any parameter $w\in\bbR^d$, 
\begin{align}
   \left\lVert\nabla f_i(w)\right\rVert\leq G.
\end{align}
\end{assumption}
To the best of our knowledge, seminal works on \mtext{MAML} loss mainly consider this assumption to simplify the properties of the personalized function. Note that we consider Assumptions \ref{assump:second-order}-\ref{assump:bounded-gradient} \underline{only} in the analysis of \mtext{PersA-FL} (Algorithms \ref{alg:server} \& \ref{alg:client}) under \underline{\textcolor{brickred}{Option B}}. Under Assumptions \ref{assump:smoothness} and \ref{assump:bounded-gradient}, the properties in \eqref{eq:hessian-bounded-heterogeneity} and \eqref{eq:bounded-heterogeneity} can be simply derived with $\gamma_h = 2L $ and $\gamma_g = 2G$ \cite{fallah2020personalized}.

Before stating the convergence of \mtext{PersA-FL-MAML}, let us state some technical lemmas on the personalized \mtext{MAML} cost function.
\begin{lemma}[\cite{fallah2020personalized}, Lemma 4.2 - Smoothness: \mtext{MAML}]\label{lem:smoothness-maml}
Let Assumptions \ref{assump:smoothness} and \ref{assump:bounded-gradient} hold. Then, $F^{(b)}_i$ in \eqref{eq:persafl-maml} is $L_b$-smooth, i.e., for all clients $i\in[n]$, and any parameters $w,u\in\bbR^d$,
\begin{align}\label{eq:smoothness-maml}
        \left\lVert\nabla F^{(b)}_i(w)-\nabla F^{(b)}_i(u)\right\rVert\leq L_b\lVert w-u\rVert,
\end{align}
where $L_b\coloneqq L(1{+}\alpha L)^2 + \alpha \rho G$.
\end{lemma}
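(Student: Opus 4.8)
The plan is to prove the $L_b$-smoothness of $F_i^{(b)}(w) = f_i(w - \alpha \nabla f_i(w))$ by differentiating explicitly and bounding the resulting expression. Writing $u(w) \coloneqq w - \alpha \nabla f_i(w)$, we have from \eqref{eq:persafl-maml-full-grad} that $\nabla F_i^{(b)}(w) = \left[I - \alpha \nabla^2 f_i(w)\right] \nabla f_i(u(w))$. So for any $w, v \in \bbR^d$,
\begin{align*}
	\nabla F_i^{(b)}(w) - \nabla F_i^{(b)}(v) &= \left[I - \alpha \nabla^2 f_i(w)\right]\nabla f_i(u(w)) - \left[I - \alpha \nabla^2 f_i(v)\right]\nabla f_i(u(v)).
\end{align*}
First I would add and subtract the cross term $\left[I - \alpha \nabla^2 f_i(w)\right]\nabla f_i(u(v))$ to split this into $\left[I - \alpha \nabla^2 f_i(w)\right]\bigl(\nabla f_i(u(w)) - \nabla f_i(u(v))\bigr) + \alpha\bigl(\nabla^2 f_i(v) - \nabla^2 f_i(w)\bigr)\nabla f_i(u(v))$, then apply the triangle inequality and bound the two pieces separately.

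For the first piece I would use submultiplicativity of the operator norm: $\|I - \alpha \nabla^2 f_i(w)\| \le 1 + \alpha L$ by $L$-smoothness (which bounds $\|\nabla^2 f_i(w)\| \le L$), and $\|\nabla f_i(u(w)) - \nabla f_i(u(v))\| \le L \|u(w) - u(v)\|$ again by $L$-smoothness of $f_i$. Then I need to control $\|u(w) - u(v)\| = \|(w-v) - \alpha(\nabla f_i(w) - \nabla f_i(v))\| \le (1 + \alpha L)\|w - v\|$, once more via \eqref{eq:smoothness}. Multiplying these gives a bound of $L(1+\alpha L)^2 \|w - v\|$ for the first piece. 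For the second piece I would use Assumption~\ref{assump:bounded-gradient} to get $\|\nabla f_i(u(v))\| \le G$ and the Hessian-Lipschitz property \eqref{eq:hessian-lipschitz} to get $\|\nabla^2 f_i(v) - \nabla^2 f_i(w)\| \le \rho \|w - v\|$, yielding $\alpha \rho G \|w - v\|$. Summing the two contributions gives exactly $L_b = L(1+\alpha L)^2 + \alpha \rho G$.

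Since this lemma is quoted verbatim from \cite[Lemma 4.2]{fallah2020personalized}, I would expect the paper either to cite it directly or reproduce this short computation in the appendix; there is no genuine obstacle here. The only point requiring a little care is the add-and-subtract step and making sure the operator-norm bound $\|I - \alpha \nabla^2 f_i(w)\| \le 1 + \alpha L$ is justified (it follows since $\nabla^2 f_i$ is symmetric with eigenvalues in $[-L, L]$, hence $I - \alpha \nabla^2 f_i$ has eigenvalues in $[1-\alpha L, 1+\alpha L]$, all bounded in absolute value by $1 + \alpha L$ when $\alpha L \ge 0$). Everything else is routine application of Assumptions~\ref{assump:smoothness}, \ref{assump:second-order}, and \ref{assump:bounded-gradient}.
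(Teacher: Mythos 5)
Your proof is correct and follows exactly the standard argument from the cited source \cite[Lemma 4.2]{fallah2020personalized}; the paper itself does not reproduce a proof and simply quotes the result, so there is nothing to diverge from. The only small observation worth recording is that the constant $\rho$ forces you to invoke the Hessian-Lipschitz property \eqref{eq:hessian-lipschitz} from Assumption~\ref{assump:second-order}, which the lemma's stated hypotheses omit but which you correctly identify as needed.
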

Lemma \ref{lem:smoothness-maml} indicates that the personalized cost in \eqref{eq:persafl-maml} is also smooth. The smoothness parameter $L_b$ depends on the personalization hyperparameter $\alpha$. Increasing the value of $\alpha$ results in higher smoothness constant $L_b$. The smoothness property of \mtext{MAML} cost under multi-step personalization (instead of one) is shown in \cite{toghani2022parspush}[Lemma 3].

\begin{lemma}[\cite{fallah2020personalized}, Lemma 4.3 - Bounded Variance: \mtext{MAML}]\label{lem:bounded-variance-maml}
Let Assumptions \ref{assump:smoothness}, \ref{assump:bounded-variance}, \ref{assump:second-order}, and \ref{assump:bounded-gradient} hold, and data batches $\mcD, \mcD', \mcD''$ be randomly sampled according to data distribution $p_i$. Then, the following properties hold for the stochastic personalized gradient $\nabla\tilde{F}^{(b)}_i(w,\mcD'',\mcD',\mcD)$:
\begin{align}
\left\lVert\bbE_{p_i}\left[\nabla\tilde{F}^{(b)}_i(w,\mcD'',\mcD',\mcD) - \nabla F^{(b)}_i(w)\right]\right\rVert&\leq\mu_b\coloneqq\frac{\alpha L(1{+}\alpha L)\sigma_g}{\sqrt{b}},\label{eq:unbiased-mean-maml}\\
\bbE_{p_i}\left\lVert\nabla\tilde{F}^{(b)}_i(w,\mcD'',\mcD',\mcD) - \nabla F^{(b)}_i(w)\right\rVert^2&\leq \sigma_b^2,\label{eq:bounded-variance-maml}
\end{align}
for all $w\in\bbR^d$, where \mbox{$\sigma_b^2\coloneqq 3(1{+}\alpha L)^2 \sigma_g^2 \left[\frac{1}{b} {+} \frac{\alpha^2 L^2}{b}\right] + 3\alpha^2 G^2 \frac{\sigma_h^2}{b} + \frac{3\alpha^2\sigma_g^2\sigma_h^2}{b}\left[\frac{1}{b} {+} \frac{\alpha^2 L^2}{b}\right]$}.
\end{lemma}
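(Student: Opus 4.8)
The goal is to control the bias and second moment of the stochastic MAML gradient estimator \eqref{eq:persafl-maml-stoch-grad} relative to the true MAML gradient \eqref{eq:persafl-maml-full-grad}. The natural decomposition is to compare the estimator to the true gradient term by term: write $\nabla\tilde{F}^{(b)}_i = [I - \alpha\nabla^2\tilde f_i(w,\mcD'')]\nabla\tilde f_i(w - \alpha\nabla\tilde f_i(w,\mcD'),\mcD)$ and $\nabla F^{(b)}_i = [I - \alpha\nabla^2 f_i(w)]\nabla f_i(w - \alpha\nabla f_i(w))$, and introduce the intermediate quantities obtained by replacing one stochastic object at a time (first $\mcD''$ in the Hessian factor, then $\mcD'$ in the inner gradient argument, then $\mcD$ in the outer gradient). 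Each swap is controlled using a combination of $L$-smoothness (Assumption \ref{assump:smoothness}), $\rho$-Hessian-Lipschitzness \eqref{eq:hessian-lipschitz}, the bounded-gradient bound $G$ (Assumption \ref{assump:bounded-gradient}), and the batch variance bounds \eqref{eq:bounded-variance-batch} and \eqref{eq:hessian-bounded-variance} applied with $|\mcD|,|\mcD'|,|\mcD''| \ge b$.

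For the bias bound \eqref{eq:unbiased-mean-maml}, the key observation is that $\mcD''$ is sampled independently of $\mcD'$ and $\mcD$, so $\bbE_{\mcD''}[I - \alpha\nabla^2\tilde f_i(w,\mcD'')] = I - \alpha\nabla^2 f_i(w)$ exactly, and the Hessian factor contributes no bias. The only source of bias is the nonlinearity of $\nabla f_i$ in its argument: $\bbE_{\mcD'}\nabla\tilde f_i(w - \alpha\nabla\tilde f_i(w,\mcD'),\mcD) \ne \nabla f_i(w - \alpha\nabla f_i(w))$ because the inner perturbation $\alpha\nabla\tilde f_i(w,\mcD')$ is random. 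I would Taylor-expand (or use $L$-smoothness directly) to bound this discrepancy by $\alpha L$ times $\bbE\|\nabla\tilde f_i(w,\mcD') - \nabla f_i(w)\|$, then apply Jensen to pass from the second-moment bound $\sigma_g^2/b$ in \eqref{eq:bounded-variance-batch} to a first-moment bound $\sigma_g/\sqrt b$. Combined with the operator-norm bound $\|I - \alpha\nabla^2 f_i(w)\| \le 1 + \alpha L$ on the deterministic Hessian factor, this yields $\mu_b = \alpha L(1+\alpha L)\sigma_g/\sqrt b$ as claimed. Some care is needed because $\mcD$ is the same batch appearing in the outer gradient of both the estimator and (conceptually) nowhere in the true gradient — but since $\bbE_{\mcD}\nabla\tilde f_i(\cdot,\mcD) = \nabla f_i(\cdot)$ pointwise and $\mcD$ is independent of $\mcD'$, conditioning on $\mcD'$ first handles this cleanly.

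For the variance bound \eqref{eq:bounded-variance-maml}, I would split $\nabla\tilde F^{(b)}_i - \nabla F^{(b)}_i$ into three groups of terms via the telescoping swaps described above, apply $\|a+b+c\|^2 \le 3\|a\|^2 + 3\|b\|^2 + 3\|c\|^2$, and bound each piece: (i) the Hessian-perturbation term $\alpha(\nabla^2\tilde f_i(w,\mcD'') - \nabla^2 f_i(w))\nabla\tilde f_i(\cdots)$, bounded using $\sigma_h^2/b$ for the Hessian variance times a bound on the gradient factor — here both $G^2$ (true gradient part) and $\sigma_g^2/b$ (fluctuation part) enter, explaining the two summands $3\alpha^2 G^2\sigma_h^2/b$ and $3\alpha^2\sigma_g^2\sigma_h^2[1/b + \alpha^2L^2/b]/b$; (ii) the inner-argument perturbation, bounded via $L$-smoothness by $\alpha^2 L^2 \sigma_g^2/b$ scaled by $(1+\alpha L)^2$; (iii) the outer-batch fluctuation $\|\nabla\tilde f_i(u,\mcD) - \nabla f_i(u)\|^2 \le \sigma_g^2/b$, also scaled by $(1+\alpha L)^2$. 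Collecting these reproduces the stated $\sigma_b^2$. The main obstacle is bookkeeping: one must be careful to bound the random gradient factor $\|\nabla\tilde f_i(w - \alpha\nabla\tilde f_i(w,\mcD'),\mcD)\|$ — it is not deterministically bounded by $G$ alone, so I split it as $\nabla f_i + (\nabla\tilde f_i - \nabla f_i)$ and carry the $\sigma_g^2/b$ fluctuation through, which is precisely where the "$[1/b + \alpha^2 L^2/b]$" cross-terms originate. Since this lemma is quoted verbatim from \cite{fallah2020personalized}, Lemma 4.3, I would simply cite that source for the detailed computation rather than reproduce it.
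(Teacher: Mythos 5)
Your proposal is correct and matches the intended argument: the paper itself does not reprove this lemma but imports it verbatim from \cite{fallah2020personalized} (Lemma 4.3), exactly as you note at the end, so there is no in-paper proof to diverge from. Your term-by-term swap decomposition, the independence-plus-Jensen argument yielding the bias bound $\mu_b=\alpha L(1{+}\alpha L)\sigma_g/\sqrt{b}$, and the three-way split via $\lVert a+b+c\rVert^2\le 3(\lVert a\rVert^2+\lVert b\rVert^2+\lVert c\rVert^2)$ that accounts for each summand of $\sigma_b^2$ (including the cross term where the Hessian fluctuation multiplies the gradient fluctuation) are all consistent with the cited source's proof.
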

Lemma \ref{lem:bounded-variance-maml} highlights two important results. First, the stochastic gradient in \eqref{eq:persafl-maml-stoch-grad} is a biased estimation of the full gradient \ref{eq:persafl-maml-full-grad}. The biasness is controlled by two factors, personalization stepsize $\alpha$, and batch size $b$.\footnote{It should be noted that the batch size in the upper bound of \eqref{eq:unbiased-mean-maml} refers to the size of $|\mcD'|$. Recall that we use this batch to approximate the inner gradient in \ref{eq:persafl-me-stoch-grad}.} Therefore, we obtain an unbiased estimation under no personalization, i.e., $\alpha=0$. However, as we select a larger $\alpha$, we require more samples to reduce the error imposed by biased gradient estimations. Second, similar to Assumption \ref{assump:bounded-variance} on the cost; we have a tight variance based on $\alpha$ and $b$.


\begin{lemma}[\cite{fallah2020personalized}, Lemma 4.4 - Bounded Population Diversity: \mtext{MAML}]\label{lem:bounded-heterogeneity-maml}
For all $w\in\bbR^d$, the gradients of local personalized functions $F^{(b)}_i(w)$ and the global function $F^{(b)}(w)$ satisfy the following property:
\begin{align}\label{eq:bounded-heterogeneity-maml}
        \frac{1}{n}\sum\limits_{i=1}^{n}\left\lVert\nabla F^{(b)}_i(w) - \nabla F^{(b)}(w)\right\rVert^2\leq \gamma_b^2\coloneqq 12(1+\alpha L)^2 \left[1+\alpha^2L^2\right]\gamma_g^2 + 12 \alpha^2G^2\gamma_h^2.
\end{align}
\end{lemma}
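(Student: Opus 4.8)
\emph{Proof proposal.} The plan is to exploit that the empirical mean minimizes mean squared deviation: for any reference point $c(w)\in\bbR^d$ one has $\frac{1}{n}\sum_{i=1}^{n}\lVert\nabla F^{(b)}_i(w)-\nabla F^{(b)}(w)\rVert^2\le\frac{1}{n}\sum_{i=1}^{n}\lVert\nabla F^{(b)}_i(w)-c(w)\rVert^2$, so it suffices to control the right-hand side for a convenient $c(w)$. I would take the ``global \mtext{MAML} gradient'' $c(w)=g(w)\coloneqq[I-\alpha\nabla^2 f(w)]\nabla f(w-\alpha\nabla f(w))$. The point worth stressing is that $g(w)$ is \emph{not} equal to $\nabla F^{(b)}(w)=\frac1n\sum_{i}\nabla F^{(b)}_i(w)$ (the average of the per-client \mtext{MAML} gradients is not the \mtext{MAML} gradient of the average function), which is exactly why the detour through $g(w)$ is needed; if one prefers a more pedestrian route, $\lVert a_i-\bar a\rVert^2\le 2\lVert a_i-g(w)\rVert^2+2\lVert\bar a-g(w)\rVert^2$ combined with Jensen's inequality on the second term reaches the same conclusion at the cost of a factor $4$, which is what produces the constant $12$ in $\gamma_b^2$.

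Next I would write $\tilde w_i\coloneqq w-\alpha\nabla f_i(w)$ and $\tilde w\coloneqq w-\alpha\nabla f(w)$, plug in the closed form \eqref{eq:persafl-maml-full-grad}, and split $\nabla F^{(b)}_i(w)-g(w)$ (by adding and subtracting $[I-\alpha\nabla^2 f_i(w)]\nabla f(\tilde w)$ and $[I-\alpha\nabla^2 f_i(w)]\nabla f_i(\tilde w)$) into three pieces: $[I-\alpha\nabla^2 f_i(w)](\nabla f_i(\tilde w_i)-\nabla f_i(\tilde w))$, $[I-\alpha\nabla^2 f_i(w)](\nabla f_i(\tilde w)-\nabla f(\tilde w))$, and $-\alpha(\nabla^2 f_i(w)-\nabla^2 f(w))\nabla f(\tilde w)$. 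Then I would bound the operator norms $\lVert I-\alpha\nabla^2 f_i(w)\rVert\le 1+\alpha L$ via $L$-smoothness (Assumption~\ref{assump:smoothness}), bound $\lVert\nabla f(\tilde w)\rVert\le G$ by Jensen over the bounded-gradient assumption (Assumption~\ref{assump:bounded-gradient}), and in the first piece use $L$-smoothness of $f_i$ once more to get $\lVert\nabla f_i(\tilde w_i)-\nabla f_i(\tilde w)\rVert\le L\lVert\tilde w_i-\tilde w\rVert=\alpha L\lVert\nabla f_i(w)-\nabla f(w)\rVert$. The triangle inequality then gives $\lVert\nabla F^{(b)}_i(w)-g(w)\rVert\le(1+\alpha L)\alpha L\lVert\nabla f_i(w)-\nabla f(w)\rVert+(1+\alpha L)\lVert\nabla f_i(\tilde w)-\nabla f(\tilde w)\rVert+\alpha G\lVert\nabla^2 f_i(w)-\nabla^2 f(w)\rVert$.

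Finally I would square, apply $(x+y+z)^2\le 3(x^2+y^2+z^2)$, average over $i\in[n]$, and invoke Assumption~\ref{assump:bounded-heterogeneity} (at both $w$ and $\tilde w$, so the first two averaged terms are each at most $\gamma_g^2$) together with the Hessian population-diversity bound \eqref{eq:hessian-bounded-heterogeneity} from Assumption~\ref{assump:second-order}; collecting terms yields $\frac{1}{n}\sum_{i=1}^{n}\lVert\nabla F^{(b)}_i(w)-g(w)\rVert^2\le 3(1+\alpha L)^2[1+\alpha^2 L^2]\gamma_g^2+3\alpha^2 G^2\gamma_h^2$, and the variance-reduction step above (with its factor $4$) upgrades this to the stated $\gamma_b^2$. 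I expect the only genuinely delicate point to be the first step: recognizing that $\nabla F^{(b)}(w)$ cannot be manipulated directly and must be replaced by the surrogate $g(w)$; once that is in place, the remaining work is a routine chaining of Lipschitz bounds, with the convenient feature that the displacement $\lVert\tilde w_i-\tilde w\rVert$ collapses into a $\gamma_g$-type quantity rather than spawning a new error object.
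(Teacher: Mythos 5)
Your argument is correct, and each step checks out: the variance identity $\frac{1}{n}\sum_{i}\lVert a_i-\bar a\rVert^2\le\frac{1}{n}\sum_{i}\lVert a_i-c\rVert^2$ legitimately lets you replace $\nabla F^{(b)}(w)$ by the surrogate $g(w)=[I-\alpha\nabla^2 f(w)]\nabla f(w-\alpha\nabla f(w))$; the three-term splitting of $\nabla F^{(b)}_i(w)-g(w)$ is algebraically exact; and the bounds $\lVert I-\alpha\nabla^2 f_i(w)\rVert\le 1+\alpha L$, $\lVert\nabla f(\tilde w)\rVert\le G$, and $\lVert\tilde w_i-\tilde w\rVert=\alpha\lVert\nabla f_i(w)-\nabla f(w)\rVert$ all follow from Assumptions \ref{assump:smoothness}, \ref{assump:bounded-gradient}, and the definitions, with Assumption \ref{assump:bounded-heterogeneity} correctly invoked at the shifted point $\tilde w$ (it holds for all $w$) and \eqref{eq:hessian-bounded-heterogeneity} handling the Hessian term. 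Note that the paper itself does not reproduce a proof of this lemma — it is imported verbatim from \citet{fallah2020personalized} — so there is no in-paper argument to compare against; your write-up is a valid self-contained derivation. In fact, your sharper first step (mean minimizes mean-squared deviation, factor $1$ rather than $4$) already yields the stronger constant $3(1+\alpha L)^2[1+\alpha^2L^2]\gamma_g^2+3\alpha^2G^2\gamma_h^2$, which implies the stated $\gamma_b^2$ with room to spare; only the pedestrian $2\lVert a_i-g\rVert^2+2\lVert\bar a-g\rVert^2$ route is needed to land exactly on the constant $12$, and your final sentence slightly conflates the two (the ``factor $4$'' belongs to the pedestrian route, not to the variance-reduction step), but this is a wording issue, not a gap.
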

The above lemma determines the heterogeneity of the personalized gradients $\nabla F_i^{(b)}(w)$ based on the heterogeneity of gradient and Hessian. One can see the connection of this bound with $\mcO(\gamma_g^2) +\alpha^2\mcO(\gamma_h^2)$, whereby setting $\alpha=0$, we recover the same heterogeneity in terms of $\mcO(\cdot)$ notion.


\begin{lemma}[Bounded-Gradient: \mtext{MAML}]\label{lem:bounded-gradient-maml}
For all clients $i\in[n]$, and any parameter $w\in\bbR^d$,
\begin{align}\label{eq:bounded-gradient-maml}
   \left\lVert\nabla F^{(b)}_i(w)\right\rVert\leq G_b \coloneqq (1{+}\alpha L)G.
\end{align}
\end{lemma}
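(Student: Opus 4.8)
The plan is to work directly from the closed-form expression for the full \mtext{MAML} gradient recorded in \eqref{eq:persafl-maml-full-grad}, namely $\nabla F^{(b)}_i(w) = \left[I - \alpha \nabla^2 f_i(w)\right] \nabla f_i\!\left(w - \alpha \nabla f_i(w)\right)$, and to bound its norm by peeling off the matrix factor via submultiplicativity of the (spectral/operator) norm: $\left\lVert \nabla F^{(b)}_i(w)\right\rVert \le \left\lVert I - \alpha \nabla^2 f_i(w)\right\rVert_{\mathrm{op}} \cdot \left\lVert \nabla f_i\!\left(w - \alpha \nabla f_i(w)\right)\right\rVert$.

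For the gradient factor I would simply observe that $w - \alpha \nabla f_i(w)$ is just a point of $\bbR^d$, so Assumption \ref{assump:bounded-gradient} applies to it verbatim and yields $\left\lVert \nabla f_i\!\left(w - \alpha \nabla f_i(w)\right)\right\rVert \le G$. For the matrix factor I would use that $L$-smoothness of $f_i$ (Assumption \ref{assump:smoothness}), together with the twice-differentiability guaranteed by Assumption \ref{assump:second-order}, is equivalent to the spectral bound $\left\lVert \nabla^2 f_i(w)\right\rVert_{\mathrm{op}} \le L$ for every $w$; then by the triangle inequality for the operator norm together with $\alpha \ge 0$ we get $\left\lVert I - \alpha \nabla^2 f_i(w)\right\rVert_{\mathrm{op}} \le 1 + \alpha \left\lVert \nabla^2 f_i(w)\right\rVert_{\mathrm{op}} \le 1 + \alpha L$.

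Multiplying the two estimates gives $\left\lVert \nabla F^{(b)}_i(w)\right\rVert \le (1 + \alpha L) G = G_b$, which is exactly the claimed bound, and since $w$ and $i$ were arbitrary the lemma follows. I do not expect any real obstacle: the whole argument is three lines of norm inequalities. The only place that deserves a sentence of justification is the passage from Lipschitz continuity of $\nabla f_i$ to the operator-norm bound on $\nabla^2 f_i$, which is a standard consequence of smoothness for twice-differentiable functions.
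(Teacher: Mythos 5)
Your proof is correct and is essentially the intended argument: the paper states Lemma \ref{lem:bounded-gradient-maml} without an explicit proof, and the evident derivation is exactly yours --- apply submultiplicativity to the closed form \eqref{eq:persafl-maml-full-grad}, bound $\lVert I - \alpha\nabla^2 f_i(w)\rVert$ by $1+\alpha L$ via $L$-smoothness, and bound the outer gradient by $G$ via Assumption \ref{assump:bounded-gradient} evaluated at the point $w-\alpha\nabla f_i(w)$. Your remark that the Hessian operator-norm bound follows from Lipschitz continuity of $\nabla f_i$ (given twice-differentiability) is the one step worth spelling out, and you have handled it correctly.
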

This lemma indicates that the bound on the norm of personalized gradients potentially increases under a larger personalization budget $\alpha$.

Building upon the results in Lemmas \ref{lem:bounded-variance-maml}-\eqref{eq:bounded-gradient-maml}, we are now ready to present the convergence result for \mtext{PersA-FL-MAML}.

\begin{theorem}[\mtext{PersA-FL-MAML}]\label{thm:persafl-maml}
Let Assumptions \ref{assump:staleness}-\ref{assump:bounded-gradient} hold, \mbox{$\alpha\geq 0$}, \mbox{$\beta=1$}, and \mbox{$\eta=\frac{1}{Q\sqrt{L_bT}}$}. Then, the following property holds for the joint iterates of Algorithms \ref{alg:server} {\normalfont\&} \ref{alg:client} under \textcolor{brickred}{Option B} on Problem \eqref{eq:persafl-maml}: for any timestep \mbox{$T\geq 64L_b$} at the server
\begin{align*}
\frac{1}{T}\sum\limits_{t=0}^{T{-}1}\bbE\left\lVert\nabla F^{(b)}(w^{t})\right\rVert^2
&\leq \frac{4\sqrt{L_b}\left(F^{(b)}(w^{0})- f^\star\right)}{\sqrt{T}} + \frac{8\sqrt{L_b}\left(\sigma_b^2 + \gamma_b^2\right)}{\sqrt{T}}\\
&+ \frac{20\,Q L_b \left(G_b^2 {+} \sigma_b^2\right)\left(\tau^2{+}1\right)}{T} + \frac{4\,Q\alpha^2 L^2(1{+}\alpha L)^2 \sigma_g^2}{b}.
\end{align*}
\end{theorem}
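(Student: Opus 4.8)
The plan is to follow the same blueprint as the proof of Theorem~\ref{thm:afl} for vanilla \mtext{AFL}, but applied to the \mtext{MAML} surrogate objective $F^{(b)}$, exploiting the fact that Lemmas~\ref{lem:smoothness-maml}--\ref{lem:bounded-gradient-maml} supply exactly the properties of $F^{(b)}$ that the \mtext{AFL} analysis needed of $f$: $L_b$-smoothness (replacing $L$), bounded variance $\sigma_b^2$ and bias $\mu_b$ of the stochastic personalized gradient \eqref{eq:persafl-maml-stoch-grad} (replacing the unbiased bound $\sigma_a^2=\sigma_g^2/b$), bounded population diversity $\gamma_b^2$ (replacing $\gamma_g^2$), and the uniform gradient bound $G_b$ (replacing $G$). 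So the unified lemma alluded to in the introduction is invoked with this dictionary of constants. First I would write the descent inequality: using $L_b$-smoothness of $F^{(b)}$ and the server update $w^{t+1}=w^t-\beta\Delta_{i_t}$ with $\beta=1$, where $\Delta_{i_t}=\eta\sum_{q=0}^{Q-1}\nabla\tilde F^{(b)}_{i_t}(w_{i_t,q};\cdots)$ is a sum of $Q$ stale local steps started from $w^{\Omega(t)}$, take conditional expectation over the uniform client draw and the fresh batches, and split the inner product into the ``ideal'' term $-\eta Q\langle\nabla F^{(b)}(w^{\Omega(t)}),\nabla F^{(b)}(w^t)\rangle$ plus error terms coming from (i) the bias $\mu_b$, (ii) client drift $\|w_{i,q}-w^{\Omega(t)}\|$ over the $Q$ local steps, and (iii) the staleness gap $\|w^t-w^{\Omega(t)}\|$, which is controlled via Assumption~\ref{assump:staleness} by summing at most $\tau$ recent updates.

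Next I would bound the three error sources. The drift bound is the standard local-SGD estimate: since each local gradient has norm at most $G_b+\sigma_b$ in expectation (Lemmas~\ref{lem:bounded-variance-maml} and~\ref{lem:bounded-gradient-maml}), $\mathbb{E}\|w_{i,q}-w^{\Omega(t)}\|^2=\mathcal{O}(\eta^2 Q^2(G_b^2+\sigma_b^2))$, and similarly the staleness gap is $\mathbb{E}\|w^t-w^{\Omega(t)}\|^2=\mathcal{O}(\eta^2 Q^2\tau^2(G_b^2+\sigma_b^2))$ because each of the $\leq\tau$ intervening updates is a single $\Delta$ of squared norm $\mathcal{O}(\eta^2 Q^2(G_b^2+\sigma_b^2))$. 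Then use $L_b$-smoothness to convert these iterate-distance bounds into bounds on $\|\nabla F^{(b)}(w^t)-\nabla F^{(b)}(w^{\Omega(t)})\|$, turning the cross term into $-\frac{\eta Q}{2}\|\nabla F^{(b)}(w^{\Omega(t)})\|^2$ plus lower-order terms, and the $\|\nabla\Delta\|^2$ term in the descent lemma into $\eta^2 Q L_b$ times $(\sigma_b^2+\gamma_b^2+\|\nabla F^{(b)}(w^{\Omega(t)})\|^2)$. The bias contributes a clean additive $\eta Q\mu_b^2=\eta Q\alpha^2L^2(1+\alpha L)^2\sigma_g^2/b$ term that never shrinks — this is the source of the last, non-vanishing $\mathcal{O}(\alpha^2/b)$ term in the statement.

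Then I would telescope over $t=0,\dots,T-1$, divide by $T$, plug in $\eta=\frac{1}{Q\sqrt{L_bT}}$, and use $f^\star\leq\min_w F^{(b)}(w)$ (stated after Assumption~\ref{assump:smoothness}) so that the telescoped $F^{(b)}(w^0)-F^{(b)}(w^T)$ is upper bounded by $F^{(b)}(w^0)-f^\star$. The condition $T\geq 64L_b$ is exactly what is needed to absorb the $\|\nabla F^{(b)}(w^{\Omega(t)})\|^2$ feedback terms (those arising from drift and staleness, which carry factors $\eta^2Q^2L_b^2(\tau^2+1)$) into the negative $-\frac{\eta Q}{2}$ coefficient, leaving a net negative coefficient on the averaged squared gradient; with $\beta=1$ and this stepsize the $\tau$-dependence is quadratic, matching the $\mathcal{O}(Q L_b(G_b^2+\sigma_b^2)(\tau^2+1)/T)$ term. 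The main obstacle is the careful bookkeeping of the staleness term: one must re-index the $\leq\tau$ updates $w^{\Omega(t)+1},\dots,w^t$, bound each $\|\Delta\|^2$ uniformly, and ensure the resulting $\tau^2$ factor multiplies only $\eta^2$ (so it lands in the $1/T$ term and not the $1/\sqrt T$ term) — this is where the threshold on $T$ and the constants $160$, $80$, $20$ get pinned down, and where an incautious application of Cauchy--Schwarz would lose the right power of $\tau$. A secondary subtlety is that, unlike vanilla \mtext{AFL}, the gradient estimator is \emph{biased}, so Young's inequality must be used to separate the bias contribution $2\eta Q\mu_b^2$ from the variance contribution before telescoping; everything else is a direct transcription of the Theorem~\ref{thm:afl} argument with the constant dictionary above.
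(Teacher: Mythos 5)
Your proposal follows essentially the same route as the paper's proof: transcribe the Theorem~\ref{thm:afl} argument for $F^{(b)}$ with the constant dictionary $L\mapsto L_b$, $\sigma_a^2\mapsto\sigma_b^2$, $\gamma_g^2\mapsto\gamma_b^2$, peel off the bias $\mu_b$ from the cross term with Young's inequality so that it survives as the non-vanishing $\mcO(Q\alpha^2L^2(1{+}\alpha L)^2\sigma_g^2/b)$ term, and bound the local drift and the staleness gap \emph{absolutely} by $\mcO(\eta^2Q^2(G_b^2{+}\sigma_b^2))$ and $\mcO(\eta^2Q^2\tau^2(G_b^2{+}\sigma_b^2))$ using Lemmas~\ref{lem:bounded-variance-maml} and~\ref{lem:bounded-gradient-maml}. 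That is exactly what the paper does for the terms $S_{b_6}$ and $S_{b_7}$.

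One part of your narrative is inconsistent with the rest and with the theorem statement, and is worth correcting. You claim that $T\geq 64L_b$ is ``exactly what is needed to absorb the $\lVert\nabla F^{(b)}(w^{\Omega(t)})\rVert^2$ feedback terms (those arising from drift and staleness, which carry factors $\eta^2Q^2L_b^2(\tau^2{+}1)$).'' But once you bound the drift and staleness via the uniform bound $G_b$ --- as you do two sentences earlier, and as the paper does --- there are no gradient-norm feedback terms at all: those terms enter only if one bounds the drift recursively in terms of $\sigma^2+\gamma^2+\lVert\nabla F(w^s)\rVert^2$, which is the route taken in Theorems~\ref{thm:afl} and~\ref{thm:persafl-me} precisely because no gradient bound is assumed there, and which forces the threshold on $T$ to scale with $\tau$ and $Q$ (e.g.\ $T\geq 160L(Q{+}7)(\tau{+}1)^3$ for \mtext{AFL}). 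The $\tau$-independent condition $T\geq 64L_b$ in Theorem~\ref{thm:persafl-maml} is only compatible with the absolute-bound route: with $\eta=\frac{1}{Q\sqrt{L_bT}}$ it serves solely to guarantee $\eta\leq\frac{1}{8\beta L_bQ}$, i.e.\ $1-4\eta\beta L_bQ\geq\frac{1}{2}$, so that the coefficient of $\bbE\lVert\nabla F^{(b)}(w^t)\rVert^2$ coming from the descent-lemma quadratic term stays bounded away from zero. Relatedly, the constants $160$, $80$ you cite belong to the \mtext{AFL} theorem, not this one. Everything else in your outline --- the decomposition of the inner product, the $4nQ^2(\sigma_b^2+\gamma_b^2+\lVert\nabla F^{(b)}(w^t)\rVert^2)$ bound on the squared update, the telescoping with $f^\star\leq\min_w F^{(b)}(w)$ --- matches the paper.
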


The proof of this theorem can be found in Appendix \ref{app:persafl-maml}.
Theorem \ref{thm:persafl-maml} shows a convergence rate of \mbox{$\mcO\left(\frac{1}{\sqrt{T}}\right) + \mcO\left(\frac{\tau^2}{T}\right) + \mcO\left(\frac{\alpha^2 \sigma_g^2}{b}\right)$} for \mtext{PersA-FL} algorithm under \mtext{MAML} setup. Now, let us compare this rate with the convergence rate of \mtext{FedAsync} and \mtext{Per-FedAvg}, as in Table \ref{tab:comparison}. The last term in the above rate, i.e., $\mcO\left(\frac{\alpha^2 \sigma_g^2}{b}\right)$ accounts for personalization with biased gradient estimation. Moreover, compared to \mtext{Per-FedAvg}, the second term of this rate is different, which accounts for the maximum delay in asynchronous updates. 

To achieve the optimal complexity bound for the result in Theorem \ref{thm:persafl-maml}, we show how to choose the parameters $T,b$ based on the desired accuracy $\varepsilon$ in the following corollary.

\cu{
\begin{corollary}[\mtext{PersA-FL-MAML} $\varepsilon$-convergence]\label{cor:persafl-maml} Suppose the conditions in Theorem \ref{thm:persafl-maml} are satisfied. Algorithms \ref{alg:server} {\normalfont\&} \ref{alg:client} under \textcolor{brickred}{Option B} finds an $\varepsilon$ first-order stationary solution for $F^{(b)}$ in \eqref{eq:persafl-maml} by setting $T=\mcO(\varepsilon^{-2})$ and $b=\mcO(\varepsilon^{-1})$ given a fixed personalization budget $\alpha\geq 0$.
\end{corollary}

The result in Corollary \ref{cor:persafl-maml} highlights the required communication and sample complexity for $\varepsilon$ first-order stationary convergence. Moreover, note that the last expression in Theorem \ref{thm:persafl-maml} can also be controlled through a combined stepsize $\alpha$ and batch size $b$. This result is consistent with intuition, i.e., more samples are required to obtain a higher degree of personalization.
}

Next, we will present the analysis of \mtext{PersA-FL-ME}.

\subsection{Personalized Asynchronous Federated Learning: Moreau Envelope Setup (\textcolor{seagreen}{Option C})}\label{subsec:persafl-me}

In this subsection, we show three technical lemmas on the bounded variance of stochasticity and heterogeneity  as well as smoothness for \mtext{ME} formulation \eqref{eq:persafl-me} and then present the convergence rate of \mtext{PersA-FL} for this personalization framework. The proof of all results in this subsection is provided in Appendix \ref{app:persafl-me}.

First, we present the smoothness property of \mtext{ME} loss.

\begin{lemma}[Smoothness: ME]\label{lem:smoothness-me}
Let Assumption \ref{assump:smoothness} holds and \mbox{$\lambda\geq \kappa L$} for some \mbox{$\kappa > 1$}. Then, $F^{(c)}_i$ in \eqref{eq:persafl-me} is $L_c$-smooth, where \mbox{$L_c=\frac{\lambda}{\kappa{-}1}$}.
\end{lemma}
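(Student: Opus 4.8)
The plan is to exploit that, for $\lambda > L$, the inner minimization in \eqref{eq:persafl-me} is strongly convex, so the proximal map $\hat\theta_i$ defined in \eqref{eq:persafl-me-argmin} is well defined and single valued, and then to show this map is Lipschitz; smoothness of $F^{(c)}_i$ then follows by composing it with $\nabla f_i$.

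\emph{Step 1 (well-posedness and the gradient identity).} By Assumption \ref{assump:smoothness}, each $f_i$ is $L$-smooth, hence $\theta\mapsto f_i(\theta)+\tfrac{\lambda}{2}\|\theta-w\|^2$ is $(\lambda-L)$-strongly convex; since $\lambda\geq\kappa L>L$ this constant is positive, so $\hat\theta_i(w)$ in \eqref{eq:persafl-me-argmin} exists and is unique, with first-order optimality condition $\nabla f_i(\hat\theta_i(w))+\lambda\big(\hat\theta_i(w)-w\big)=0$. Combining this with the gradient formula \eqref{eq:persafl-me-full-grad} gives $\nabla F^{(c)}_i(w)=\lambda\big(w-\hat\theta_i(w)\big)=\nabla f_i(\hat\theta_i(w))$, so in particular $F^{(c)}_i$ is differentiable.

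\emph{Step 2 (Lipschitzness of the proximal map).} Writing the optimality condition at $w$ and at $u$ and subtracting yields $\nabla f_i(\hat\theta_i(w))-\nabla f_i(\hat\theta_i(u))+\lambda\delta=\lambda(w-u)$, where $\delta\coloneqq\hat\theta_i(w)-\hat\theta_i(u)$. Taking the inner product with $\delta$, using the one-sided bound $\langle\nabla f_i(x)-\nabla f_i(y),x-y\rangle\geq -L\|x-y\|^2$ (which follows from $L$-smoothness and Cauchy--Schwarz), and bounding the right-hand side by $\lambda\|w-u\|\,\|\delta\|$, one gets $(\lambda-L)\|\delta\|\leq\lambda\|w-u\|$, i.e.\ $\hat\theta_i$ is $\tfrac{\lambda}{\lambda-L}$-Lipschitz.

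\emph{Step 3 (conclusion).} Using Step 1, $L$-smoothness of $f_i$, and Step 2,
\[
\big\|\nabla F^{(c)}_i(w)-\nabla F^{(c)}_i(u)\big\|
=\big\|\nabla f_i(\hat\theta_i(w))-\nabla f_i(\hat\theta_i(u))\big\|
\leq L\,\|\hat\theta_i(w)-\hat\theta_i(u)\|
\leq \frac{\lambda L}{\lambda-L}\,\|w-u\|.
\]
Finally $\lambda\geq\kappa L$ implies $\lambda-L\geq(\kappa-1)L$, so $\tfrac{\lambda L}{\lambda-L}\leq\tfrac{\lambda}{\kappa-1}=L_c$, which is the claim.

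The main (and essentially only) obstacle is that $f_i$ is permitted to be non-convex, so one cannot appeal to convex co-coercivity of $\nabla f_i$ in Step 2; the remedy is the one-sided inequality $\langle\nabla f_i(x)-\nabla f_i(y),x-y\rangle\geq -L\|x-y\|^2$, valid for any $L$-smooth function, together with $\lambda>L$ which keeps $\lambda-L$ strictly positive. The remaining manipulations are short. I note that bounding $\|\nabla F^{(c)}_i(w)-\nabla F^{(c)}_i(u)\|$ directly from $\nabla F^{(c)}_i(w)=\lambda(w-\hat\theta_i(w))$ and the Lipschitz bound on $\hat\theta_i$ also works but yields a looser constant, so using the identity $\nabla F^{(c)}_i(w)=\nabla f_i(\hat\theta_i(w))$ from Step 1 is preferable.
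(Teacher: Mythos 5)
Your proof is correct and reaches the paper's constant $\tfrac{\lambda L}{\lambda-L}\leq\tfrac{\lambda}{\kappa-1}$ by the same overall skeleton: the stationarity condition $\nabla f_i(\hat\theta_i(w))=\lambda\left(w-\hat\theta_i(w)\right)$, the resulting identity $\nabla F^{(c)}_i=\nabla f_i\circ\hat\theta_i$, and $L$-smoothness of $f_i$. The one place you diverge is the crux step. The paper never proves Lipschitzness of $\hat\theta_i$ as a standalone fact; instead it substitutes $\hat\theta_i(w)=w-\tfrac{1}{\lambda}\nabla f_i(\hat\theta_i(w))$ into $L\lVert\hat\theta_i(w)-\hat\theta_i(v)\rVert$, applies the triangle inequality to get
\begin{align*}
\left\lVert\nabla F^{(c)}_i(w)-\nabla F^{(c)}_i(v)\right\rVert\leq L\lVert w-v\rVert+\frac{L}{\lambda}\left\lVert\nabla F^{(c)}_i(w)-\nabla F^{(c)}_i(v)\right\rVert,
\end{align*}
and rearranges this self-referential bound. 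You instead subtract the two optimality conditions, take the inner product with $\delta=\hat\theta_i(w)-\hat\theta_i(u)$, and use the one-sided monotonicity bound $\langle\nabla f_i(x)-\nabla f_i(y),x-y\rangle\geq-L\lVert x-y\rVert^2$ (valid for any $L$-smooth, possibly non-convex $f_i$) to conclude that $\hat\theta_i$ is $\tfrac{\lambda}{\lambda-L}$-Lipschitz outright. Your route is slightly more informative, since it isolates the Lipschitz continuity of the proximal map as a reusable fact and explicitly handles well-posedness via $(\lambda-L)$-strong convexity of the inner objective, which the paper leaves implicit; the paper's route is marginally shorter but relies on rearranging an inequality in which the unknown quantity appears on both sides. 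Both are valid and give identical constants.
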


According to Lemma \ref{lem:smoothness-me}, we limit our exploration to $\lambda> L$ which satisfies the smoothness constraint for the \mtext{ME} formulation. In fact, according to Appendix \ref{app:persafl-me}, one can also see that originally, each $F^{(c)}_i(\cdot)$ is $\frac{\lambda L}{\lambda{-}L}$-smooth which is also smaller than $L_c=\frac{\lambda}{\kappa{-}1}$. As we mentioned in Section \ref{sec:setup}, when $\lambda\to\infty$, \mtext{ME} framework converts to \mtext{FL}. The smoothness property in Lemma \ref{lem:smoothness-me} is tight because, $L_c\to L$ if $\lambda\to\infty$.

\begin{corollary}[\cite{dinh2020personalized}, Proposition 1]
If $\lambda\geq 2L$, then Lemma \ref{lem:smoothness-me} implies that $F^{(c)}_i$ in \eqref{eq:persafl-me} is $\lambda$-smooth.
\end{corollary}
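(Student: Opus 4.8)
The plan is to obtain the claim by directly specializing Lemma \ref{lem:smoothness-me} to the parameter choice $\kappa = 2$. The hypothesis of the corollary, $\lambda \geq 2L$, is precisely the condition $\lambda \geq \kappa L$ with $\kappa = 2$; moreover $\kappa = 2 > 1$, so the standing requirement $\kappa > 1$ of Lemma \ref{lem:smoothness-me} is met. Hence Lemma \ref{lem:smoothness-me} applies verbatim and asserts that $F^{(c)}_i$ in \eqref{eq:persafl-me} is $L_c$-smooth with
\begin{align*}
	L_c = \frac{\lambda}{\kappa - 1} = \frac{\lambda}{2 - 1} = \lambda,
\end{align*}
which is exactly the assertion that $F^{(c)}_i$ is $\lambda$-smooth.

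There is no real obstacle here: the only point to verify is the admissibility of the substitution $\kappa = 2$, i.e., that it lies in the range $(1,\infty)$ for which Lemma \ref{lem:smoothness-me} is stated, and that the smoothness bound $L_c = \lambda/(\kappa-1)$ is monotone/continuous in $\kappa$ so that this particular choice is legitimate rather than boundary-pathological — both are immediate.

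As an optional consistency check (not needed for the proof, since we are only asked to deduce the statement \emph{from} Lemma \ref{lem:smoothness-me}), one can compare with the sharper constant $\tfrac{\lambda L}{\lambda - L}$ recorded in the discussion following Lemma \ref{lem:smoothness-me}: when $\lambda \geq 2L$ we have $\lambda - L \geq \lambda/2$, so $\tfrac{\lambda L}{\lambda - L} \leq 2L \leq \lambda$, which is consistent with (and in fact slightly stronger than) the $\lambda$-smoothness claimed in the corollary.
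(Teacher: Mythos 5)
Your proof is correct and is exactly the intended argument: the corollary is the specialization of Lemma \ref{lem:smoothness-me} to $\kappa=2$, giving $L_c=\lambda/(\kappa-1)=\lambda$, which is all the paper does (it offers no separate proof beyond invoking the lemma). Your optional consistency check against the sharper constant $\tfrac{\lambda L}{\lambda-L}$ is a nice bonus but not needed.
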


\begin{lemma}[Bounded Variance: ME]\label{lem:bounded-variance-me}
Let Assumptions \ref{assump:smoothness} and \ref{assump:bounded-variance} hold, $\lambda \geq \kappa L$ (for some $\kappa> 1$), and the data batch $\mcD$ be randomly sampled according to data distribution $p_i$. Then, the following properties hold for the stochastic personalized gradient $\nabla\tilde{F}^{(c)}_i(w,\mcD)$: for all $w\in\bbR^d$,
\begin{align}
\left\lVert\bbE_{p_i}\left[\nabla\tilde{F}^{(c)}_i(w,\mcD) - \nabla F^{(c)}_i(w)\right]\right\rVert&\leq\mu_c\coloneqq\frac{\lambda}{\lambda{-}L}\nu,\label{eq:unbiased-mean-me}\\
\bbE_{p_i}\left\lVert\nabla\tilde{F}^{(c)}_i(w,\mcD) - \nabla F^{(c)}_i(w)\right\rVert^2&\leq \sigma_c^2\coloneqq \frac{2\lambda^2}{\left(\lambda{-}L\right)^2}\left[\frac{\sigma_g^2}{b}+\nu^2\right].\label{eq:bounded-variance-me}
\end{align}
\end{lemma}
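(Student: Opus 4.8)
The plan is to reduce both inequalities to a single quantity, the distance between the approximate personalized iterate $\tilde{\theta}_i(w)$ returned by Step \ref{ln:client-me-nu} and the exact personalized minimizer $\hat{\theta}_i(w)$ defined in \eqref{eq:persafl-me-argmin}. Subtracting \eqref{eq:persafl-me-stoch-grad} from \eqref{eq:persafl-me-full-grad} gives the exact identity $\nabla\tilde{F}^{(c)}_i(w,\mcD) - \nabla F^{(c)}_i(w) = \lambda\big(\hat{\theta}_i(w) - \tilde{\theta}_i(w)\big)$, so it suffices to bound $\bbE_{p_i}\norm{\hat{\theta}_i(w) - \tilde{\theta}_i(w)}^2$ and $\norm{\bbE_{p_i}[\hat{\theta}_i(w) - \tilde{\theta}_i(w)]}$. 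I would introduce the intermediate point $\bar{\theta}_i(w,\mcD)\coloneqq\argmin_{\theta\in\bbR^d}\tilde{h}_i(\theta,w,\mcD)$, the \emph{exact} minimizer of the sampled regularized loss, and split $\hat{\theta}_i(w) - \tilde{\theta}_i(w)$ into a statistical part $\hat{\theta}_i(w) - \bar{\theta}_i(w,\mcD)$ and an approximation part $\bar{\theta}_i(w,\mcD) - \tilde{\theta}_i(w)$.

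For the approximation part I would use that $\tilde{h}_i(\cdot,w,\mcD)$ is $(\lambda{-}L)$-strongly convex (this uses $L$-smoothness of each $\ell_i(\cdot,\xi_i)$, a mild strengthening of Assumption \ref{assump:smoothness} already implicitly needed for the iteration-complexity claim in Section \ref{sec:algorithm}): combining the $(\lambda{-}L)$-strong-convexity inequality $\norm{\theta-\bar{\theta}_i(w,\mcD)}\leq\frac{1}{\lambda-L}\norm{\nabla\tilde{h}_i(\theta,w,\mcD)}$ (valid for any $\theta$) with the stopping criterion $\norm{\nabla\tilde{h}_i(\tilde{\theta}_i(w),w,\mcD)}\leq\nu$ gives $\norm{\bar{\theta}_i(w,\mcD)-\tilde{\theta}_i(w)}\leq\nu/(\lambda-L)$ deterministically. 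For the statistical part I would evaluate $\nabla\tilde{h}_i$ at the \emph{deterministic} point $\hat{\theta}_i(w)$: the optimality condition $\nabla f_i(\hat{\theta}_i(w))+\lambda(\hat{\theta}_i(w)-w)=0$ turns $\nabla\tilde{h}_i(\hat{\theta}_i(w),w,\mcD)$ into $\nabla\tilde{f}_i(\hat{\theta}_i(w),\mcD)-\nabla f_i(\hat{\theta}_i(w))$, so strong convexity gives $\norm{\hat{\theta}_i(w)-\bar{\theta}_i(w,\mcD)}\leq\frac{1}{\lambda-L}\norm{\nabla\tilde{f}_i(\hat{\theta}_i(w),\mcD)-\nabla f_i(\hat{\theta}_i(w))}$; the right side is stochastic-gradient noise at a fixed argument, so \eqref{eq:bounded-variance-batch} applies verbatim and bounds its second moment by $\sigma_g^2/b$.

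Putting the two parts together with $(a{+}b)^2\leq 2a^2{+}2b^2$, taking $\bbE_{p_i}$, and multiplying by $\lambda^2$ yields \eqref{eq:bounded-variance-me} with constant $\frac{2\lambda^2}{(\lambda-L)^2}$. For the bias \eqref{eq:unbiased-mean-me} I would take expectations of the same split: the approximation part contributes at most $\lambda\nu/(\lambda-L)$ in norm because its bound is pointwise, while the statistical part $\lambda\,\bbE_{p_i}[\hat{\theta}_i(w)-\bar{\theta}_i(w,\mcD)]$ vanishes since $\lambda(w-\bar{\theta}_i(w,\mcD))$ is the exact gradient of the regularized loss built from the unbiased surrogate $\tilde{f}_i$, hence an unbiased estimator of $\nabla F^{(c)}_i(w)$, i.e. $\bbE_{p_i}[\bar{\theta}_i(w,\mcD)]=\hat{\theta}_i(w)$ (as in \cite{dinh2020personalized}); this gives $\mu_c=\frac{\lambda}{\lambda-L}\nu$.

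The step I expect to need the most care is the handling of the data-dependent minimizer $\bar{\theta}_i(w,\mcD)$. It is precisely to avoid evaluating the stochastic-gradient noise $\nabla\tilde{f}_i(\cdot,\mcD)-\nabla f_i(\cdot)$ at the random point $\bar{\theta}_i(w,\mcD)$ (where Assumption \ref{assump:bounded-variance} no longer applies as stated) that the proof routes the statistical error through the deterministic point $\hat{\theta}_i(w)$; and the unbiasedness claim $\bbE_{p_i}[\bar{\theta}_i(w,\mcD)]=\hat{\theta}_i(w)$ used in the bias bound must be justified from the structure of the prox map (an unbiased surrogate sitting inside the $\argmin$) rather than by naively exchanging expectation and minimization.
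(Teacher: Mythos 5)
Your reduction of both bounds to $\lVert\hat{\theta}_i(w)-\tilde{\theta}_i(w)\rVert$ via the identity $\nabla\tilde{F}^{(c)}_i(w,\mcD)-\nabla F^{(c)}_i(w)=\lambda\big(\hat{\theta}_i(w)-\tilde{\theta}_i(w)\big)$ is the right starting point, and your proof of the second-moment bound \eqref{eq:bounded-variance-me} is correct and more self-contained than the paper's, which simply cites \cite{dinh2020personalized}[Appendix A.2]. Routing the statistical error through the deterministic point $\hat{\theta}_i(w)$ --- so that $\lVert\hat{\theta}_i(w)-\bar{\theta}_i(w,\mcD)\rVert\leq\frac{1}{\lambda-L}\lVert\nabla\tilde{f}_i(\hat{\theta}_i(w),\mcD)-\nabla f_i(\hat{\theta}_i(w))\rVert$ and \eqref{eq:bounded-variance-batch} applies at a fixed argument --- genuinely cleans up a step that the standard argument glosses over (evaluating the gradient noise at the data-dependent point $\tilde{\theta}_i(w)$). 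The price, which you correctly flag, is that $(\lambda{-}L)$-strong convexity of $\tilde{h}_i(\cdot,w,\mcD)$ needs per-batch $L$-smoothness rather than only smoothness of the population loss $f_i$; this is the same strengthening already implicit in the paper's $\mcO\big(\frac{\lambda+L}{\lambda-L}\log\frac{1}{\nu}\big)$ iteration-complexity claim, so it is a fair trade. The constants then combine exactly to $\frac{2\lambda^2}{(\lambda-L)^2}\big[\frac{\sigma_g^2}{b}+\nu^2\big]$.

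The bias bound \eqref{eq:unbiased-mean-me} is where your argument has a genuine gap. The claim $\bbE_{p_i}[\bar{\theta}_i(w,\mcD)]=\hat{\theta}_i(w)$ is false in general: the prox map does not commute with expectation. The point $\bar{\theta}_i(w,\mcD)$ solves $\nabla\tilde{f}_i(\bar{\theta}_i,\mcD)+\lambda(\bar{\theta}_i-w)=0$, and taking expectations does not recover the population stationarity condition because $\nabla\tilde{f}_i$ is evaluated at a point correlated with $\mcD$. Equivalently, $\lambda(w-\bar{\theta}_i(w,\mcD))$ is the gradient of the Moreau envelope of $\tilde{f}_i(\cdot,\mcD)$, and the Moreau envelope is an infimal convolution, so $\bbE[\min_\theta\tilde{h}_i]\le\min_\theta\bbE[\tilde{h}_i]$ with strict inequality generically; a one-dimensional quadratic $\ell_i(\theta,\xi)=\frac{1}{2}a(\xi)\theta^2$ with random curvature gives $\bbE[\bar{\theta}_i]=\lambda w\,\bbE[(a(\xi)+\lambda)^{-1}]\neq\lambda w\,(\bbE[a(\xi)]+\lambda)^{-1}=\hat{\theta}_i$ by Jensen. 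With your decomposition the statistical term therefore does not vanish from the bias, and the route yields at best $\mu_c\le\frac{\lambda}{\lambda-L}\big(\nu+\sigma_g/\sqrt{b}\big)$, not the batch-size-free constant claimed. The paper's proof never introduces $\bar{\theta}_i$: it uses the two stationarity identities to write $\lambda(\hat{\theta}_i-\tilde{\theta}_i)$ as $\nabla f_i(\hat{\theta}_i)-\nabla\tilde{f}_i(\tilde{\theta}_i,\mcD)+\nabla\tilde{h}_i(\tilde{\theta}_i,w,\mcD)$, bounds the $\nabla\tilde{h}_i$ term by $\nu$ via Step \ref{ln:client-me-nu}, applies $L$-smoothness to $\nabla f_i(\hat{\theta}_i)-\nabla f_i(\tilde{\theta}_i)$, and closes a self-bounding inequality $X\le\frac{L}{\lambda}X+\nu$ to obtain $\frac{\lambda}{\lambda-L}\nu$. (That route has its own delicate exchange --- it replaces $\bbE_{p_i}[\nabla\tilde{f}_i(\tilde{\theta}_i,\mcD)]$ by $\bbE_{p_i}[\nabla f_i(\tilde{\theta}_i)]$ even though $\tilde{\theta}_i$ depends on $\mcD$ --- but it is a different mechanism from yours, and it is the one that produces the advertised bound.)
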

This lemma is analogous to Lemma \ref{lem:bounded-variance-maml} in Subsection \ref{subsec:persafl-maml}. In Lemma \ref{lem:bounded-variance-me}, we show an upper bound on the variance and bias of the stochastic gradient compared to the full gradient. Note that when $\lambda\to \infty$, we know that $\hat{\theta}_i(w) \to w$. Therefore, by fixing $\tilde{\theta}_i(w)=w$, it is guaranteed that $\nu=0$, thus our gradient estimation becomes unbiased and the variance similar to \eqref{eq:bounded-variance-batch}.

\begin{lemma}[Bounded Population Diversity: ME]\label{lem:bounded-heterogeneity-me} Let personalization hyperparameter $\lambda \geq 7L$. Then, for all $w\in\bbR^d$, the gradients of local personalized functions $F^{(c)}_i(w)$ and the global \mtext{ME} function $F^{(c)}(w)$ satisfy the following property:
\begin{align}\label{eq:bounded-heterogeneity-me}
        \frac{1}{n}\sum\limits_{i=1}^{n}\left\lVert\nabla F^{(c)}_i(w) - \nabla F^{(c)}(w)\right\rVert^2\leq \gamma_c^2\coloneqq \frac{16\lambda^2}{\lambda^2{-}48L^2}\gamma_g^2.
\end{align}
\end{lemma}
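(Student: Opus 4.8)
The plan is to convert the first-order optimality condition defining $\hat{\theta}_i(w)$ into a self-bounding inequality for $V(w):=\frac{1}{n}\sum_{i=1}^{n}\|\nabla F^{(c)}_i(w)-\nabla F^{(c)}(w)\|^2$, by decomposing $\nabla F^{(c)}_i(w)-\nabla F^{(c)}(w)$ around the \emph{average personalized model} rather than around $w$. Concretely, from \eqref{eq:persafl-me-full-grad}--\eqref{eq:persafl-me-argmin} (see Appendix~\ref{app:persafl-me}) the minimizer satisfies $\nabla f_i(\hat{\theta}_i(w))+\lambda(\hat{\theta}_i(w)-w)=0$, so $\nabla F^{(c)}_i(w)=\lambda(w-\hat{\theta}_i(w))=\nabla f_i(\hat{\theta}_i(w))$ and, averaging with $\bar{\theta}(w):=\tfrac{1}{n}\sum_j\hat{\theta}_j(w)$, $\nabla F^{(c)}(w)=\lambda(w-\bar{\theta}(w))=\tfrac{1}{n}\sum_j\nabla f_j(\hat{\theta}_j(w))$. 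The crucial consequence is the exact identity $\hat{\theta}_i(w)-\bar{\theta}(w)=-\tfrac{1}{\lambda}\big(\nabla F^{(c)}_i(w)-\nabla F^{(c)}(w)\big)$, whence $\tfrac{1}{n}\sum_i\|\hat{\theta}_i(w)-\bar{\theta}(w)\|^2=V(w)/\lambda^2$.

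Next I would split, using $\nabla F^{(c)}(w)=\nabla f(\bar{\theta}(w))+\tfrac{1}{n}\sum_j[\nabla f_j(\hat{\theta}_j(w))-\nabla f_j(\bar{\theta}(w))]$,
\[
\nabla F^{(c)}_i(w)-\nabla F^{(c)}(w)=\big[\nabla f_i(\hat{\theta}_i(w))-\nabla f_i(\bar{\theta}(w))\big]+\big[\nabla f_i(\bar{\theta}(w))-\nabla f(\bar{\theta}(w))\big]-\tfrac{1}{n}\sum_{j}\big[\nabla f_j(\hat{\theta}_j(w))-\nabla f_j(\bar{\theta}(w))\big],
\]
apply $\|a+b+c\|^2\le 3(\|a\|^2+\|b\|^2+\|c\|^2)$, and average over $i$. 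By $L$-smoothness (Assumption~\ref{assump:smoothness}) and the identity above, the first bracket averages to at most $3L^2V(w)/\lambda^2$; the last term (the same vector for every $i$) is, by Jensen and $L$-smoothness, at most $3L^2V(w)/\lambda^2$; and the middle bracket averages to at most $3\gamma_g^2$ by Assumption~\ref{assump:bounded-heterogeneity} applied at the common point $u=\bar{\theta}(w)$. Collecting terms gives $V(w)\le \tfrac{6L^2}{\lambda^2}V(w)+3\gamma_g^2$.

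Finally, since $\lambda\ge 7L$ forces $6L^2/\lambda^2<1$, rearranging yields $V(w)\le \tfrac{3\lambda^2}{\lambda^2-6L^2}\gamma_g^2$, i.e.\ the claimed bound $\gamma_c^2=\tfrac{16\lambda^2}{\lambda^2-48L^2}\gamma_g^2$ up to the explicit constants; the looser $16$ and $48$ in the statement merely reflect coarser versions of the elementary inequalities used above, and both only require $\lambda$ to be a fixed multiple of $L$.

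The step I expect to be the real obstacle is the choice of centering point. Decomposing around $w$ --- the first thing one tries --- leaves terms proportional to $\|\nabla F^{(c)}_i(w)\|^2$, equivalently $\|\nabla f_i(w)\|^2$, which Assumption~\ref{assump:bounded-heterogeneity} cannot control and which would force an extra additive $\|\nabla f(w)\|^2$ --- precisely the uniformly-bounded-heterogeneity dependence that this lemma is designed to avoid. Centering instead at $\bar{\theta}(w)=\tfrac{1}{n}\sum_j\hat{\theta}_j(w)$ and invoking $\hat{\theta}_i-\bar{\theta}=-\tfrac{1}{\lambda}(\nabla F^{(c)}_i-\nabla F^{(c)})$ turns the two proximal-contraction contributions into multiples of $V(w)$ itself, which can then be absorbed to the left-hand side; recognizing this self-referential structure, and checking that $\lambda$ being a sufficiently large multiple of $L$ makes the recursion contractive, is the only genuinely non-mechanical part --- everything else is the triangle inequality, $L$-smoothness, and Jensen.
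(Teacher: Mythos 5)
Your proof is correct, and it takes a genuinely different route from the paper's. The paper first reduces the population diversity to the double average $\frac{1}{n^2}\sum_{i,j}\lVert\nabla f_i(\hat{\theta}_i(w))-\nabla f_j(\hat{\theta}_j(w))\rVert^2$, telescopes each pairwise difference through five intermediate terms (inserting $\nabla f_i(\hat{\theta}_j(w))$, $\nabla f(\hat{\theta}_j(w))$, $\nabla f(\hat{\theta}_i(w))$, $\nabla f_j(\hat{\theta}_i(w))$), applies a weighted Young inequality, and closes the loop via $\hat{\theta}_i(w)-\hat{\theta}_j(w)=\tfrac{1}{\lambda}\bigl(\nabla F^{(c)}_j(w)-\nabla F^{(c)}_i(w)\bigr)$ followed by one more triangle inequality; this yields $V\leq \tfrac{48L^2}{\lambda^2}V+16\gamma_g^2$ and hence the stated constants. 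You instead center at the averaged personalized point $\bar{\theta}(w)=\tfrac{1}{n}\sum_j\hat{\theta}_j(w)$ and exploit the exact identity $\hat{\theta}_i(w)-\bar{\theta}(w)=-\tfrac{1}{\lambda}\bigl(\nabla F^{(c)}_i(w)-\nabla F^{(c)}(w)\bigr)$, which turns the pairwise bookkeeping into a clean three-term decomposition and gives $V\leq\tfrac{6L^2}{\lambda^2}V+3\gamma_g^2$. Both arguments share the same self-bounding structure (the non-mechanical step you correctly identify), but yours buys strictly better constants: $V\leq\tfrac{3\lambda^2}{\lambda^2-6L^2}\gamma_g^2$, which dominates $\tfrac{16\lambda^2}{\lambda^2-48L^2}\gamma_g^2$ for every $\lambda\geq 7L$ (the required inequality reduces to $13\lambda^2\geq -48L^2$), so the lemma as stated follows; moreover your version only needs $\lambda>\sqrt{6}L$ rather than $\lambda^2>48L^2$, which would slightly relax the hypothesis $\lambda\geq 7L$ and propagate a smaller $\gamma_c^2$ into Theorem \ref{thm:persafl-me}. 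One cosmetic quibble: in your closing discussion you equate $\lVert\nabla F^{(c)}_i(w)\rVert^2$ with $\lVert\nabla f_i(w)\rVert^2$, whereas in fact $\nabla F^{(c)}_i(w)=\nabla f_i(\hat{\theta}_i(w))$; this does not affect your point that centering at $w$ leaves uncontrolled gradient-norm terms, nor any step of the actual proof.
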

Lemma \ref{lem:bounded-heterogeneity-me} provides a bound on population diversity of \mtext{ME} as a factor of $\gamma_g^2$. Similar to what we explained so far, for $\lambda\to\infty$, the heterogeneity bound turns into $\gamma_g^2$.

\begin{remark}
In the analysis for Theorem \ref{thm:persafl-me}, we consider bounded population diversity as in Assumption \ref{assump:bounded-heterogeneity}, average bounded diversity. \cite{dinh2020personalized}[Assumption 3] and \cite{wang2021field}[6.1.1  Assumptions and Preliminaries, (vii)] consider a slightly stronger version of this assumption, namely uniformly ``bounded heterogeneity'' which is defined as follows:
\begin{align}\label{eq:uniformly-bounded-heterogeneity}
        \max_{i\in[n]}\sup_{w\in\bbR^d}\lVert\nabla f_i(w) - \nabla f(w)\rVert^2\leq \gamma_g^2.
\end{align}
Under the modified assumption in \eqref{eq:uniformly-bounded-heterogeneity}, we can improve  $\gamma_c^2\coloneqq \frac{16\lambda^2}{\lambda^2{-}8L^2}\gamma_g^2$.
\end{remark}

Now, we present our convergence result of \mtext{PersA-FL-ME} under Assumption \ref{assump:staleness}-\ref{assump:bounded-heterogeneity}
\begin{theorem}[\mtext{PersA-FL-ME}]\label{thm:persafl-me}
Let Assumptions \ref{assump:staleness}-\ref{assump:bounded-heterogeneity} hold, $\lambda\geq 7L$, $\beta=1$, and \mbox{$\eta=\frac{1}{Q\sqrt{L_c T}}$}. Then, the following property holds for the joint iterates of Algorithms \ref{alg:server} {\normalfont\&} \ref{alg:client} under \textcolor{seagreen}{Option C} on Problem \eqref{eq:persafl-me}: for any timestep \mbox{$T\geq 288 L_c(Q{+}7)(\tau{+}1)^2$} at the server
\begin{align*}
\frac{1}{T}\sum\limits_{t=0}^{T{-}1}\bbE\left\lVert\nabla F^{(c)}(w^{t})\right\rVert^2
&\leq \frac{4\sqrt{L_c}\left(F^{(c)}(w^{t})- f^\star\right)}{\sqrt{T}} + \frac{8\sqrt{L_c}\left(\sigma_c^2 + \gamma_c^2\right)}{\sqrt{T}}\\
&+\frac{144 L_c(1{+}Q)(\tau^2{+}1)\left(\sigma_c^2 +\gamma_g^2\right)}{T} + \frac{4Q\lambda^2\nu^2}{(\lambda{-}L)^2}.
\end{align*}
\end{theorem}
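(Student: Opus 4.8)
The plan is to obtain Theorem~\ref{thm:persafl-me} as a direct instance of the unified asynchronous analysis underlying Theorem~\ref{thm:afl}. The key observation is that under \textcolor{seagreen}{Option C} the local update $w_{i,q+1}\gets w_{i,q}-\eta\lambda(w_{i,q}-\tilde\theta_i(w_{i,q}))$ is exactly a stochastic gradient step $w_{i,q+1}=w_{i,q}-\eta\nabla\tilde F^{(c)}_i(w_{i,q},\mcD_{i,q})$ on the surrogate objective $F^{(c)}$, so with $\beta=1$ the server iterate satisfies $w^{t+1}=w^{t}-\eta\sum_{q=0}^{Q-1}\nabla\tilde F^{(c)}_{i_t}(w_{i_t,q},\mcD_{i_t,q})$ with the local iterates of client $i_t$ initialized at the stale model $w^{\Omega(t)}$. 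Everything the AFL argument needs is supplied by the three preceding lemmas specialized to $F^{(c)}$: Lemma~\ref{lem:smoothness-me} gives $L_c$-smoothness, Lemma~\ref{lem:bounded-variance-me} gives the bias bound $\mu_c=\frac{\lambda}{\lambda-L}\nu$ and the second-moment bound $\sigma_c^2$ of the estimator $\nabla\tilde F^{(c)}_i$, and Lemma~\ref{lem:bounded-heterogeneity-me} (using $\lambda\ge 7L$) bounds the population diversity of the $\nabla F^{(c)}_i$; together with $f^\star\le\min_w F^{(c)}(w)$ noted after Assumption~\ref{assump:smoothness}, these replace, respectively, $L$, the (zero) bias, $\sigma_g^2/b$, and $\gamma_g^2$ in the vanilla analysis.

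Concretely I would start from the $L_c$-smoothness descent inequality for $F^{(c)}$,
\begin{align*}
F^{(c)}(w^{t+1})\;\le\; F^{(c)}(w^{t})-\eta\Big\langle\nabla F^{(c)}(w^{t}),\,\textstyle\sum_{q=0}^{Q-1}\nabla\tilde F^{(c)}_{i_t}(w_{i_t,q},\mcD_{i_t,q})\Big\rangle+\tfrac{L_c\eta^2}{2}\Big\|\textstyle\sum_{q=0}^{Q-1}\nabla\tilde F^{(c)}_{i_t}(w_{i_t,q},\mcD_{i_t,q})\Big\|^2,
\end{align*}
take conditional expectations using $i_t\sim\mathrm{Uniform}([n])$ (so the client average reconstructs $\nabla F^{(c)}$) together with the bias/variance bounds of Lemma~\ref{lem:bounded-variance-me}, and then split the inner product by Young's inequality along three axes: (i) $w_{i_t,q}$ versus $w^{\Omega(t)}$, the local client drift; (ii) $w^{\Omega(t)}$ versus $w^{t}$, the staleness drift over at most $\tau$ server rounds by Assumption~\ref{assump:staleness}; and (iii) the estimator bias, which leaves a residual of order $\eta Q\mu_c^2$ and is the origin of the non-vanishing floor $\tfrac{4Q\lambda^2\nu^2}{(\lambda-L)^2}$. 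The step where removing the bounded-gradient assumption bites is bounding the two drifts: rather than a constant $G$, I would use $\|\nabla F^{(c)}_i(w_{i,q})\|^2\le 2L_c^2\|w_{i,q}-w^{\Omega(t)}\|^2+4\gamma_c^2+4\|\nabla F^{(c)}(w^{\Omega(t)})\|^2$ from $L_c$-smoothness and Lemma~\ref{lem:bounded-heterogeneity-me} to obtain a self-bounding recursion, yielding $\sum_{q}\bbE\|w_{i,q}-w^{\Omega(t)}\|^2\lesssim\eta^2Q^2(\|\nabla F^{(c)}(w^{\Omega(t)})\|^2+\gamma_c^2+\sigma_c^2+\mu_c^2)$ once $\eta L_c Q$ is small; the staleness drift $\bbE\|w^{t}-w^{\Omega(t)}\|^2\lesssim\tau\eta^2Q\sum_{s=\Omega(t)}^{t-1}\sum_q\bbE\|\nabla\tilde F^{(c)}_{i_s}(w_{i_s,q},\cdot)\|^2$ is reduced the same way to gradient norms at the intermediate servers, and $\|\nabla F^{(c)}(w^{\Omega(t)})\|^2\le 2\|\nabla F^{(c)}(w^{t})\|^2+2L_c^2\|w^{t}-w^{\Omega(t)}\|^2$ closes the loop.

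Summing over $t=0,\dots,T-1$, telescoping $F^{(c)}(w^0)-F^{(c)}(w^T)\le F^{(c)}(w^0)-f^\star$, and collecting the drift/staleness contributions, the coefficient multiplying the self-referential term $\sum_{t}\bbE\|\nabla F^{(c)}(w^{t})\|^2$ on the right is $\mcO(\eta^3L_c^2Q^3(\tau+1)^2)$ up to the lower-order $\mcO(\eta^2 L_c Q^2)$ descent contribution, so the hypotheses $T\ge 288L_c(Q+7)(\tau+1)^2$ and $\eta=\frac1{Q\sqrt{L_cT}}$ force it below $\tfrac{\eta Q}{2}$ and it can be absorbed into the left-hand side; dividing by $\eta QT$ and substituting $\eta$ then produces exactly the four advertised terms — the $\mcO(1/\sqrt T)$ optimization term $4\sqrt{L_c}(F^{(c)}(w^0)-f^\star)/\sqrt T$, the $\mcO((\sigma_c^2+\gamma_c^2)/\sqrt T)$ noise/heterogeneity term, the $\mcO((\tau^2+1)(\sigma_c^2+\gamma_g^2)/T)$ staleness term, and the bias floor. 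The main obstacle is precisely this last bookkeeping stage: carrying the coupled local-drift/staleness-drift/gradient-norm recursion with explicit constants so that the self-referential gradient term is cleanly absorbed — this is genuinely more delicate than the classical $G$-bounded argument and is where the constants $288$ and $144$ get pinned down; once that master inequality is in place, inserting the ME-specific constants $L_c,\mu_c,\sigma_c^2,\gamma_c^2$ from Lemmas~\ref{lem:smoothness-me}--\ref{lem:bounded-heterogeneity-me} is routine.
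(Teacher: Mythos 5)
Your proposal follows essentially the same route as the paper's proof: an $L_c$-smoothness descent inequality for $F^{(c)}$, the bias handled separately via Lemma \ref{lem:bounded-variance-me} to produce the $\eta Q\mu_c^2$ floor, and the local-drift and staleness-drift terms controlled by the same self-bounding recursion (in terms of $\sigma_c^2$, $\gamma_c^2$, and $\lVert\nabla F^{(c)}(w^{\Omega(\cdot)})\rVert^2$ rather than a bounded-gradient constant) that the paper imports from its \mtext{AFL} analysis, before absorbing the self-referential gradient sum using the condition on $T$. The decomposition, the role of each of Lemmas \ref{lem:smoothness-me}--\ref{lem:bounded-heterogeneity-me}, and the origin of each of the four terms in the final bound all coincide with the paper's argument.
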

Theorem \ref{thm:persafl-me} proposes a convergence rate of \mbox{$\mcO\left(\frac{1}{\sqrt{T}}\right) + \mcO\left(\frac{\tau^2}{T}\right) + \mcO\left(\frac{\lambda^2\nu^2}{(\lambda{-}L)^2}\right)$} for \mtext{PersA-FL} under \mtext{ME} formulation. Again, under the exact same reasoning as Lemma \ref{lem:bounded-variance-me}, we know that $\nu=0$ when $\lambda \to \infty$, thus the convergence rate simply reduces to \mtext{FedAsync} with no personalization. \cu{Moreover, let us compare the convergence result in Theorem \ref{thm:persafl-me} with the rate of \mtext{pFedMe} \cite{dinh2020personalized} in Table \ref{tab:comparison}. By comparing the last terms in both rates, $\mcO\Big(\frac{\lambda^2\nu^2}{(\lambda{-}L)^2}\Big)$ and $\mcO\Big(\frac{\lambda^2\left(\frac{1}{b}+\nu^2\right)}{(\lambda{-}L)^2}\Big)$, one can see that the additional term $\frac{1}{b}$ in the convergence rate of \mtext{pFedMe}, implies that even under $\lambda \to \infty$ (i.e., no personalization), the last term does not vanish unless we select large data batches, i.e., $b=\mcO(\varepsilon^{-1})$. Therefore, from the personalization perspective, our analysis provides a tighter bound compared to \mtext{pFedMe}.}

\cu{
In the next corollary, we characterize a choice of $T,\nu$ in Theorem \ref{thm:persafl-me}, given a desired accuracy level $\varepsilon$ for our proposed algorithm in \textcolor{seagreen}{Option C}.

\begin{corollary}[\mtext{PersA-FL-ME} $\varepsilon$-convergence]\label{cor:persafl-me} Suppose the conditions in Theorem \ref{thm:persafl-me} are satisfied. Algorithms \ref{alg:server} {\normalfont\&} \ref{alg:client} under \textcolor{seagreen}{Option C} finds an $\varepsilon$ first-order stationary solution for $F^{(c)}$ in \eqref{eq:persafl-me} by setting $T=\mcO(\varepsilon^{-2})$ and $\nu=\mcO(\varepsilon^{{1}/{2}})$ given a fixed personalization budget $\lambda$.
\end{corollary}

Corollary \ref{cor:persafl-maml} determines the communication complexity and precision of the approximate gradient estimator to achieve an $\varepsilon$ first-order stationary convergence. This means that if we choose $\nu=\mcO(\varepsilon^{{1}/{2}})$, then the inexact optimization solver should compute the solution up to accuracy $O(\nu)$ of the surrogate optimization problem in order to achieve an $\varepsilon$-first order stationary solution. Also, we would like to highlight that this result implies no direct dependence on the batch size (b) for the convergence result of our algorithm with \textcolor{seagreen}{Option C}(cf. \cite{dinh2020personalized}).
}

\section{Numerical Experiments}\label{sec:experiments}

\cu{
In this section, we evaluate the performance of our method in settings with delayed communications. We focus on the aspects of concurrency, speed-up, and accuracy.}

Let us first start by explaining our simulation setup for communications with delays. We consider a set of $n{=}30$ different clients. \cu{Each of the clients has a set of random delays at the upload and download stage. The random delays are generated such that the average upload delay is $4$ to $6$ times higher than the average download delay. Moreover, we assume that the time for communication and aggregation is much larger than the time for local updates, thus we focus on the communication time. First, we show the number of active (not idle) users during the training process under asynchronous communications. The orange curve in Figure \ref{fig:results}\textbf{(a)} shows the proportion of active users, which is up to $80\%$ on average over time. We also plot the average proportion of users sampled in the synchronous updates in the same figure with green color. As Figure \ref{fig:results}\textbf{(a)} demonstrates, the concurrency level for asynchronous methods is considerably higher than that of their synchronous counterparts.}

\cu{We create extremely heterogeneous distributed data from MNIST \cite{lecun1998mnist} and CIFAR-10 \cite{krizhevsky2010convolutional} datasets for the clients, meaning that each client holds a different and skewed distribution of images from various classes. To build the heterogeneous data, we assign each client $i\in[n]$ samples from only $c$ out of $10$ classes of the data.} Over the underlying communication setup and heterogeneous data setting, we compare the speed and accuracy of \mtext{FedAvg}, \mtext{Per-FedAvg}, \mtext{pFedMe}, \cu{\mtext{SCAFFOLD}}\footnote{\cu{Scaffold algorithm has two options. Option I makes another pass over the local data to compute the gradient at the server model. Therefore, we consider SCAFFOLD (Option I), which is more stable in practice \cite{karimireddy2020scaffold}.}}, \mtext{FedAsync}, \mtext{\textcolor{brickred}{PersA-FL-MAML}}, \mtext{\textcolor{seagreen}{PersA-FL-ME}}, where the first \cu{four} methods are synchronous and the rest are asynchronous. For MNIST and CIFAR-10, we consider convolutional networks \cite{krizhevsky2010convolutional} followed by fully connected layers with pooling and dropout as well as cross-entropy loss. \cu{Details on the experimental setups can be found in Appendix\ref{app:exp-setup}.}

Figure \ref{fig:results} \textbf{(b)} and Figure \ref{fig:results} \textbf{(c)} compares the performance and convergence speed of our methods (\mtext{\textcolor{brickred}{PersA-FL-MAML}} \& \mtext{\textcolor{seagreen}{PersA-FL-ME}}) \cu{with the other five algorithms respectively on heterogeneous MNIST and CIFAR-10 datasets}. \cu{\emph{We would like to emphasize that in these two figures, each point on each curve represents the accuracy of the corresponding method after local fine-tuning with the same personalization budget as personalized personalized algorithm.} In other words, similar to the four personalized methods, \mtext{Per-FedAvg}, \mtext{pFedMe}, \mtext{\textcolor{brickred}{PersA-FL-MAML}}, and \mtext{\textcolor{seagreen}{PersA-FL-ME}}, we consider same amount of fine-tuning budget for the three non-personalized methods. \mtext{FedAvg}, \mtext{SCAFFOLD}, and \mtext{FedAsync}. As shown in Figure \ref{fig:results}, our methods outperform the other methods within a fixed communication time. Moreover, the \mtext{ME} loss function results in a more stable and efficient performance compared to \mtext{MAML}.}

\begin{figure}
    \centering
    \includegraphics[width=0.304\linewidth]{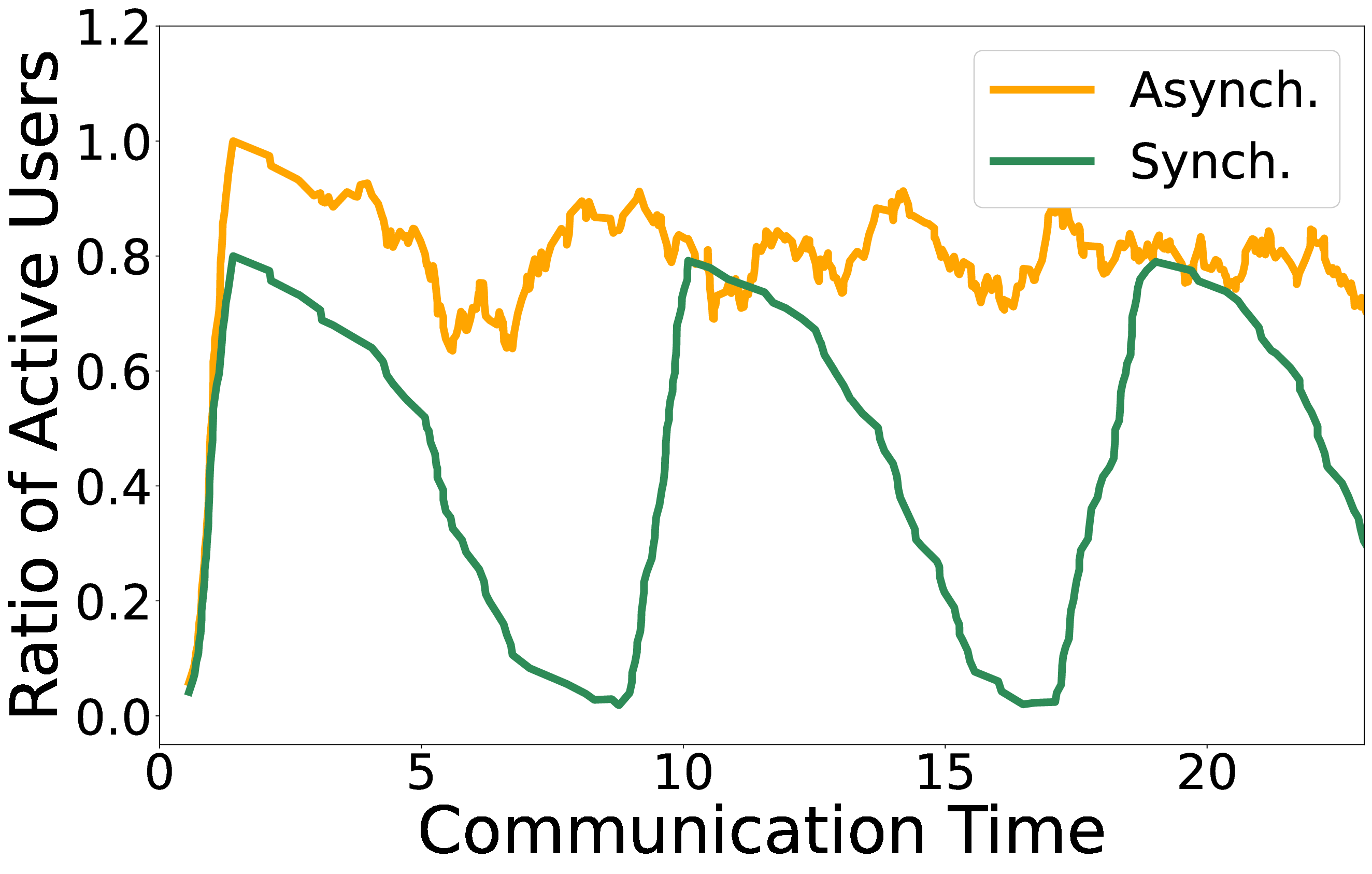}
    \includegraphics[width=0.32\linewidth]{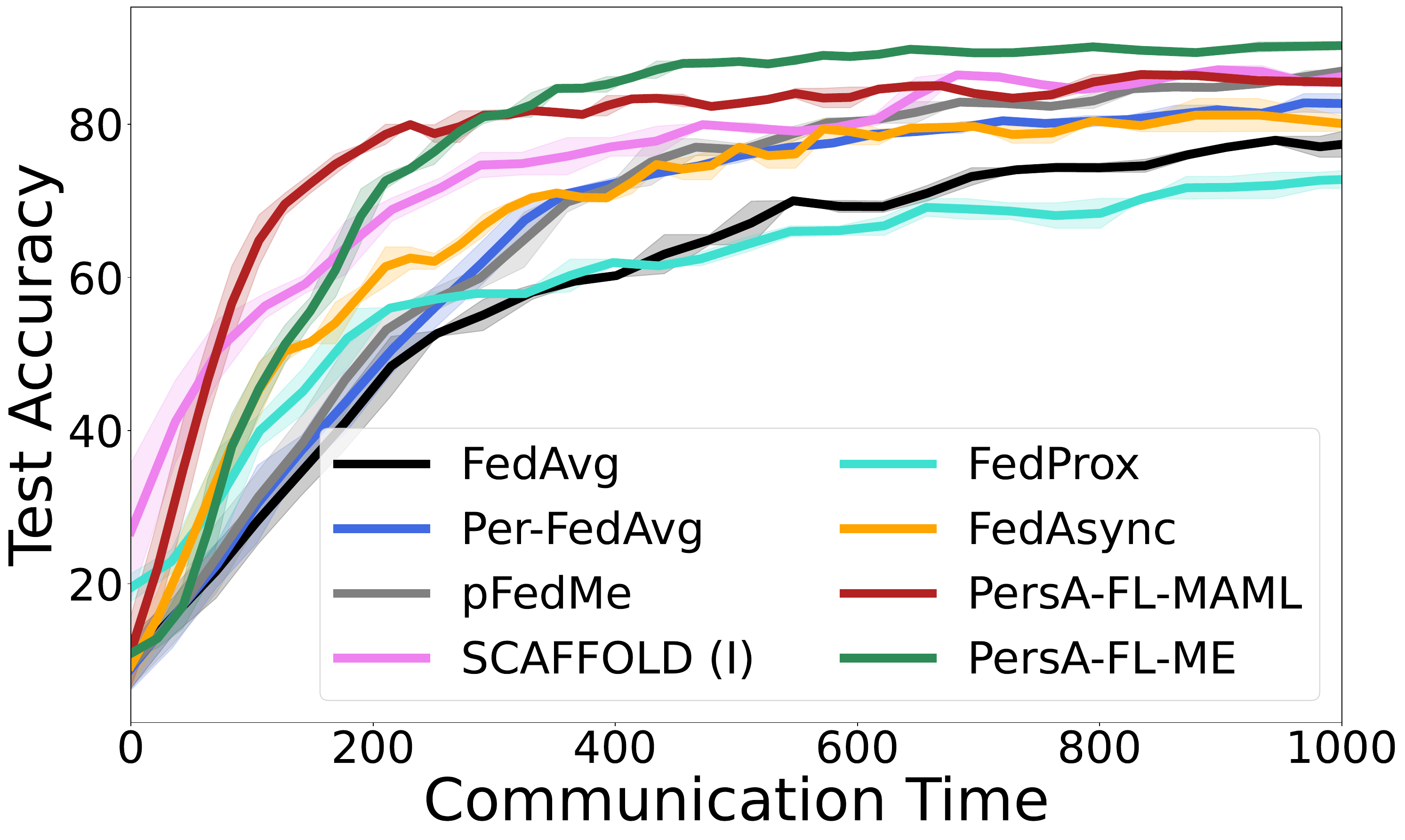}
    \includegraphics[width=0.324\linewidth]{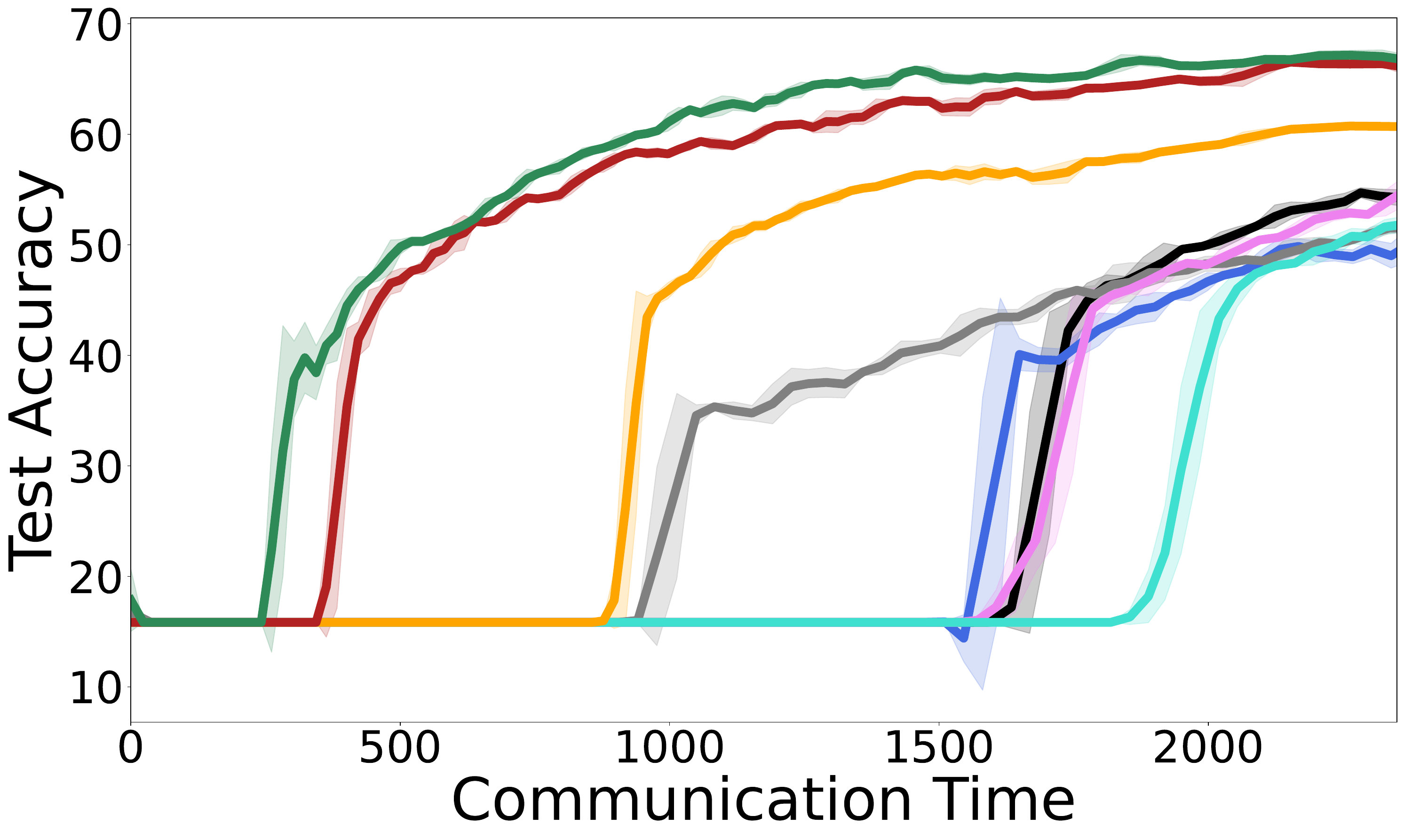}
    
    \textbf{(a) Concurrency}\hspace{4.5em} \textbf{(b) MNIST} \hspace{5.5em} \textbf{(c) CIFAR-10}
    \caption{The impact of heterogeneity and communication delays on concurrency, convergence speed, and performance of multiple \mtext{FL}-based algorithms. The underlying setup of this experiment consists of $n{=}30$ clients, $Q{=}10$ local updates, and each client has a random upload and download delay at each round. \textbf{(a)} A comparison between the ratio of active users for synchronous and asynchronous updates over the course of training. \textbf{(b)} Comparison between the test accuracy of \mtext{FedAvg}, \mtext{Per-FedAvg}, \mtext{pFedMe}, \cu{\mtext{SCAFFOLD}, \mtext{FedProx}}, \mtext{FedAsync}, \mtext{PersA-FL-MAML}, and \mtext{PersA-FL-ME} on MNIST data with heterogeneous distribution. \textbf{(c)} Test accuracy of the mentioned methods on CIFAR-10 data with synthetic heterogeneity within a limited fixed time.}
    \label{fig:results}
\end{figure}

\section{Conclusion}\label{sec:conclusion}

This work studied the personalized federated learning problem for the heterogeneous data setting under asynchronous communications with the server. We considered the Model-Agnostic Meta-Learning (\mtext{MAML}) and Moreau Envelope (\mtext{ME}) formulations to account for personalization. We proposed the \mtext{PersA-FL} algorithm to solve this problem under stale updates. We showed the convergence rate of our method for smooth non-convex functions asynchronous federated learning, and personalized federated learning under the two personalization formulations, i.e., \mtext{MAML} and \mtext{ME}. Particularly, for asynchronous federated learning and personalized federated learning under Moreau Envelope costs, we presented a proof technique that does not require a boundedness assumption on the gradient norm. We finally show numerical results that illustrate the benefits of our proposed method in terms of accuracy and scalability. The studies of generalization and communication efficiency will be left to future research. Moreover, the extensions of our method to the buffered aggregation and decentralized setups remain for future studies.

\clearpage
\bibliographystyle{plainnat}
\bibliography{ref}

\clearpage
\appendix
\section{Asynchronous Federated Learning}\label{app:afl-proof}

\begin{proof}[Proof of Theorem \ref{thm:afl}]
First, we present a set of useful inequalities we will use in the proof. For any set of $m$ vectors \mbox{$\{w_i\}_{i{=}1}^m$} such that \mbox{$w_i\in\bbR^{d}$}, and a constant \mbox{$\alpha>0$}, the following properties hold: for all \mbox{$i,j\in[m]$:}
\begingroup
\allowdisplaybreaks
\begin{subequations}\label{eq:gen-ineq}
\begin{align}
    \lVert w_i + w_j \rVert^2 &\leq (1{+}\alpha)\lVert w_i\rVert^2 + (1{+}\alpha^{-1})\lVert w_j\rVert^2,\label{eq:gen-ineq-1}
    \\\lVert w_i + w_j \rVert &\leq \lVert w_i\rVert + \lVert w_j\rVert,\label{eq:gen-ineq-2}
    \\2\langle w_i, w_j\rangle &\leq \alpha\lVert w_i\rVert^2 + \alpha^{-1}\lVert w_j\rVert^2,\label{eq:gen-ineq-3}
    \\\left\lVert\sum\limits_{i=1}^m  w_i\right\rVert^2 &\leq m \left(\sum\limits_{i=1}^m \lVert w_i\rVert^2\right)\label{eq:gen-ineq-4}
    .
\end{align}
\end{subequations}
\endgroup

Now, let us rewrite the update rule of the joint iterates in Algorithms \ref{alg:server} \& \ref{alg:client} \mbox{\textcolor{royalblue}{Option A}} at time $t$ as follows:
\begin{itemize}
    \item Client update:
    \begin{align}
        w_{i,0}^t &= w^t,\\
        w_{i,q{+}1}^t &= w_{i,q}^t - \eta \nabla \tilde{f}_i(w_{i,q}^t, \mcD_{i,q}^t),
    \end{align}
    \item Server update:
    \begin{align}
         w^{t{+}1} = w^{t} - \beta \Delta_{i_t} = w^{t} - \eta\beta \sum\limits_{q=0}^{Q{-}1} \nabla \tilde{f}_{i_t}\left(w_{i_t,q}^{\Omega(t)},\mcD_{i,q}^{\Omega(t)}\right).
    \end{align}
\end{itemize}
For simplicity, we denote $\tilde{\nabla}f_{i}\left(w\right) = \nabla \tilde{f}_{i}\left(w, \mcD_i\right)$. Therefore, at round $t$, the server updates its parameter by receiving $\Delta_{i_t}$ from some client $i_t\in[n]$, as follows:
\begin{align}\label{eq:afl-update-rule-delay}
    w^{t{+}1} = w^{t} - \eta\beta \sum\limits_{q=0}^{Q{-}1} \tilde{\nabla} f_{i_t}\left(w_{i_t,q}^{\Omega(t)}\right).
\end{align}
Moreover, 
Due to Assumption \ref{assump:smoothness}, we can infer that $f$ is $L$-smooth, thus
\begin{align}\label{eq:afl-main-0}
    f\left(w^{t{+}1}\right) \overset{\eqref{eq:smoothness}}{\leq} f(w^{t}) -\eta\beta\underbrace{\left\langle \nabla f(w^{t}),  \sum\limits_{q=0}^{Q{-}1} \tilde{\nabla} f_{i_t}\left(w_{i_t,q}^{\Omega(t)}\right) \right\rangle}_{=: S_{a_1}} +\frac{L\eta^2\beta^2}{2}\underbrace{\left\lVert \sum\limits_{q=0}^{Q{-}1} \tilde{\nabla} f_{i_t}\left(w_{i_t,q}^{\Omega(t)}\right)\right\rVert^2}_{=: S_{a_2}}
\end{align}
First, we provide a lower bound on term $S_{a_1}$ in \eqref{eq:afl-main-0}. Prior to show the bound, let us denote \mbox{$\tilde{g}_i^t=\sum\limits_{q{=}0}^{Q{-}1} \tilde{\nabla}f_i\left(w_{i,q}^{\Omega(t)}\right)$}, \mbox{$\tilde{g}^t=\frac{1}{n}\sum\limits_{i{=}1}^n\tilde{g}_i^t$}, \mbox{$g_i^t=\sum\limits_{q{=}0}^{Q{-}1}\nabla f_i\left(w_{i,q}^{\Omega(t)}\right)$}, and \mbox{$g^t=\frac{1}{n}\sum\limits_{i{=}1}^n g_i^t$}. Therefore,
\begin{align}\label{eq:afl-s1}
\bbE\left[S_{a_1}\right] &= \bbE\left[\bbE_{i_t}\left\langle\nabla f(w^{t}), \tilde{g}_{i_t}^t\right\rangle\right]\\
&= \bbE\left[\left\langle \nabla f(w^{t}),  \frac{1}{n}\sum\limits_{i=1}^{n}\tilde{g}_{i}^t \right\rangle\right]\\ 
&= \bbE\left\langle \nabla f(w^{t}),  \frac{1}{n}\sum\limits_{i=1}^{n} \bbE_{p_i}\left[\tilde{g}_{i}^t\right]\right\rangle = \bbE\left\langle \nabla f(w^{t}),  \frac{1}{n}\sum\limits_{i=1}^{n} g_i^t\right\rangle\\
&= Q\,\bbE\left\lVert\nabla f(w^{t})\right\rVert^2 + \bbE\left\langle \nabla f(w^{t}),  g^t-Q\nabla f(w^{t})\right\rangle\\
\overset{\eqref{eq:gen-ineq-3}}&{\geq} Q\,\bbE\left\lVert\nabla f(w^{t})\right\rVert^2 - \frac{1}{2}\bbE\left\lVert \nabla f(w^{t}) \right\rVert^2 - \frac{1}{2}\bbE\left\lVert g^t-Q\nabla f(w^{t})\right\rVert^2\\
&= \frac{2Q{-}1}{2}\,\bbE\left\lVert\nabla f(w^{t})\right\rVert^2 - \frac{1}{2}\bbE\left\lVert g^t-Q\nabla f(w^{t})\right\rVert^2.\label{eq:afl-s1-end}
\end{align}
Moreover, the following holds for $S_{a_2}$ in \eqref{eq:afl-main-0}:
\begin{align}\label{eq:afl-s2}
\bbE_{i_t}\left[S_{a_2}\right] = \bbE_{i_t}\left\lVert\sum\limits_{q=0}^{Q{-}1} \tilde{\nabla} f_{i_t}\left(w_{i_t,q}^{\Omega(t)}\right)\right\rVert^2 = \frac{1}{n} \sum\limits_{i=1}^n \left\lVert \tilde{g}_i^t\right\rVert^2.
\end{align}
Now, according to \eqref{eq:afl-main-0}, \eqref{eq:afl-s1-end}, and \eqref{eq:afl-s2}, we have:
\begin{align}\label{eq:afl-main-1}
    \bbE f\left(w^{t{+}1}\right) &\leq \bbE f(w^{t}) -\frac{\eta\beta(2Q{-}1)}{2}\bbE\left\lVert\nabla f(w^{t})\right\rVert^2\\
    &+ \frac{\eta\beta}{2}\bbE\underbrace{\left\lVert g^t-Q\nabla f(w^{t})\right\rVert^2}_{=: S_{a_3}} +\frac{L\eta^2\beta^2}{2n}\bbE\underbrace{\left[\sum\limits_{i=1}^n \left\lVert \tilde{g}_i^t\right\rVert^2\right]}_{=: S_{a_4}},\label{eq:afl-main-1-end}
\end{align}
where we bound $S_{a_3}$ and $S_{a_4}$ as follows:
\begin{align}\label{eq:afl-s3}
S_{a_3} = \left\lVert \frac{1}{n}\sum\limits_{i=1}^n\left(g_i^t-Q\nabla f_i(w^{t})\right)\right\rVert^2
\overset{\eqref{eq:gen-ineq-4}}&{\leq} \frac{1}{n}\sum\limits_{i=1}^n \left\lVert g_i^t-Q\nabla f_i(w^{t})\right\rVert^2\\
&= \frac{1}{n}\sum\limits_{i=1}^n \left\lVert \sum\limits_{q{=}0}^{Q{-}1}\nabla f_i\left(w_{i,q}^{\Omega(t)}\right)-Q\nabla f_i(w^{t})\right\rVert^2\\
&= \frac{1}{n}\sum\limits_{i=1}^n \left\lVert \sum\limits_{q{=}0}^{Q{-}1}\left[\nabla f_i\left(w_{i,q}^{\Omega(t)}\right)-\nabla f_i(w^{t})\right]\right\rVert^2\\
\overset{\eqref{eq:gen-ineq-4}}&{\leq}\frac{Q}{n}\sum\limits_{i=1}^n\sum\limits_{q=0}^{Q{-}1} \left\lVert\nabla f_i\left(w_{i,q}^{\Omega(t)}\right)-\nabla f_i(w^{t})\right\rVert^2,\label{eq:afl-s3-end}
\end{align}

\begin{alignat}{2}\label{eq:afl-s4}
S_{a_4} &= \,\,\,\,\,\sum\limits_{i=1}^n \Big\lVert \sum\limits_{q{=}0}^{Q{-}1}\tilde{\nabla}&&f_i\left(w_{i,q}^{\Omega(t)}\right)\Big\rVert^2\\
\overset{\eqref{eq:gen-ineq-4}}&{\leq} \,\,\,\,Q\sum\limits_{i=1}^n\sum\limits_{q{=}0}^{Q{-}1} &&\Big\lVert\tilde{\nabla}f_i\left(w_{i,q}^{\Omega(t)}\right)\Big\rVert^2\\
&= \,\,\,\,\,\,\,\,Q\sum\limits_{i=1}^n\sum\limits_{q{=}0}^{Q{-}1} && \Big\lVert\tilde{\nabla}f_i\left(w_{i,q}^{\Omega(t)}\right) - \nabla f_i\left(w_{i,q}^{\Omega(t)}\right) + \nabla f_i\left(w_{i,q}^{\Omega(t)}\right) - \nabla f_i\left(w^t\right)\nonumber\\
&&& + \nabla f_i\left(w^t\right) - \nabla f\left(w^t\right) + \nabla f\left(w^t\right) \Big\rVert^2\\
\overset{\eqref{eq:gen-ineq-4}}&{\leq} 4Q\sum\limits_{i=1}^n\sum\limits_{q{=}0}^{Q{-}1}&&\Bigg[\Big\lVert\tilde{\nabla}f_i\left(w_{i,q}^{\Omega(t)}\right) - \nabla f_i\left(w_{i,q}^{\Omega(t)}\right)\Big\rVert^2 + \Big\lVert\nabla f_i\left(w_{i,q}^{\Omega(t)}\right) - \nabla f_i\left(w^t\right)\Big\rVert^2\nonumber\\
&&& + \Big\lVert\nabla f_i\left(w^t\right) - \nabla f\left(w^t\right)\Big\rVert^2 + \Big\lVert\nabla f\left(w^t\right) \Big\rVert^2 \Bigg]\label{eq:afl-s4-end} \Rightarrow
\end{alignat}

\begin{align}\label{eq:afl-s4-2}
\bbE[S_{a_4}] \overset{\eqref{eq:afl-s4-end}}&{\leq} 4Q\sum\limits_{i=1}^n\sum\limits_{q{=}0}^{Q{-}1}\bbE_{p_i}\Bigg[\Big\lVert\tilde{\nabla}f_i\left(w_{i,q}^{\Omega(t)}\right) - \nabla f_i\left(w_{i,q}^{\Omega(t)}\right)\Big\rVert^2\Bigg]\\
&+ 4Q\sum\limits_{i=1}^n\sum\limits_{q{=}0}^{Q{-}1}\bbE\Big\lVert\nabla f_i\left(w_{i,q}^{\Omega(t)}\right) - \nabla f_i\left(w^t\right)\Big\rVert^2\\
& + 4Q\sum\limits_{i=1}^n\sum\limits_{q{=}0}^{Q{-}1}\bbE\Big\lVert\nabla f_i\left(w^t\right) - \nabla f\left(w^t\right)\Big\rVert^2\\
&+ 4Q\sum\limits_{i=1}^n\sum\limits_{q{=}0}^{Q{-}1}\bbE\Big\lVert\nabla f\left(w^t\right) \Big\rVert^2\\
\overset{\eqref{eq:bounded-variance-batch}, \eqref{eq:bounded-heterogeneity}}&{\leq} 4nQ^2\left[\sigma_a^2 + \gamma_g^2 + \bbE\Big\lVert\nabla f\left(w^t\right)\Big\rVert^2 \right]\label{eq:afl-s4-2-end-1}\\
&+ 4Q\sum\limits_{i=1}^n\sum\limits_{q{=}0}^{Q{-}1}\bbE\left\lVert\nabla f_i\left(w_{i,q}^{\Omega(t)}\right)-\nabla f_i(w^{t})\right\rVert^2.\label{eq:afl-s4-2-end}
\end{align}
Therefore, due to \eqref{eq:afl-main-1}-\eqref{eq:afl-s3-end} and \eqref{eq:afl-s4-2-end-1}-\eqref{eq:afl-s4-2-end}, we have
\begin{align}\label{eq:afl-main-2}
    \bbE f\left(w^{t{+}1}\right) &\leq \bbE f(w^{t}) -\left[\frac{\eta\beta(2Q{-}1)}{2} - 2\eta^2L\beta^2Q^2\right]\bbE\left\lVert\nabla f(w^{t})\right\rVert^2\\
    &+ \left[\frac{\eta\beta Q}{2n}+\frac{2\eta^2\beta^2QL}{n}\right] \sum\limits_{i=1}^n\sum\limits_{q{=}0}^{Q{-}1}\bbE\left\lVert\nabla f_i\left(w_{i,q}^{\Omega(t)}\right)-\nabla f_i(w^{t})\right\rVert^2\\
    &+ 2\eta^2L\beta^2Q^2\sigma_a^2 + 2\eta^2L\beta^2Q^2\gamma_g^2\\
    \overset{\eqref{eq:smoothness}}&{\leq} \bbE f(w^{t}) -\left[\frac{\eta\beta(2Q{-}1)}{2} - 2\eta^2L\beta^2Q^2\right]\bbE\left\lVert\nabla f(w^{t})\right\rVert^2\\
    &+ \frac{\eta\beta Q L^2\left(1{+}4\eta\beta L\right)}{2n} \bbE\underbrace{\sum\limits_{i=1}^n\sum\limits_{q{=}0}^{Q{-}1}\left\lVert w_{i,q}^{\Omega(t)}-w^{t}\right\rVert^2}_{=: S_{a_5}}\\
    &+ 2\eta^2L\beta^2Q^2\sigma_a^2 + 2\eta^2L\beta^2Q^2\gamma_g^2.\label{eq:afl-main-2-end}
\end{align}
Thus, it is sufficient to bound the following expression in $S_{a_5}$:
\begin{align}\label{eq:afl-s5}
\Big\lVert w^{t}&-w_{i,q}^{\Omega(t)}\Big\rVert^2\\
&= \left\lVert \sum\limits_{s{=}\Omega(t)}^{t{-}1}\left(w^{s{+}1}-w^{s}\right)+w^{\Omega(t)}-w_{i,q}^{\Omega(t)}\right\rVert^2\\
\overset{\eqref{eq:gen-ineq-1}}&{\leq}\left(1{+}\frac{1}{\beta^2}\right)\left\lVert \sum\limits_{s{=}\Omega(t)}^{t{-}1}\left(w^{s{+}1}-w^{s}\right)\right\rVert^2+\left(1{+}\beta^2\right)\left\lVert w^{\Omega(t)}-w_{i,q}^{\Omega(t)}\right\rVert^2\\
\overset{\eqref{eq:gen-ineq-4}}&{\leq}\left(t{-}\Omega(t)\right)\left(1{+}\frac{1}{\beta^2}\right)\left[\sum\limits_{s{=}\Omega(t)}^{t{-}1}\left\lVert w^{s{+}1}-w^{s}\right\rVert^2\right]+\left(1{+}\beta^2\right)\left\lVert w^{\Omega(t)}-w_{i,q}^{\Omega(t)}\right\rVert^2\\
\overset{\eqref{eq:staleness}}&{\leq}\tau\left(1{+}\frac{1}{\beta^2}\right)\underbrace{\left[\sum\limits_{s{=}t{-}\tau}^{t{-}1}\left\lVert w^{s{+}1}-w^{s}\right\rVert^2\right]}_{=: S_{a_7}}+\left(1{+}\beta^2\right)\underbrace{\left\lVert w^{\Omega(t)}-w_{i,q}^{\Omega(t)}\right\rVert^2}_{=: S_{a_6}}.\label{eq:afl-s5-end}
\end{align}

Now, we show a bound on the evolution of local updates at an arbitrary round $s\geq 0$, i.e., the distance between $w_{i,q}^s$ and $w^s$:
\begin{align}\label{eq:afl-s6}
\bbE\left\lVert w_{i,q}^s - w^s\right\rVert^2 &= \bbE\left\lVert w_{i,q{-}1}^s - \eta\tilde{\nabla}f_i\left(w_{i,q{-}1}^s\right) - w^s\right\rVert^2\\
&= \bbE\Big\lVert w_{i,q{-}1}^s - w^s - \eta\nabla f\left(w^s\right)\nonumber\\ &\quad-\eta\tilde{\nabla}f_i\left(w_{i,q{-}1}^s\right) + \eta\nabla f_i\left(w_{i,q{-}1}^s\right)\nonumber\\
&\quad- \eta\nabla f_i\left(w_{i,q{-}1}^s\right) + \eta\nabla f_i\left(w^s\right)\nonumber\\
&\quad- \eta\nabla f_i\left(w^s\right) + \eta\nabla f\left(w^s\right) \Big\rVert^2\\
\overset{\eqref{eq:gen-ineq-1}}&{\leq} \left(1{+}\frac{1}{2Q}\right)\bbE\Big\lVert w_{i,q{-}1}^s - w^s\Big\rVert^2\\
&+\hspace{1em}4(1{+}2Q)\eta^2\bbE\Bigg[\Big\lVert\tilde{\nabla}f_i\left(w_{i,q{-}1}^s\right) - \nabla f_i\left(w_{i,q{-}1}^s\right)\Big\rVert^2\nonumber\\
&\hspace{7.1em}+ \Big\lVert\nabla f_i\left(w_{i,q{-}1}^s\right) - \nabla f_i\left(w^s\right)\Big\rVert^2\nonumber\\
&\hspace{7.1em}+ \Big\lVert\nabla f_i\left(w^s\right) - \nabla f\left(w^s\right) \Big\rVert^2\nonumber\\
&\hspace{7.1em}+ \Big\lVert\nabla f\left(w^s\right)\Big\rVert^2
\Bigg]\\
\overset{\eqref{eq:smoothness}, \eqref{eq:bounded-variance-batch}}&{\leq} \left(1{+}\frac{1}{2Q}\right)\bbE\Big\lVert w_{i,q{-}1}^s - w^s\Big\rVert^2\\
&+\hspace{1em}4(1{+}2Q)\eta^2\Bigg[\sigma_a^2 + L^2\,\bbE\Big\lVert w_{i,q{-}1}^s - w^s\Big\rVert^2\nonumber\\
&\hspace{7.1em}+ \bbE\Big\lVert\nabla f_i\left(w^s\right) - \nabla f\left(w^s\right) \Big\rVert^2 + \bbE\Big\lVert\nabla f\left(w^s\right)\Big\rVert^2
\Bigg].\label{eq:afl-s6-end}
\end{align}
Note that we can select stepsize $\eta\leq\frac{1}{4L(Q{+}1)}$ such that
\begin{align}\label{eq:stepsize-1}
    \eta^2 \leq \frac{1}{16L^2(Q{+}1)^2} \leq \frac{1}{8L^2Q(2Q{+}1)} \Rightarrow 4(1{+}2Q)\eta^2L^2 \leq \frac{1}{2Q},
\end{align}
therefore, due to \eqref{eq:afl-s6}-\eqref{eq:stepsize-1}, we have:
\begin{align}\label{eq:afl-s6-2}
&\underbrace{\bbE\left\lVert w_{i,q}^s - w^s\right\rVert^2}_{:= P_{i,q}^s} \leq \underbrace{\left(1{+}\frac{1}{Q}\right)\bbE\Big\lVert w_{i,q{-}1}^s - w^s\Big\rVert^2}_{:= P_{i,q{-}1}^s}\\
&\qquad\quad+\underbrace{4(1{+}2Q)\eta^2\Bigg[\sigma_a^2 + \bbE\Big\lVert\nabla f_i\left(w^s\right) - \nabla f\left(w^s\right) \Big\rVert^2 + \bbE\Big\lVert\nabla f\left(w^s\right)\Big\rVert^2
\Bigg]}_{:= R_i^s} \Rightarrow\\
P_{i,q}^s &\leq \left(1{+}\frac{1}{Q}\right) P_{i,q{-}1}^s + R_i^s\\
&= R_i^s\sum\limits_{k=0}^{q{-}1} \left(1{+}\frac{1}{Q}\right)^k \leq R_i^s\sum\limits_{k=0}^{Q{-}1} \left(1{+}\frac{1}{Q}\right)^k\\
&= R_i^s \frac{\left(1{+}\frac{1}{Q}\right)^Q-1}{\left(1{+}\frac{1}{Q}\right)-1} = R_i^s Q\left[\left(1{+}\frac{1}{Q}\right)^Q-1\right]\leq R_i^s Q(e - 1) \leq 2R_i^s Q \Rightarrow\\
\bbE&\left\lVert w_{i,q}^s - w^s\right\rVert^2 \leq 8Q(1{+}2Q)\eta^2\Bigg[\sigma_a^2 + \bbE\Big\lVert\nabla f_i\left(w^s\right) - \nabla f\left(w^s\right) \Big\rVert^2 + \bbE\Big\lVert\nabla f\left(w^s\right)\Big\rVert^2
\Bigg],\label{eq:afl-s6-2-end}
\end{align}
for all $q\in[Q]$ and $s\geq 0$. We now will use \eqref{eq:afl-s6}-\eqref{eq:afl-s6-2-end} to provide a bound on the expression in $S_{a_7}$. Again, note that according to Algorithms \ref{alg:server} \& \ref{alg:client}, we have:
\begin{align}\label{eq:afl-s7}
w^{s{+}1} &= w^s - \beta\left(w_{i_s,0}^{\Omega(s)} - w_{i_s,Q}^{\Omega(s)}\right)\Rightarrow\\
\bbE\left\lVert w^{s{+}1} - w^s\right\rVert^2 &\leq \beta^2\,\bbE\left\lVert w_{i_s,Q}^{\Omega(s)} - w^{\Omega(s)}\right\rVert^2\\
&= \beta^2\,\bbE\left[\bbE_{i_s} \left\lVert w_{i_s,Q}^{\Omega(s)} - w^{\Omega(s)}\right\rVert^2\right]\\
&= \frac{\beta^2}{n}\sum\limits_{j{=}1}^n\bbE\left\lVert w_{j,Q}^{\Omega(s)} - w^{\Omega(s)}\right\rVert^2\\
&\leq 8Q(1{+}2Q)\eta^2\beta^2\Bigg[\sigma_a^2 + \gamma_g^2 + \bbE\Big\lVert\nabla f\left(w^{\Omega(s)}\right)\Big\rVert^2
\Bigg].\label{eq:afl-s7-end}
\end{align}
Let $\phi=8\eta^2Q^2(1{+}2Q)(1{+}\beta^2)$, then according to \eqref{eq:afl-s5}-\eqref{eq:afl-s7-end}
\begin{align}\label{eq:afl-s5-2}
\frac{1}{n\phi}\bbE[S_{a_5}]&\leq\tau\left[\sum\limits_{s{=}t{-}\tau}^{t{-}1}\left\lVert w^{s{+}1}-w^{s}\right\rVert^2\right]+\frac{1}{nQ}\sum_{i=1}^{n}\sum_{q=0}^{Q{-}1}\left\lVert w^{\Omega(t)}-w_{i,q}^{\Omega(t)}\right\rVert^2.\\
&\leq \tau^2 \sigma_a^2 + \tau^2 \gamma_g^2 + \tau\sum_{s=t{-}\tau}^{t{-}1} \bbE\Big\lVert\nabla f\left(w^{\Omega(s)}\right)\Big\rVert^2\\
& + \sigma_a^2 + \gamma_g^2 + \bbE\Big\lVert\nabla f\left(w^{\Omega(t)}\right)\Big\rVert^2\\
&\leq (\tau^2{+}1) \left[\sigma_a^2 + \gamma_g^2\right] + \bbE\Big\lVert\nabla f\left(w^{\Omega(t)}\right)\Big\rVert^2 + \tau \sum_{s=t{-}\tau}^{t{-}1} \bbE\Big\lVert\nabla f\left(w^{\Omega(s)}\right)\Big\rVert^2\\
&\leq (\tau^2{+}1) \left[\sigma_a^2 + \gamma_g^2\right] + \tau \sum_{s=t{-}\tau}^{t} \bbE\Big\lVert\nabla f\left(w^{\Omega(s)}\right)\Big\rVert^2.
\label{eq:afl-s5-2-end}
\end{align}
Thus, by combining \eqref{eq:afl-main-2}-\eqref{eq:afl-s5-2-end}, we have the following inequality:
\begin{align}\label{eq:afl-main-3}
    \bbE f\left(w^{t{+}1}\right) &\leq \bbE f(w^{t}) -\eta\beta\left[\frac{2Q{-}1}{2} - 2\eta\beta LQ^2\right]\bbE\left\lVert\nabla f(w^{t})\right\rVert^2\\
    &+4\eta^3\beta L^2Q^3 (1{+}2Q)(1{+}\beta^2)(1{+}4\eta\beta L)\,\tau\left[\sum\limits_{s{=}t{-}\tau}^{t}\bbE\left\lVert\nabla f\left(w^{\Omega(s)}\right)\right\rVert^2\right]\\
    &+ 4\eta^3\beta L^2Q^3 (1{+}2Q) (\tau^2{+}1)(1{+}\beta^2)(1{+}4\eta\beta L)
    \left(\sigma_a^2 +\gamma_g^2\right)\\
    &+ 2\eta^2\beta^2 LQ^2 \left(\sigma_a^2 +\gamma_g^2\right), \label{eq:afl-main-3-end}
\end{align}
where by rearranging, we obtain the following inequality:
\begin{align}\label{eq:afl-main-4}
    &\left(1 - 4\eta\beta LQ\right)\bbE\left\lVert\nabla f(w^{t})\right\rVert^2\\
    & \qquad -8\eta^2 L^2Q^2 (1{+}2Q)(1{+}\beta^2)(1{+}4\eta\beta L)\,\tau\left[\sum\limits_{s{=}t{-}\tau}^{t}\bbE\left\lVert\nabla f\left(w^{\Omega(s)}\right)\right\rVert^2\right]\\
    &\leq \frac{2\left[\bbE f(w^{t})-\bbE f\left(w^{t{+}1}\right)\right]}{\eta\beta Q}\\
    &\qquad +8\eta^2 L^2Q^2 (1{+}2Q) (\tau^2{+}1)(1{+}\beta^2)(1{+}4\eta\beta L)
    \left(\sigma_a^2 +\gamma_g^2\right)\\
    &\qquad +4\eta\beta L Q \left(\sigma_a^2 +\gamma_g^2\right). \label{eq:afl-main-4-end}
\end{align}
Now, note that for any $s\geq 0$,\footnote{For $s<\tau$, the right-hand side of the inequality consists of fewer terms.}
\begin{align}\label{eq:afl-s5-3}
\bbE\Big\lVert\nabla f\left(w^{\Omega(s)}\right)\Big\rVert^2 \leq \sum_{u=s-\tau}^{s} \bbE\Big\lVert\nabla f\left(w^{u}\right)\Big\rVert^2,
\end{align}
Therefore, we add up the inequality in \eqref{eq:afl-main-4}-\eqref{eq:afl-main-4-end}, for $t=0,1,\dots T{-}1$, and obtain
\begin{align}\label{eq:afl-main-5}
    \big[1 -4\eta\beta LQ -8\eta^2 L^2&Q^2 (1{+}2Q)\tau(\tau{+}1)^2(1{+}\beta^2)(1{+}4\eta\beta L) \big]\frac{\sum\limits_{t=0}^{T{-}1}\bbE\left\lVert\nabla f(w^{t})\right\rVert^2}{T}\\
    \leq \frac{2\left[f(w^{0})-\bbE f(w^{T})\right]}{\eta\beta QT} &+ 4\eta\beta L Q \left(\sigma_a^2 +\gamma_g^2\right)\\
    &+ 8\eta^2 L^2Q^2 (1{+}2Q) (\tau^2{+}1)(1{+}\beta^2)(1{+}4\eta\beta L)
    \left(\sigma_a^2 +\gamma_g^2\right). \label{eq:afl-main-5-end}
\end{align}
Thus, by setting $\beta=1$ and $\eta=\frac{1}{Q\sqrt{LT}}$, we can simply see that
\begin{align}
1 -4\eta\beta LQ -8\eta^2 L^2&Q^2 (1{+}2Q)\tau(\tau{+}1)^2(1{+}\beta^2)(1{+}4\eta\beta L) \geq \frac{1}{2},\\
\eta \leq \frac{1}{4L(Q{+}1)},
\end{align}
for $T\geq 160L(Q{+}7)(\tau{+}1)^3$. Therefore, we can conclude the final result in Theorem \ref{thm:afl} under this choice of $\eta$ and $\beta$.
\end{proof}

\clearpage

\section{Personalized Asynchronous Federated Learning: \mtext{MAML}}\label{app:persafl-maml}

\begin{proof}[Proof of Theorem \ref{thm:persafl-maml}]
To simplify \eqref{eq:persafl-maml-stoch-grad}, we denote $\tilde{\nabla}F^{(b)}_{i}\left(w\right) = \nabla \tilde{F}^{(b)}_i\left(w, \mcD_i'', \mcD_i', \mcD_i\right)$. Then similar to \eqref{eq:afl-update-rule-delay}, at round $t$, the update rule for \textcolor{brickred}{Option B} can be written as follows:
\begin{align}\label{eq:persafl-maml-update-rule-delay}
    w^{t{+}1} = w^{t} - \eta\beta \sum\limits_{q=0}^{Q{-}1} \tilde{\nabla} F^{(b)}_{i_t}\left(w_{i_t,q}^{\Omega(t)}\right).
\end{align}
According to Lemma \ref{lem:smoothness-maml},
\begin{align}\label{eq:persafl-maml-main-0}
    F^{(b)}\left(w^{t{+}1}\right) \overset{\eqref{eq:smoothness}}{\leq} 
    F^{(b)}(w^{t}) &-\eta\beta\underbrace{\left\langle \nabla F^{(b)}(w^{t}),  \sum\limits_{q=0}^{Q{-}1} \tilde{\nabla} F^{(b)}_{i_t}\left(w_{i_t,q}^{\Omega(t)}\right) \right\rangle}_{=: S_{b_1}}\nonumber\\
    &+\frac{L_b\eta^2\beta^2}{2}\underbrace{\left\lVert \sum\limits_{q=0}^{Q{-}1} \tilde{\nabla} F^{(b)}_{i_t}\left(w_{i_t,q}^{\Omega(t)}\right)\right\rVert^2}_{=: S_{b_2}}
\end{align}
Similar to the inequalities in \eqref{eq:afl-s1}-\eqref{eq:afl-s1-end}, we first show a lower bound on term $S_{b_1}$ in \eqref{eq:persafl-maml-main-0}. We also denote \mbox{$\tilde{g}_i^t=\sum\limits_{q{=}0}^{Q{-}1} \tilde{\nabla}F^{(b)}_i\left(w_{i,q}^{\Omega(t)}\right)$}, \mbox{$\tilde{g}^t=\frac{1}{n}\sum\limits_{i{=}1}^n\tilde{g}_i^t$}, \mbox{$g_i^t=\sum\limits_{q{=}0}^{Q{-}1}\nabla F^{(b)}_i\left(w_{i,q}^{\Omega(t)}\right)$}, and \mbox{$g^t=\frac{1}{n}\sum\limits_{i{=}1}^n g_i^t$} for simplicity. Note that $\tilde{g}_i^t$ and $g_i^t$ are the stochastic and deterministic gradients of the personalized cost functions $F^{(b)}_i$ at stale parameters. According to these definitions, we have
\begin{align}\label{eq:persafl-maml-unbiased-acc}
    \left\lVert\bbE\left[\tilde{g}^t-g^t\right]\right\rVert
    \overset{\eqref{eq:gen-ineq-2}}&{\leq} \frac{1}{n}\sum_{i=1}^{n} \left\lVert\bbE\left[\tilde{g}_i^t-g_i^t\right]\right\rVert\\
    \overset{\eqref{eq:gen-ineq-2}}&{\leq} \frac{1}{n}\sum_{i=1}^{n}\sum_{q=0}^{Q{-}1} \left\lVert\bbE\left[\tilde{\nabla}F^{(b)}_i\left(w_{i,q}^{\Omega(t)}\right)-\nabla F^{(b)}_i\left(w_{i,q}^{\Omega(t)}\right)\right]\right\rVert\\
    \overset{\eqref{eq:unbiased-mean-maml}}&{\leq} \frac{1}{n}\sum_{i=1}^{n}\sum_{q=0}^{Q{-}1} \mu_b = Q\mu_b, \label{eq:persafl-maml-unbiased-acc-end}
\end{align}
where as we discussed in \eqref{eq:unbiased-mean-maml}, $\mu_b$ measures the unbiasedness in the estimation of the personalized stochastic gradient.

\begin{align}\label{eq:persafl-maml-s1}
\bbE\left[S_{b_1}\right] &= \bbE\left[\bbE_{i_t}\left\langle\nabla F^{(b)}(w^{t}), \tilde{g}_{i_t}^t\right\rangle\right]\\
&= \bbE\left[\left\langle \nabla F^{(b)}(w^{t}),  \frac{1}{n}\sum\limits_{i=1}^{n}\tilde{g}_{i}^t \right\rangle\right] = \bbE\left[\left\langle \nabla F^{(b)}(w^{t}), \tilde{g}^t\right\rangle\right]\\
&= Q\,\bbE\left\lVert\nabla F^{(b)}(w^{t})\right\rVert^2 + \bbE\left\langle \nabla F^{(b)}(w^{t}),  \bbE\left[\tilde{g}^t-g^t\right]\right\rangle\\
&+ \bbE\left\langle \nabla F^{(b)}(w^{t}),  g^t-Q\nabla F^{(b)}(w^{t})\right\rangle\\
\overset{\eqref{eq:gen-ineq-3}}&{\geq} Q\,\bbE\left\lVert\nabla F^{(b)}(w^{t})\right\rVert^2 - \frac{1}{4}\bbE\left\lVert \nabla F^{(b)}(w^{t}) \right\rVert^2 - \left\lVert \bbE\left[g^t-\tilde{g}^t\right]\right\rVert^2\\
& - \frac{1}{4}\bbE\left\lVert \nabla F^{(b)}(w^{t}) \right\rVert^2 - \bbE\left\lVert g^t-Q\nabla F^{(b)}(w^{t})\right\rVert^2\\
\overset{\eqref{eq:persafl-maml-unbiased-acc}-\eqref{eq:persafl-maml-unbiased-acc-end}}&{\geq} \frac{2Q{-}1}{2}\bbE\left\lVert\nabla F^{(b)}(w^{t})\right\rVert^2 - \bbE\left\lVert g^t-Q\nabla F^{(b)}(w^{t})\right\rVert^2-Q^2\mu_b^2,\label{eq:persafl-maml-s1-end}
\end{align}
and
\begin{align}\label{eq:persafl-maml-s2}
\bbE_{i_t}\left[S_{b_2}\right] = \bbE_{i_t}\left\lVert\sum\limits_{q=0}^{Q{-}1} \tilde{\nabla} F^{(b)}_{i_t}\left(w_{i_t,q}^{\Omega(t)}\right)\right\rVert^2 = \frac{1}{n} \sum\limits_{i=1}^n \left\lVert \tilde{g}_i^t\right\rVert^2.
\end{align}
Therefore, according to \eqref{eq:persafl-maml-main-0}, \eqref{eq:persafl-maml-s1-end}, and \eqref{eq:persafl-maml-s2},
\begin{align}\label{eq:persafl-maml-main-1}
    \bbE F^{(b)}\left(w^{t{+}1}\right) &\leq \bbE F^{(b)}(w^{t}) -\frac{\eta\beta(2Q{-}1)}{2}\bbE\left\lVert\nabla F^{(b)}(w^{t})\right\rVert^2 + \eta\beta Q^2\mu_b^2\\
    &+ \eta\beta\,\bbE\underbrace{\left\lVert g^t-Q\nabla F^{(b)}(w^{t})\right\rVert^2}_{=: S_{b_3}} +\frac{L_b\eta^2\beta^2}{2n}\bbE\underbrace{\sum\limits_{i=1}^n \left\lVert \tilde{g}_i^t\right\rVert^2}_{=: S_{b_4}},\label{eq:persafl-maml-main-1-end}
\end{align}
where similar to \eqref{eq:afl-s3}-\eqref{eq:afl-s3-end}, we can bound $S_{b_3}$ as follows:
\begin{align}\label{eq:persafl-maml-s3}
S_{b_3} \leq \frac{Q}{n}\sum\limits_{i=1}^n\sum\limits_{q=0}^{Q{-}1} \left\lVert\nabla F^{(b)}_i\left(w_{i,q}^{\Omega(t)}\right)-\nabla F^{(b)}_i(w^{t})\right\rVert^2.
\end{align}
Moreover, we can show an upper bound on $S_{b_4}$ akin to \eqref{eq:persafl-maml-s4}-\eqref{eq:persafl-maml-s4-end}:
\begin{alignat}{2}\label{eq:persafl-maml-s4}
S_{b_4} &= \,\,\sum\limits_{i=1}^n \Big\lVert \sum\limits_{q{=}0}^{Q{-}1}\tilde{\nabla}&&F^{(b)}_i\left(w_{i,q}^{\Omega(t)}\right)\Big\rVert^2\\
\overset{\eqref{eq:gen-ineq-4}}&{\leq} Q\sum\limits_{i=1}^n\sum\limits_{q{=}0}^{Q{-}1} &&\Big\lVert\tilde{\nabla}F^{(b)}_i\left(w_{i,q}^{\Omega(t)}\right)\Big\rVert^2\\
&= \,\,\,Q\sum\limits_{i=1}^n\sum\limits_{q{=}0}^{Q{-}1} && \Big\lVert\tilde{\nabla}F^{(b)}_i\left(w_{i,q}^{\Omega(t)}\right) - \nabla F^{(b)}_i\left(w_{i,q}^{\Omega(t)}\right) + \nabla F^{(b)}_i\left(w_{i,q}^{\Omega(t)}\right) - \nabla F^{(b)}_i\left(w^t\right)\nonumber\\
&&& + \nabla F^{(b)}_i\left(w^t\right) - \nabla F^{(b)}\left(w^t\right) + \nabla F^{(b)}\left(w^t\right) \Big\rVert^2\\
\overset{\eqref{eq:gen-ineq-4}}&{\leq} 4Q\sum\limits_{i=1}^n\sum\limits_{q{=}0}^{Q{-}1}&&\Bigg[\Big\lVert\tilde{\nabla}F^{(b)}_i\left(w_{i,q}^{\Omega(t)}\right) - \nabla F^{(b)}_i\left(w_{i,q}^{\Omega(t)}\right)\Big\rVert^2\nonumber\\
&&&+ \Big\lVert\nabla F^{(b)}_i\left(w_{i,q}^{\Omega(t)}\right) - \nabla F^{(b)}_i\left(w^t\right)\Big\rVert^2\nonumber\\
&&& + \Big\lVert\nabla F^{(b)}_i\left(w^t\right) - \nabla F^{(b)}\left(w^t\right)\Big\rVert^2\nonumber\\
&&&+ \Big\lVert\nabla F^{(b)}\left(w^t\right) \Big\rVert^2 \Bigg]\label{eq:persafl-maml-s4-end} \Rightarrow
\end{alignat}

\begin{align}\label{eq:persafl-maml-s4-2}
\bbE[S_{b_4}] \overset{\eqref{eq:persafl-maml-s4-end}}&{\leq} 4Q\sum\limits_{i=1}^n\sum\limits_{q{=}0}^{Q{-}1}\bbE_{p_i}\Bigg[\Big\lVert\tilde{\nabla}F^{(b)}_i\left(w_{i,q}^{\Omega(t)}\right) - \nabla F^{(b)}_i\left(w_{i,q}^{\Omega(t)}\right)\Big\rVert^2\Bigg]\\
&+ 4Q\sum\limits_{i=1}^n\sum\limits_{q{=}0}^{Q{-}1}\bbE\Big\lVert\nabla F^{(b)}_i\left(w_{i,q}^{\Omega(t)}\right) - \nabla F^{(b)}_i\left(w^t\right)\Big\rVert^2\\
& + 4Q\sum\limits_{i=1}^n\sum\limits_{q{=}0}^{Q{-}1}\bbE\Big\lVert\nabla F^{(b)}_i\left(w^t\right) - \nabla F^{(b)}\left(w^t\right)\Big\rVert^2\\
&+ 4Q\sum\limits_{i=1}^n\sum\limits_{q{=}0}^{Q{-}1}\bbE\Big\lVert\nabla F^{(b)}\left(w^t\right) \Big\rVert^2\\
\overset{\eqref{eq:bounded-variance-batch}, \eqref{eq:bounded-heterogeneity}}&{\leq} 4nQ^2\left[\sigma_b^2 + \gamma_b^2 + \bbE\Big\lVert\nabla F^{(b)}\left(w^t\right)\Big\rVert^2 \right]\label{eq:persafl-maml-s4-2-end-1}\\
&+ 4Q\sum\limits_{i=1}^n\sum\limits_{q{=}0}^{Q{-}1}\bbE\left\lVert\nabla F^{(b)}_i\left(w_{i,q}^{\Omega(t)}\right)-\nabla F^{(b)}_i(w^{t})\right\rVert^2.\label{eq:persafl-maml-s4-2-end}
\end{align}
Therefore, due to \eqref{eq:persafl-maml-main-1}-\eqref{eq:persafl-maml-s3} and \eqref{eq:persafl-maml-s4-2-end-1}-\eqref{eq:persafl-maml-s4-2-end}, we have
\begin{align}\label{eq:persafl-maml-main-2}
    \bbE F^{(b)}\left(w^{t{+}1}\right) &\leq \bbE F^{(b)}(w^{t}) -\left[\frac{\eta\beta(2Q{-}1)}{2} - 2\eta^2L_b\beta^2Q^2\right]\bbE\left\lVert\nabla F^{(b)}(w^{t})\right\rVert^2\\
    &+ \left[\frac{\eta\beta Q}{n}+\frac{2\eta^2\beta^2 Q L_b}{n}\right] \sum\limits_{i=1}^n\sum\limits_{q{=}0}^{Q{-}1}\bbE\left\lVert\nabla F^{(b)}_i\left(w_{i,q}^{\Omega(t)}\right)-\nabla F^{(b)}_i(w^{t})\right\rVert^2\\
    &+ \eta\beta Q^2\mu_b^2 + 2\eta^2L_b\beta^2Q^2\sigma_b^2 + 2\eta^2L_b\beta^2Q^2\gamma_b^2\\
    \overset{\eqref{eq:smoothness}}&{\leq} \bbE F^{(b)}(w^{t}) -\left[\frac{\eta\beta(2Q{-}1)}{2} - 2\eta^2L_b\beta^2Q^2\right]\bbE\left\lVert\nabla F^{(b)}(w^{t})\right\rVert^2\label{eq:persafl-maml-main-2-s5}\\
    &+ \frac{\eta\beta Q L_b^2\left(1{+}2\eta\beta L_b\right)}{n} \sum\limits_{i=1}^n\sum\limits_{q{=}0}^{Q{-}1}\bbE\underbrace{\left\lVert w_{i,q}^{\Omega(t)}-w^{t}\right\rVert^2}_{=: S_{b_5}}\\
    &+ \eta\beta Q^2\mu_b^2 + 2\eta^2L_b\beta^2Q^2\left(\sigma_b^2 +\gamma_b^2\right).\label{eq:persafl-maml-main-2-end}
\end{align}
Now, we provide an upper bound on $S_{b_5}$ in \eqref{eq:persafl-maml-main-2-s5} as follows:
\begin{align}\label{eq:persafl-maml-s5}
S_{b_5} = \left\lVert w^{t}-w_{i,q}^{\Omega(t)}\right\rVert^2 &= \left\lVert w^{t} - w^{\Omega(t)} + w^{\Omega(t)} - w_{i,q}^{\Omega(t)} \right\rVert^2\\
\overset{\eqref{eq:gen-ineq-1}}&{\leq} 2\underbrace{\left\lVert w^{\Omega(t)} - w_{i,q}^{\Omega(t)} \right\rVert^2}_{S_{b_6}} + 2\underbrace{\left\lVert w^{t} - w^{\Omega(t)} \right\rVert^2}_{S_{b_7}},\label{eq:persafl-maml-s5-end}
\end{align}
where the first term determines the evolution of local updates and the second term considers the effect of asynchronous updates. Therefore, using Lemma \ref{lem:bounded-gradient-maml}, we have
\begin{align}\label{eq:persafl-maml-s6}
\bbE[S_{b_6}] &= \bbE\left\lVert w^{\Omega(t)} - w_{i,q}^{\Omega(t)}\right\rVert^2\\
&= \bbE\left\lVert w_{i,0}^{\Omega(t)} - w_{i,q}^{\Omega(t)}\right\rVert^2\\
\overset{\eqref{eq:persafl-maml-update-rule-delay}}&{=} \eta^2 \bbE\left\lVert \sum_{r{=}0}^{q{-}1} \tilde{\nabla}F^{(b)}_i\left(w_{i,r}^{\Omega(t)}\right) \right\rVert^2\\
\overset{\eqref{eq:gen-ineq-4}}&{\leq} \eta^2q \sum_{r{=}0}^{q{-}1} \bbE\left\lVert\tilde{\nabla}F^{(b)}_i\left(w_{i,r}^{\Omega(t)}\right)\right\rVert^2\\
\overset{\eqref{eq:gen-ineq-4}}&{\leq} 2\eta^2q \sum_{r{=}0}^{q{-}1} \left[ \bbE\left\lVert\tilde{\nabla}F^{(b)}_i\left(w_{i,r}^{\Omega(t)}\right) - \nabla F^{(b)}_i\left(w_{i,r}^{\Omega(t)}\right)\right\rVert^2 + \bbE\left\lVert\nabla F^{(b)}_i\left(w_{i,r}^{\Omega(t)}\right)\right\rVert^2\right]\\
\overset{\eqref{eq:bounded-variance-maml},\eqref{eq:bounded-gradient-maml}}&{\leq} 2\eta^2q \sum_{r{=}0}^{q{-}1} \left(G_b^2 + \sigma_b^2\right) = 2\eta^2q^2\left(G_b^2 + \sigma_b^2\right),\label{eq:persafl-maml-s6-end}
\end{align}

\begin{align}\label{eq:persafl-maml-s7}
\bbE[S_{b_7}] &= \bbE\left\lVert w^{t} -  w^{\Omega(t)}\right\rVert^2\\
&= \bbE\left\lVert \sum_{s{=}\Omega(t)}^{t{-}1} \left(w^{s{+}1} - w^{s}\right)\right\rVert^2\\
\overset{\text{Alg. } \ref{alg:server}, \ref{alg:client}}&{=} \eta^2\beta^2 \bbE\left\lVert \sum_{s{=}\Omega(t)}^{t{-}1}\sum_{q{=}0}^{Q{-}1} \tilde{\nabla}F^{(b)}_{i_s}\left(w_{{i_s},q}^{\Omega(s)}\right) \right\rVert^2\\
\overset{\eqref{eq:gen-ineq-4}}&{\leq} \eta^2\beta^2Q\, (t{-}\Omega(t)) \sum_{s{=}\Omega(t)}^{t{-}1}\sum_{q{=}0}^{Q{-}1} \bbE\left\lVert \tilde{\nabla}F^{(b)}_{i_s}\left(w_{{i_s},q}^{\Omega(s)}\right) \right\rVert^2\\
\overset{\eqref{eq:staleness}}&{\leq} 2\eta^2\beta^2Q\,\tau \sum_{s{=}t{-}\tau}^{t{-}1}\sum_{q{=}0}^{Q{-}1} \Bigg[ \bbE\left\lVert\tilde{\nabla}F^{(b)}_{i_s}\left(w_{{i_s},q}^{\Omega(s)}\right)- \nabla F^{(b)}_{i_s}\left(w_{{i_s},q}^{\Omega(s)}\right)\right\rVert^2\nonumber\\
& \qquad\qquad\qquad\qquad\quad\,\,\,+ \bbE\left\lVert\nabla F^{(b)}_{i_s}\left(w_{{i_s},q}^{\Omega(s)}\right) \right\rVert^2\Bigg]\\
\overset{\eqref{eq:bounded-variance-maml},\eqref{eq:bounded-gradient-maml}}&{\leq} 2\eta^2\beta^2Q\tau^2 \sum_{q{=}0}^{Q{-}1} \left(G_b^2 + \sigma_b^2\right) = 2\eta^2\beta^2Q^2\tau^2\left(G_b^2 + \sigma_b^2\right).\label{eq:persafl-maml-s7-end}
\end{align}
So, according to \eqref{eq:persafl-maml-main-2}-\eqref{eq:persafl-maml-s7-end},
\begin{align}\label{eq:persafl-maml-main-3}
    \bbE F^{(b)}\left(w^{t{+}1}\right) &\leq \bbE F^{(b)}(w^{t}) -\frac{\eta\beta}{2}\left(2Q{-}1 - 4\eta\beta L_bQ^2\right)\bbE\left\lVert\nabla F^{(b)}(w^{t})\right\rVert^2\\
    &+ 4\eta^3\beta Q^4 L_b^2 \left(1{+}2\eta\beta L_b Q\right)\left(G_b^2{+}\sigma_b^2\right)\left(\beta^2\tau^2{+}1\right)\\
    &+ \eta\beta Q^2\mu_b^2 + 2\eta^2\beta^2L_bQ^2\sigma_b^2 + 2\eta^2\beta^2L_bQ^2\gamma_b^2, \label{eq:persafl-maml-main-3-end}
\end{align}
where by adding the terms in \eqref{eq:persafl-maml-main-3}-\eqref{eq:persafl-maml-main-3-end}, for $t=0,1,\dots T{-}1$, and rearranging them, we obtain the following inequality:
\begin{align}\label{eq:persafl-maml-main-4}
    \frac{1 - 4\eta\beta L_b Q}{T}\sum\limits_{t=0}^{T{-}1}\bbE\left\lVert\nabla F^{(b)}(w^{t})\right\rVert^2 &\leq \frac{2\left(F^{(b)}(w^{0})-\bbE F^{(b)}(w^{T})\right)}{\eta\beta QT}\\ 
    &+ 8\eta^2 Q^3 L_b^2 \left(1{+}2\eta\beta L_b Q\right)\left(G_b^2 {+} \sigma_b^2\right)\left(\beta^2\tau^2{+}1\right)\\
    &+ 4\eta\beta L_b Q\left(\sigma_b^2 + \gamma_b^2\right)\\
    &+ 2Q\mu_b^2.
    \label{eq:persafl-maml-main-4-end}
\end{align}
Finally, we can conclude the proof by fixing $\beta=1$ and $\eta\coloneqq \frac{1}{Q\sqrt{L_b T}}$ for $T\geq 64 L_b$, hence $\eta \leq \frac{1}{8\beta L_b Q}$.
\end{proof}


\clearpage
\section{Personalized Asynchronous Federated Learning: \mtext{ME}}\label{app:persafl-me}

We start by showing \eqref{eq:persafl-me-stoch-grad}. According to the definitions in \eqref{eq:persafl-me} and \eqref{eq:persafl-me-argmin}, we have
\begin{align}
    \hat{\theta}_i(w)&= \argmin_{\theta_i \in \bbR^d} \left[f_i(\theta_i) + \frac{\lambda}{2} \norm{\theta_i - w}^2 \right] \Rightarrow \nabla f_i\left(\hat{\theta}_i(w)\right) + \lambda\left[\hat{\theta}_i(w) - w\right] = 0,\label{eq:persafl-me-stationary}\\
    F^{(c)}_i(w) &= f_i\left(\hat{\theta}_i(w)\right) + \frac{\lambda}{2} \left\lVert\hat{\theta}_i(w)-w\right\rVert^2,\label{eq:persafl-me-stationary-end}
\end{align}
therefore,
\begin{align}\label{eq:persafl-me-exact-gradient}
    \nabla F^{(c)}_i(w) \overset{\eqref{eq:persafl-me-stationary-end}}&{=} \frac{\partial\,\hat{\theta}_i(w)}{\partial w} \left[\nabla f_i\left(\hat{\theta}_i(w)\right)\right] + \lambda \left[\frac{\partial\,\hat{\theta}_i(w)}{\partial w}-I\right]\left[\hat{\theta}_i(w)-w\right]\\
    \overset{\eqref{eq:persafl-me-stationary}}&{=} \lambda \frac{\partial\,\hat{\theta}_i(w)}{\partial w}\left[w-\hat{\theta}_i(w)\right] + \lambda \left[\frac{\partial\,\hat{\theta}_i(w)}{\partial w}-I\right]\left[\hat{\theta}_i(w)-w\right]\\
    & = \lambda \left[w-\hat{\theta}_i(w)\right].\label{eq:persafl-me-exact-gradient-end}
\end{align}


Before, presenting the proof of Theorem \ref{thm:persafl-me}, we proceed by providing the proof of Lemmas \ref{lem:smoothness-me}, \ref{lem:bounded-variance-me}, and \ref{lem:bounded-heterogeneity-me}.

\begin{proof}[Proof of Lemma \ref{lem:smoothness-me}]
Let $w,v$ be two arbitrary vectors in $\bbR^d$. Then, we have:
\begin{align}
    \nabla F_i^{(c)}(w) - \nabla F_i^{(c)}(y) \overset{\eqref{eq:persafl-me-exact-gradient-end}}&{=} \lambda \left[w-\hat{\theta}_i(w)\right] - \lambda\left[v-\hat{\theta}_i(v)\right]\\ \overset{\eqref{eq:persafl-me-stationary}}&{=} \nabla f_i\left(\hat{\theta}_i(w)\right) - \nabla f_i\left(\hat{\theta}_i(v)\right)\Rightarrow\label{eq:persafl-me-lem-smooth-1}\\
    \left\lVert\nabla F_i^{(c)}(w) - \nabla F_i^{(c)}(y)\right\rVert &= \left\lVert\nabla f_i\left(\hat{\theta}_i(w)\right) - \nabla f_i\left(\hat{\theta}_i(v)\right)\right\rVert\\
    \overset{\eqref{eq:smoothness}}&{\leq} L\left\lVert \hat{\theta}_i(w) - \hat{\theta}_i(v) \right\rVert\\
    \overset{\eqref{eq:persafl-me-stationary}}&{=} L\left\lVert w - \frac{1}{\lambda}\nabla f_i\left(\hat{\theta}_i(w)\right) - v + \frac{1}{\lambda}\nabla f_i\left(\hat{\theta}_i(v)\right) \right\rVert\\
    &\leq L\left\lVert w-v \right\rVert + \frac{L}{\lambda}\left\lVert \nabla f_i\left(\hat{\theta}_i(w)\right) - \nabla f_i\left(\hat{\theta}_i(v)\right) \right\rVert\\
    &= L\left\lVert w-v \right\rVert + \frac{L}{\lambda}\left\lVert \nabla F_i^{(c)}(w) - \nabla F_i^{(c)}(y) \right\rVert \Rightarrow\\
    \left\lVert\nabla F_i^{(c)}(w) - \nabla F_i^{(c)}(y)\right\rVert &\leq \frac{\lambda L}{\lambda-L}\left\lVert w-v \right\rVert,
\end{align}
which means $F^{(c)}_i$ is $\frac{\lambda L}{\lambda-L}$-smooth. Note that for $\lambda\geq \kappa L$, for some $\kappa>1$,
\begin{align}
    \frac{\lambda L}{\lambda-L} \leq L_c\coloneqq\frac{\lambda}{\kappa-1}
\end{align}
This concludes the statement of Lemma \ref{lem:smoothness-me}.
\end{proof}

\begin{proof}[Proof of Lemma \ref{lem:bounded-variance-me}]
According to Step \ref{ln:client-me-nu} of Algorithm \ref{alg:client}, let us introduce full and stochastic auxiliary cost functions $h_i(\cdot)$ and $\tilde{h}_i(\cdot)$ as follows:
\begin{align}\label{eq:persafl-me-lem-mean-1}
    h_i(\theta_i,w) &= f_i(\theta_i) + \frac{\lambda}{2}\left\lVert\theta_i-w\right\rVert^2,\\
    \tilde{h}_i(\theta_i,w,\mcD) &= \tilde{f}_i(\theta_i,\mcD) + \frac{\lambda}{2}\left\lVert\theta_i-w\right\rVert^2,\label{eq:persafl-me-lem-mean-2}
\end{align}
where due to \eqref{eq:persafl-me-lem-mean-1}, we have
\begin{align}\label{eq:persafl-me-lem-mean-3}
    \nabla\tilde{h}_i(\tilde{\theta}_i(w),w,\mcD) &= \nabla\tilde{f}_i(\tilde{\theta}_i(w),\mcD) + \lambda\left[\tilde{\theta}_i(w)-w\right],
\end{align}
hence, we can show \eqref{eq:unbiased-mean-me} as follows:
\begin{align}
    \Big\lVert\bbE_{p_i}\Big[\nabla\tilde{F}^{(c)}_i&(w,\mcD) - \nabla F^{(c)}_i(w)\Big]\Big\rVert\\
    \overset{\eqref{eq:persafl-me-full-grad},\eqref{eq:persafl-me-stoch-grad}}&{=} \left\lVert\bbE_{p_i}\big[\lambda\hat{\theta}_i(w)-\lambda\tilde{\theta}_i(w)\big]\right\rVert\label{eq:persafl-me-lem-mean-4}\\
    \overset{\eqref{eq:persafl-me-stationary},\eqref{eq:persafl-me-lem-mean-2}}&{=} \Big\lVert\bbE_{p_i}\big[\nabla f_i(\hat{\theta}_i(w))- \nabla\tilde{f}_i(\tilde{\theta}_i(w),\mcD) + \nabla\tilde{h}_i(\tilde{\theta}_i(w),w,\mcD)\big]\Big\rVert\\
    &= \Big\lVert\bbE_{p_i}\big[\nabla f_i(\hat{\theta}_i(w))- \nabla f_i(\tilde{\theta}_i(w))\big] + \bbE_{p_i}\big[\nabla\tilde{h}_i(\tilde{\theta}_i(w),w,\mcD)\big]\Big\rVert\\
    &\leq \Big\lVert\bbE_{p_i}\big[\nabla f_i(\hat{\theta}_i(w))- \nabla f_i(\tilde{\theta}_i(w))\big]\Big\rVert + \nu\\
    \overset{\eqref{eq:smoothness}}&{\leq} L\Big\lVert\bbE_{p_i}\big[\hat{\theta}_i(w)- \tilde{\theta}_i(w)\big]\Big\rVert + \nu\\
    \overset{\eqref{eq:persafl-me-lem-mean-4}}&{=} \frac{L}{\lambda}\left\lVert\bbE_{p_i}\left[\nabla\tilde{F}^{(c)}_i(w,\mcD) - \nabla F^{(c)}_i(w)\right]\right\rVert + \nu \Rightarrow \\
    \Big\lVert\bbE_{p_i}\Big[\nabla\tilde{F}^{(c)}_i&(w,\mcD) - \nabla F^{(c)}_i(w)\Big]\Big\rVert \leq \frac{\lambda}{\lambda-L}\nu.
\end{align}
You can find the proof of \eqref{eq:bounded-variance-me} in \cite{dinh2020personalized}[Appendix A.2].
\end{proof}

\begin{proof}[Proof of Lemma \ref{lem:bounded-heterogeneity-me}]
First, note that we have
\begin{align}
    \frac{1}{n}\sum\limits_{i=1}^{n}\Big\lVert\nabla F^{(c)}_i(w) - \nabla &F^{(c)}(w)\Big\rVert^2\\ \overset{\eqref{eq:persafl-me-exact-gradient-end}}&{=} \frac{1}{n}\sum\limits_{i=1}^{n}\left\lVert \lambda(w-\hat{\theta}_i(w)) - \frac{1}{n}\sum_{j=1}^n \lambda(w-\hat{\theta}_i(w))\right\rVert^2\\
    \overset{\eqref{eq:persafl-me-stationary}}&{=} \frac{1}{n^3}\sum\limits_{i=1}^{n}\left\lVert \sum\limits_{j=1}^{n} \left[\nabla f_i(\hat{\theta}_i(w)) - \nabla f_j(\hat{\theta}_j(w))\right]\right\rVert^2\\
    \overset{\eqref{eq:gen-ineq-4}}&{\leq} \frac{1}{n^2}\sum\limits_{i=1}^{n}\sum\limits_{j=1}^{n}\left\lVert \nabla f_i(\hat{\theta}_i(w)) - \nabla f_j(\hat{\theta}_j(w))\right\rVert^2.\label{eq:persafl-me-lem-hetero-1-end}
\end{align}
So, we simplify the upper bound as follows:
\begin{align}\label{eq:persafl-me-lem-hetero-2}
    \Big\lVert \nabla f_i(\hat{\theta}_i(w)) - \nabla &f_j(\hat{\theta}_j(w))\Big\rVert^2\\
    &= \Big\lVert \nabla f_i(\hat{\theta}_i(w)) - \nabla f_i(\hat{\theta}_j(w)) + \nabla f_i(\hat{\theta}_j(w)) - \nabla f(\hat{\theta}_j(w))\nonumber\\
    &\quad+ \nabla f(\hat{\theta}_j(w)) - \nabla f(\hat{\theta}_i(w)) + \nabla f(\hat{\theta}_i(w)) - \nabla f_j(\hat{\theta}_i(w))\nonumber\\
    &\quad+ \nabla f_j(\hat{\theta}_i(w)) - \nabla f_j(\hat{\theta}_j(w))\Big\rVert^2\\
    \overset{\eqref{eq:gen-ineq-1}}&{\leq} \frac{4}{3}\Big\lVert \nabla f_i(\hat{\theta}_i(w)) - \nabla f_i(\hat{\theta}_j(w)) + \nabla f(\hat{\theta}_j(w)) - \nabla f(\hat{\theta}_i(w)) \nonumber\\
    &\qquad\quad+ \nabla f_j(\hat{\theta}_i(w)) - \nabla f_j(\hat{\theta}_j(w))\Big\rVert^2\\
    &+4\Big\lVert \nabla f(\hat{\theta}_i(w)) - \nabla f_j(\hat{\theta}_i(w)) + \nabla f_i(\hat{\theta}_j(w)) - \nabla f(\hat{\theta}_j(w)) \Big\rVert^2\\
    \overset{\eqref{eq:gen-ineq-4}}&{\leq} 4\Big\lVert \nabla f_i(\hat{\theta}_i(w)) - \nabla f_i(\hat{\theta}_j(w))\Big\rVert^2\label{eq:persafl-me-lem-hetero-2-1}\\
    &+4\Big\lVert \nabla f(\hat{\theta}_j(w)) - \nabla f(\hat{\theta}_i(w))\Big\rVert^2\label{eq:persafl-me-lem-hetero-2-3}\\
    &+4\Big\lVert \nabla f_j(\hat{\theta}_i(w)) - \nabla f_j(\hat{\theta}_j(w))\Big\rVert^2\label{eq:persafl-me-lem-hetero-2-end}\\
    &+8\Big\lVert \nabla f_i(\hat{\theta}_j(w)) - \nabla f(\hat{\theta}_j(w))\Big\rVert^2\label{eq:persafl-me-lem-hetero-2-2}\\
    &+8\Big\lVert \nabla f(\hat{\theta}_i(w)) - \nabla f_j(\hat{\theta}_i(w))\Big\rVert^2\label{eq:persafl-me-lem-hetero-2-4}
\end{align}
Note that we can bound \eqref{eq:persafl-me-lem-hetero-2-2} and \eqref{eq:persafl-me-lem-hetero-2-4} according to Lemma \ref{eq:bounded-heterogeneity}:
\begin{align}\label{eq:persafl-me-lem-hetero-3}
    \frac{1}{n}\sum\limits_{i=1}^{n}\Big\lVert \nabla f_i(\hat{\theta}_j(w)) - \nabla f(\hat{\theta}_j(w))\Big\rVert^2& \overset{\eqref{eq:bounded-heterogeneity}}{\leq} \gamma_g^2,
\end{align}
and also given the fact that function $f(\cdot)$ as well as each function $f_i(\cdot)$ are $L$-smooth, we can bound \eqref{eq:persafl-me-lem-hetero-2-1}, \eqref{eq:persafl-me-lem-hetero-2-3}, and \eqref{eq:persafl-me-lem-hetero-2-end} as follows:
\begin{align}\label{eq:persafl-me-lem-hetero-4}
    \Big\lVert \nabla f_i(\hat{\theta}_i(w)) &- \nabla f_i(\hat{\theta}_j(w))\Big\rVert^2\\
    & \leq L^2\Big\lVert \hat{\theta}_i(w) - \hat{\theta}_j(w)\Big\rVert^2\\
    & = \frac{L^2}{\lambda^2}\Big\lVert \lambda\left[\hat{\theta}_i(w) - w\right] - \lambda\left[\hat{\theta}_j(w) - w\right]\Big\rVert^2\\
    \overset{\eqref{eq:persafl-me-exact-gradient-end}}&{=} \frac{L^2}{\lambda^2}\Big\lVert \nabla F^{(c)}_i(w) - \nabla F^{(c)}_j(w)\Big\rVert^2\\
    \overset{}&{=} \frac{L^2}{\lambda^2}\Big\lVert \nabla F^{(c)}_i(w) - \nabla F^{(c)}(w) + \nabla F^{(c)}(w) - \nabla F^{(c)}_j(w)\Big\rVert^2\\
    \overset{\eqref{eq:gen-ineq-4}}&{\leq} \frac{2L^2}{\lambda^2}\left[\Big\lVert \nabla F^{(c)}_i(w) - \nabla F^{(c)}(w)\Big\rVert^2 + \Big\lVert\nabla F^{(c)}(w) - \nabla F^{(c)}_j(w)\Big\rVert^2\right].\label{eq:persafl-me-lem-hetero-4-end}
\end{align}
Therefore, according to \eqref{eq:persafl-me-lem-hetero-1}-\eqref{eq:persafl-me-lem-hetero-4-end}, we have
\begin{align}\label{eq:persafl-me-lem-hetero-1}
    \frac{1}{n}\sum\limits_{i=1}^{n}\Big\lVert\nabla F^{(c)}_i(w) - \nabla F^{(c)}(w)\Big\rVert^2 &\leq 16\gamma_g^2 + \frac{48L^2}{n\lambda^2} \sum\limits_{i=1}^{n}\Big\lVert\nabla F^{(c)}_i(w) - \nabla &F^{(c)}(w)\Big\rVert^2 \Rightarrow\\
    \frac{1}{n}\sum\limits_{i=1}^{n}\Big\lVert\nabla F^{(c)}_i(w) - \nabla F^{(c)}(w)\Big\rVert^2 &\leq \frac{16\lambda^2\gamma_g^2}{\lambda^2 - 48L^2},
\end{align}
which concludes the proof.
\end{proof}
Now, we are ready to state the proof of Theorem \ref{thm:persafl-me}.
\begin{proof}[Proof of Theorem \ref{thm:persafl-me}]
We write $\tilde{\nabla}F^{(c)}_{i}\left(w\right) = \nabla \tilde{F}^{(c)}_i\left(w, \mcD_i\right)$ to simplify \eqref{eq:persafl-me-stoch-grad}. Then, the update rule for Algorithms \ref{alg:server} \& \ref{alg:client} under \textcolor{seagreen}{Option C} can be written as follows:
\begin{align}\label{eq:persafl-me-update-rule-delay}
    w^{t{+}1} = w^{t} - \eta\beta \sum\limits_{q=0}^{Q{-}1} \tilde{\nabla} F^{(c)}_{i_t}\left(w_{i_t,q}^{\Omega(t)}\right),
\end{align}
where similar to \eqref{eq:persafl-maml-main-0}-\eqref{eq:persafl-maml-main-2-end}, we can show that:
\begin{align}
    \bbE F^{(c)}\left(w^{t{+}1}\right) & \leq \bbE F^{(c)}(w^{t}) -\left[\frac{\eta\beta(2Q{-}1)}{2} - 2\eta^2L_c\beta^2Q^2\right]\bbE\left\lVert\nabla F^{(c)}(w^{t})\right\rVert^2\label{eq:persafl-me-main-1-1}\\
    &+ \frac{\eta\beta Q L_c^2\left(1{+}2\eta\beta L_c\right)}{n} \underbrace{\sum\limits_{i=1}^n\sum\limits_{q{=}0}^{Q{-}1}\bbE\left\lVert w_{i,q}^{\Omega(t)}-w^{t}\right\rVert^2}_{=: S_{c_1}}\label{eq:persafl-me-main-1-2}\\
    &+ \eta\beta Q^2\mu_c^2 + 2\eta^2L_c\beta^2Q^2\left(\sigma_c^2 +\gamma_c^2\right),\label{eq:persafl-me-main-1-3}
\end{align}
with $L_c, \mu_c, \sigma_c, \gamma_c$ as defined in Lemmas \ref{lem:smoothness-me}, \ref{lem:bounded-variance-me}, and \ref{lem:bounded-heterogeneity-me}. Thus, to show the convergence rate of our method for the cost function in \eqref{eq:persafl-me}, it would only be sufficient to provide an upper bound on $S_{c_1}$. First, note that similar to \eqref{eq:afl-s5}-\eqref{eq:afl-s5-end}, we have
\begin{align}\label{eq:persafl-me-s1}
\left\lVert w_{i,q}^{\Omega(t)}-w^{t}\right\rVert^2 & \leq\tau\left(1{+}\frac{1}{\beta^2}\right)\left[\sum\limits_{s{=}t{-}\tau}^{t{-}1}\underbrace{\left\lVert w^{s{+}1}-w^{s}\right\rVert^2}_{=: S_{c_3}}\right]+\left(1{+}\beta^2\right)\underbrace{\left\lVert w^{\Omega(t)}-w_{i,q}^{\Omega(t)}\right\rVert^2}_{=: S_{c_2}}.
\end{align}
Now, if we introduce stepsize $\eta$ such that $\eta\leq \frac{1}{4L_c(Q{+}1)}$,
similar to \eqref{eq:afl-s6}-\eqref{eq:afl-s6-end} and \eqref{eq:afl-s7}-\eqref{eq:afl-s7-end}, the following two inequalities holds for $S_{c_2}$ and $S_{c_3}$:
\begin{align}\label{eq:persafl-me-s2}
\bbE[&S_{c_2}] = \bbE\left\lVert w_{i,q}^{\Omega(t)} - w^{\Omega(t)}\right\rVert^2\\
&\leq 8Q(1{+}2Q)\eta^2\Bigg[\sigma_c^2 + \bbE\Big\lVert\nabla F^{(c)}_i\left(w^{\Omega(t)}\right) - \nabla F^{(c)}\left(w^{\Omega(t)}\right) \Big\rVert^2 + \bbE\Big\lVert\nabla F^{(c)}\left(w^{\Omega(t)}\right)\Big\rVert^2
\Bigg],\nonumber
\end{align}
\begin{align}\label{eq:persafl-me-s3}
\bbE\left[S_{c_3}\right] &= \bbE\left\lVert w^{s{+}1} - w^s\right\rVert^2 \leq 8Q(1{+}2Q)\eta^2\beta^2\Bigg[\sigma_c^2 + \gamma_c^2 + \bbE\Big\lVert\nabla F^{(c)}\left(w^{\Omega(s)}\right)\Big\rVert^2
\Bigg],
\end{align}
where by denoting $\phi=8\eta^2Q^2(1{+}2Q)(1{+}\beta^2)$, we have
\begin{align}\label{eq:persafl-me-s1-2}
\frac{1}{n\phi}\bbE[S_{c_1}] &\leq (\tau^2{+}1)\left[\sigma_c^2 + \gamma_c^2\right] + \tau \sum_{s=t{-}\tau}^{t}\sum_{u=s{-}\tau}^{s} \bbE\Big\lVert\nabla F^{(c)}\left(w^{u}\right)\Big\rVert^2.
\end{align}
Then, according to \eqref{eq:persafl-me-main-1-1}-\eqref{eq:persafl-me-s1-2}, we obtain
\begin{align}
    \bbE F^{(c)}&(w^{t{+}1}) \leq \bbE F^{(c)}(w^{t}) -\left[\frac{\eta\beta(2Q{-}1)}{2} - 2\eta^2L_c\beta^2Q^2\right]\bbE\left\lVert\nabla F^{(c)}(w^{t})\right\rVert^2\label{eq:persafl-me-main-2-1}\\
    &+ 8\eta^3\beta Q^3 L_c^2(1{+}2Q)(1{+}\beta^2)\left(1{+}2\eta\beta L_c\right)\tau \left[\sum\limits_{s{=}t{-}\tau}^{t}\sum\limits_{u{=}s{-}\tau}^{s}\bbE\left\lVert \nabla F^{(c)}(w^u)\right\rVert^2\right]\label{eq:persafl-me-main-2-2}\\
    & + 8\eta^3\beta Q^3 L_c^2 (1{+}2Q)(\tau^2{+}1)(1{+}\beta^2)\left(1{+}2\eta\beta L_c\right) \left(\sigma_c^2+\gamma_c^2\right)\\
    &+ 2\eta^2L_c\beta^2Q^2\left(\sigma_c^2 +\gamma_c^2\right)\\
    &+ \eta\beta Q^2\mu_c^2,\label{eq:persafl-me-main-2-3}
\end{align}
where by averaging the terms in \eqref{eq:persafl-me-main-2-1}-\eqref{eq:persafl-me-main-2-3}, for $t=0,1,\dots T{-}1$, and rearranging them (similar to \eqref{eq:afl-main-5}-\eqref{eq:afl-main-5-end}, we can conclude the following inequality:
\begin{align}\label{eq:persafl-me-main-3}
    &\frac{1 - 4\eta\beta Q L_c - 16 \eta^2 Q^2 L_c^2(1{+}2Q)\tau(\tau{+}1)^2(1{+}\beta^2)\left(1{+}2\eta\beta L_c\right)}{T}\sum\limits_{t=0}^{T{-}1}\bbE\left\lVert\nabla F^{(c)}(w^{t})\right\rVert^2\nonumber\\
    &\leq \frac{2\left(F^{(c)}(w^{0})-\bbE F^{(c)}(w^{T})\right)}{\eta\beta QT} + 2Q\mu_c^2\\
    & + 16\eta^2 Q^2 L_c^2 (1{+}2Q)(\tau^2{+}1)(1{+}\beta^2)\left(1{+}2\eta\beta L_c\right) \left(\sigma_c^2+\gamma_c^2\right)\\
    &+ 4\eta\beta QL_c\left(\sigma_c^2 +\gamma_c^2\right)
\end{align}
Finally, by fixing $\eta = \frac{1}{Q\sqrt{L_c T}}$, for $T\geq 288 L_c (Q{+}7) (\tau{+}1)^2$, we obtain the sublinear convergence rate in Theorem \ref{thm:persafl-me}.
\end{proof}

\section{Experiments Setting}\label{app:exp-setup}
\cu{For all algorithms, we consider $Q=10$ local updates, and select the best \mbox{$\lambda\in\{20,25,30\}$} for \mtext{ME} and \mbox{$\alpha\in\{0.002,0.005,0.01\}$} for \mtext{MAML}. Moreover, we pick \mbox{$\beta\in\{0.8,1.0,1.2\}$} and fix $\eta = 0.01$. For all experiments, we consider the exact same communication setup and repeat each experiment $2$ to $3$ times and plot the test accuracy curve over time until one of the algorithms converges. We consider $n=30$ agents for all experiments. 
Moreover, for both datasets we consider $\ell$-layer CNN networks \cite{krizhevsky2010convolutional} followed by $\ell$-fully connected layers with pooling and dropout as well as cross-entropy loss, where $\ell=2$ for MNIST and $\ell=3$ for CIFAR-10. Also, for MNIST we consider $c=5$ class of samples for each client while for CIFAR we create heterogeneity by considering $c=3$ per client.

It is worth mentioning that in the implementation of algorithms with \mtext{MAML}, we approximated the Hessian-vector products via the following first-order formulation: for some small $\delta>0$,
\begin{align}
    \nabla^2 f_i(w) u \approx \frac{\nabla f_i(w + \delta u)-\nabla f_i(w - \delta u)}{\delta}.
\end{align}
Moreover, in the bi-level optimization problem for the \mtext{ME} formulation, we applied a constant $K=10$ steps of \mtext{SGD} to obtain $\tilde{\theta}_i(w)$.

}


\end{document}